\def\BibTeX{{\rm B\kern-.05em{\sc i\kern-.025em b}\kern-.08em
    T\kern-.1667em\lower.7ex\hbox{E}\kern-.125emX}}
\begin{document}

\title{Learning Conformal Abstention Policies for Adaptive Risk Management in Large Language and Vision-Language Models\\
}
\author{Sina Tayebati$^1$, Divake Kumar$^1$, Nastaran Darabi$^1$, Dinithi Jayasuriya$^1$, Ranganath Krishnan$^2$, Amit Ranjan Trivedi$^1$ \vspace{2mm} \\
$^1$University of Illinois at Chicago, $^2$Intel Labs}

\maketitle

\begin{abstract}
Large Language and Vision-Language Models (LLMs/VLMs) are increasingly used in safety-critical applications, yet their opaque decision-making complicates risk assessment and reliability. Uncertainty quantification (UQ) helps assess prediction confidence and enables abstention when uncertainty is high. Conformal prediction (CP), a leading UQ method, provides statistical guarantees but relies on static thresholds, which fail to adapt to task complexity and evolving data distributions, leading to suboptimal trade-offs in accuracy, coverage, and informativeness. To address this, we propose \textit{learnable conformal abstention}, integrating reinforcement learning (RL) with CP to optimize abstention thresholds dynamically. By treating CP thresholds as adaptive actions, our approach balances multiple objectives, minimizing prediction set size while maintaining reliable coverage. Extensive evaluations across diverse LLM/VLM benchmarks show our method outperforms Least Ambiguous Classifiers (LAC) and Adaptive Prediction Sets (APS), improving accuracy by up to 3.2\%, boosting AUROC for hallucination detection by 22.19\%, enhancing uncertainty-guided selective generation (AUARC) by 21.17\%, and reducing calibration error by 70\%–85\%. These improvements hold across multiple models and datasets while consistently meeting the 90\% coverage target, establishing our approach as a more effective and flexible solution for reliable decision-making in safety-critical applications. The code is available at \textcolor{magenta}{\emph{\url{https://github.com/sinatayebati/vlm-uncertainty}}}.
\end{abstract}

\begin{IEEEkeywords}
Large Language Models, Conformal Abstention, Uncertainty Estimation, Policy Search
\end{IEEEkeywords}

\section{Introduction}
Large Language and Vision-Language Models (LLMs/VLMs) are rapidly becoming indispensable in safety-critical applications, from autonomous systems \cite{zhou2024vision} to healthcare diagnostics \cite{yildirim2024multimodal}. Their ability to process and interpret information across visual and textual modalities presents unprecedented opportunities for complex decision-making. However, their internal workings remain opaque, making it challenging to identify biases, vulnerabilities, and unintended consequences, which hinders effective risk assessment and mitigation. Traditional risk management frameworks, designed for static systems with well-defined rules, struggle to keep pace with the evolving capabilities and emergent behaviors of these models \cite{abdali2024securing}. As decision support systems increasingly rely on LLM/VLM, equipping them with robust mechanisms to identify and manage their prediction risks has become crucial.

Uncertainty quantification (UQ) of LLM/VLM has therefore gained significant attention for assessing prediction reliability and enabling abstention--allowing models to defer decisions when uncertainty is high. However, state-of-the-art UQ methods like conformal prediction (CP), while providing statistical guarantees, rely on static thresholds that fail to adapt to varying task complexities or evolving data distributions. Abstention strategies built on these methods therefore remain inflexible, treating abstention as a binary choice--predict or abstain \cite{yadkori2024mitigating}--without adapting to context. Consequently, state-of-the-art methods such as Least Ambiguous Classifiers (LAC) \cite{sadinle2019least} tend to produce overly narrow prediction sets, sacrificing coverage, while Adaptive Prediction Sets (APS) \cite{romano2020classification} generate excessively large sets. 

To address these limitations, we propose a framework for \textit{learnable conformal abstention}, where models dynamically adjust abstention decisions based on task complexity and evolving data distributions. By integrating reinforcement learning with conformal prediction, our approach enables adaptive thresholding that surpasses static methods in accuracy, coverage, and reliability. Extensive evaluations across diverse benchmarks demonstrate its effectiveness in improving risk management, selective abstention, and overall decision-making in safety-critical LLM/VLM applications. In particular, our learned policy boosts hallucination detection by up to 22\%, improves uncertainty-guided selective generation by more than 20\% in certain scenarios, and reduces expected calibration error by 70\%-85\% compared to standard conformal baselines. Notably, it also sustains at least 90\% coverage while reducing the average prediction set size.

\section{Background}\label{sec:related_work}

\textbf{Uncertainty Quantification (UQ) of Prediction Models:} Several approaches have been explored to capture and manage uncertainty in machine learning models. Conformal prediction \cite{vovk2005algorithmic, balasubramanian2014conformal, angelopoulos2021gentle}  provides a distribution-free, model-agnostic framework for generating prediction sets with statistical guarantees, with advances such as inductive, split, and cross-conformal prediction addressing different calibration strategies. Extensions include handling distribution shifts, sequential data, and active learning. Evidential learning captures uncertainty by modeling distributions over parameters or predictions, with applications in deep evidential regression, image classification, and reinforcement learning, and connections to belief function theory. Bayesian deep learning offers alternatives like Monte Carlo dropout \cite{gal2016dropout}, variational inference \cite{blundell2015weight}, deep ensembles \cite{lakshminarayanan2017simple}, and trainable calibration \cite{krishnan2020improving}, with recent work focusing on scalable inference for large models and flexible posterior distributions using normalizing flows. Additionally, calibration techniques such as test-time augmentation, temperature scaling, and Platt scaling further refine confidence estimates \cite{gawlikowski2023survey, kumar2019verified}. However, these methods primarily focus on UQ rather than actionable risk mitigation.

\textbf{Conformal Prediction (CP):} We characterize the proposed framework within CP, a distribution-free, model-agnostic UQ approach \cite{vovk2005algorithmic, balasubramanian2014conformal, angelopoulos2021gentle}. CP transforms model uncertainty estimates into statistically rigorous measures, generating a prediction set that includes the true label with a predefined error probability. The set’s size reflects model uncertainty, with larger sets indicating higher uncertainty. Formally, given a classification model \(f\) mapping input \(X\) to one of \(K\) classes \(Y = \{1, \ldots, K\}\), CP constructs a prediction set \(C(X_t) \subseteq Y\) for a test instance \(X_t\) satisfying:  
\[
P(Y_t \in C(X_t)) \geq 1 - \alpha,
\]
where \(\alpha \in (0,1)\) is the target error rate. Coverage is determined using a calibration dataset \(D_{\mathrm{cal}} = \{(X_c^{(i)}, Y_c^{(i)})\}_{i=1}^n\). A conformal score function \(s(X, Y) \in \mathbb{R}\) is defined, where higher values indicate greater uncertainty. Calibration scores are computed as \(s_1, \ldots, s_n\), and the threshold \(\hat{q}\) is set using the \((1-\alpha)\)-quantile:  
\[
\hat{q} = \text{quantile}_{\frac{\lceil(n + 1)(1 - \alpha)\rceil}{n}} \{s_1, \ldots, s_n\}.
\]
The final prediction set for \(X_t\) is:  
\[
C(X_t) = \{Y' \in Y : s(X_t, Y') \leq \hat{q}\}.
\]
A common uncertainty heuristic is the softmax score, which estimates class probabilities but often misaligns with true uncertainty. Two CP-based scoring functions have been proposed to address this: \textit{(i) Least Ambiguous Classifiers (LAC)} \cite{ye2024benchmarking, sadinle2019least} use:  
\(
s(X, Y) = 1 - f(X)_Y,
\)
where \(f(X)_Y\) is the softmax score for \(Y\). LAC minimizes prediction set size but may yield overly narrow or broad sets.  
\textit{(ii) Adaptive Prediction Sets (APS)} \cite{ye2024benchmarking, romano2020classification} define:  
\(
s(X, Y) = \sum_{\{Y' \in Y : f(X)_{Y'} \geq f(X)_Y\}} f(X)_{Y'},
\)
summing softmax scores of classes ranked above or equal to \(Y\). APS mitigates LAC’s limitations but often produces larger prediction sets, reducing informativeness.

\section{Learning Conformal Abstention Policies} \label{Conformal_Abstention}
We propose a novel framework for learning an abstention policy that leverages conformal prediction to generate uncertainty-aware prediction sets with statistical guarantees. The proposed framework, conformalized abstention policy (CAP), allows three possible outcomes per query: a single prediction, a set of plausible predictions, or abstention, balancing informativeness and risk based on prediction confidence. We formulate this as a reinforcement learning (RL) problem, optimizing the CP hyperparameters (\(\alpha\) and \(\beta\)) as actions using the REINFORCE \cite{zhang2021sample}.

First, to quantify the uncertainty of LLM/VLM responses, we use a nonconformity measure based on softmax probabilities. Given an input \(\mathbf{x}\), the model produces logits \(\ell = [\ell_1, \ell_2, \ldots, \ell_K]\) for \(K\) classes, which are converted to probabilities \(p_i(\mathbf{x})\) using softmax. Using a calibration dataset \(\mathcal{D}_{\mathrm{cal}} = \{(X_{\mathrm{cal}}^{(i)}, Y_{\mathrm{cal}}^{(i)})\}_{i=1}^n\), where \(y_i\) is the ground-truth label, we compute the \emph{nonconformity score} for each sample as  
\(
\mathrm{score}(\mathbf{x}_i) = 1 - p_{y_i}(\mathbf{x}_i).
\)
This score quantifies nonconformity, with higher values indicating greater uncertainty. Traditional conformal prediction defines a single threshold as the \((1-\alpha)\)-quantile of the calibration scores:  
\[
\hat{q} = \mathrm{Quantile}_{1-\alpha}\bigl(\{ s_i \}_{i=1}^n\bigr).
\]
We extend this by introducing two thresholds, \(\hat{q}_{\mathrm{predict}}\) and \(\hat{q}_{\mathrm{abstain}}\), computed as:  
\[
\hat{q}_{\mathrm{predict}} = \text{quantile} \left( \{s_1, \ldots, s_n\}, \frac{\left\lceil (n + 1)(1 - \alpha) \right\rceil}{n} \right),
\]
\[
\hat{q}_{\mathrm{abstain}} = \text{quantile} \left( \{s_1, \ldots, s_n\}, \frac{\left\lceil (n + 1)(1 - \beta) \right\rceil}{n} \right).
\]

These thresholds partition the nonconformity score of each test sample into three regimes:
\begin{enumerate}[noitemsep, topsep=0pt, partopsep=0pt]
    \item \(\mathrm{score}(\mathbf{x}) < \hat{q}_{\mathrm{predict}} \;\;\Rightarrow\;\) \emph{Single best prediction}.
    \item \(\hat{q}_{\mathrm{predict}} \;\le\; \mathrm{score}(\mathbf{x}) \;<\; \hat{q}_{\mathrm{abstain}} \;\;\Rightarrow\;\) \emph{Set prediction}.
    \item \(\mathrm{score}(\mathbf{x}) \;\ge\; \hat{q}_{\mathrm{abstain}} \;\;\Rightarrow\;\) \emph{Abstain}.
\end{enumerate}

\textbf{Action Probabilities and Stochastic Decisions:}  
We extend the deterministic three-regime decision with a \emph{stochastic policy} that maps the nonconformity score to action probabilities. Let:  
\(
    s(\mathbf{x}) = \mathrm{score}(\mathbf{x}) = 1 - \max_i\, p_i(\mathbf{x}).
\)
Using thresholds \(\hat{q}_{\mathrm{predict}}\) and \(\hat{q}_{\mathrm{abstain}}\), the action probabilities are defined as:  
\begin{align*}
    p_{\mathrm{single}}\bigl(s(\mathbf{x})\bigr) 
    &= \sigma\bigl(\,-c\,\bigl[s(\mathbf{x}) - \hat{q}_{\mathrm{predict}}\bigr]\bigr), 
    \label{eq:psingle}\\
    p_{\mathrm{abstain}}\bigl(s(\mathbf{x})\bigr) 
    &= \sigma\bigl(\,c\,\bigl[s(\mathbf{x}) - \hat{q}_{\mathrm{abstain}}\bigr]\bigr), 
\end{align*}
where \(\sigma(z) = 1/(1 + e^{-z})\) is the sigmoid function, and \(c > 0\) is a scaling constant. The probability of a set prediction is:  
\(
    p_{\mathrm{set}}\bigl(s(\mathbf{x})\bigr) = 1 - p_{\mathrm{single}}\bigl(s(\mathbf{x})\bigr) - p_{\mathrm{abstain}}\bigl(s(\mathbf{x})\bigr).
\)
For each test point, we stochastically select from \(\{\text{single}, \text{set}, \text{abstain}\}\) based on \(\{p_{\mathrm{single}}, p_{\mathrm{set}}, p_{\mathrm{abstain}}\}\), capturing model uncertainty.

\textbf{Reinforcement Learning and Abstention Policy:}  \label{rl_abstention}
To dynamically adjust the confidence levels \((\alpha, \beta)\) for optimal performance, we employ a policy-based reinforcement learning approach using the REINFORCE algorithm. The policy network \(\pi_{\theta}(\alpha, \beta)\) learns a distribution over these parameters. We treat \(\alpha\) and \(\beta\) as \emph{actions} sampled from a multivariate Gaussian distribution defined by the policy network. Let:  
\(
    (\boldsymbol{\mu}_{\theta}, \boldsymbol{\sigma}_{\theta}) 
    \;=\; f_{\theta}(\mathbf{s}),
\)
where \(\mathbf{s}\) is the current \emph{state}, and \(f_{\theta}\) is a neural network mapping \(\mathbf{s}\) to the mean and standard deviation vectors \((\boldsymbol{\mu}_{\theta}, \boldsymbol{\sigma}_{\theta})\). Specifically, 
\[
    \alpha \sim \mathcal{N}\bigl(\mu_{\theta}^{(\alpha)}, \sigma_{\theta}^{(\alpha)\,2}\bigr), 
    \quad
    \beta \sim \mathcal{N}\bigl(\mu_{\theta}^{(\beta)}, \sigma_{\theta}^{(\beta)\,2}\bigr).
\]
At each iteration, the process involves: (1) sampling \(\alpha, \beta\) from the learned distribution, (2) computing the thresholds \(\hat{q}_{\mathrm{predict}}, \hat{q}_{\mathrm{abstain}}\) on a calibration set, (3) evaluating performance on a test set, and (4) using the performance-based \emph{cost} as a \emph{reward} signal to update \(\pi_{\theta}\) via REINFORCE.

\textbf{Cost Function and Reward Design:}  
We define a scalar cost function \( C(\alpha, \beta) \) to balance multiple objectives: maximizing accuracy while ensuring well-calibrated uncertainty estimates, avoiding unnecessary set predictions or abstentions, and maintaining coverage guarantees. Let \( \mathrm{acc} \) be the fraction of correct predictions, \( \mathrm{abstention} \) the fraction of abstained samples, and \( \mathrm{avgSet} \) the average prediction set size. Additionally, \( \mathrm{coverage} = 1 - \mathrm{abstention} \), and \( \mathrm{div} \) is an entropy-based term quantifying the balance among single predictions, set predictions, and abstentions. The cost function is defined as:  
\begin{align*}
C(\alpha, \beta) &= (1 - \mathrm{acc}) + \lambda_{1}\,\mathrm{avgSet} + \lambda_{2}\,\mathrm{abstention} \\
&\quad - \lambda_{3}\,\mathrm{coverage} - \lambda_{4}\,\mathrm{div}.
\end{align*}
where \( \lambda_{1}, \ldots, \lambda_{4} \) are hyperparameters controlling trade-offs between these objectives. The corresponding \emph{reward} is simply the negative cost:  
    $R(\alpha, \beta) = -C(\alpha, \beta)$.

\textbf{REINFORCE Update:} We employ a Monte Carlo policy gradient method to update the policy parameters by maximizing the expected reward \(J(\theta) = \mathbb{E}_{\tau \sim \pi_{\theta}} [R(\tau)]\), where \(\tau\) is a trajectory of states and actions, and \(R(\tau)\) is the corresponding reward. Each \emph{episode} corresponds to evaluating the doubly conformalized prediction with a specific set of actions on the test set. For a batch of sampled actions \(\{\alpha_t, \beta_t\}\), the gradient of the expected reward is:
\begin{equation*}
    \nabla_{\theta} J(\theta)
    \;=\; 
    \mathbb{E}_{(\alpha,\beta)\sim \pi_{\theta}} 
    \bigl[
        \nabla_{\theta} \log \pi_{\theta}(\alpha,\beta) \cdot R(\alpha,\beta)
    \bigr].
\end{equation*}
This expectation is approximated by sampling trajectories (or averaging over a minibatch of \(\alpha, \beta\)) and updating the policy parameters as:
\begin{equation*}
\theta_{t+1} = \theta_t + \eta \cdot R_t \nabla_{\theta} \log \pi_{\theta}(a_t|s_t),
\end{equation*}
where \(\eta\) is the learning rate, \(R_t\) is the reward after action \(a_t\) in state \(s_t\), and \(\log \pi_{\theta}(a_t|s_t)\) is the log-probability of taking \(a_t\) under the policy. The learned \(\hat{\alpha}\) and \(\hat{\beta}\) minimize the cost with respect to coverage, set size, accuracy, and abstentions, coupling conformal prediction thresholds to an RL objective and enabling principled trade-offs between predictive certainty and abstention for LLM outputs. \cref{alg:conformal_rl} in Appendix \ref{rl_algorithm} summarizes the training of our proposed adaptive conformal method and abstention policy. 

\section{Experiments and Results}
We conducted a thorough empirical evaluation to benchmark our proposed CAP framework against the comparative least ambiguous set-valued classifier (LAC) and adaptive prediction sets (APS). The experiments focus on multiple-choice question answering (MCQA) tasks, assessing six key metrics: confidence ranking for hallucination detection, uncertainty-guided selective generation, coverage, set size, calibration, and accuracy. This evaluation systematically measures the effectiveness of the proposed abstention policy and the reliability of uncertainty estimates.

\subsection{Experimental Settings}

\textbf{Datasets:}~~We used a diverse collection of ten benchmark LLM/VLM datasets. These datasets are designed for multiple-choice question-answering (MCQA) across various reasoning tasks and uncertainty scenarios. For VLMs, we employ five datasets: (i) \textbf{MMBench} \cite{liu2025mmbench}, a multi-modal benchmark with 4,000 questions spanning perception and reasoning tasks, standardized to four options; (ii) \textbf{OODCV-VQA} \cite{zhao2022ood}, focusing on out-of-distribution instance counting via its ``Digits" subset, expanded to four options; (iii) \textbf{ScienceQA} \cite{lu2022learn}, containing 3,952 image-based questions across natural and social sciences; (iv) \textbf{SEEDBench} \cite{li2023seed}, evaluating visual understanding (e.g., object localization) with 14,233 questions; and (v) \textbf{AI2D} \cite{kembhavi2016diagram}, featuring 15,000 diagram-based science questions extended to six options. All datasets are reformatted to multiple-choice questions (MCQA) with four or six options to assess uncertainty handling.  

For LLMs, we evaluate on five tasks: (i) \textbf{MMLU} \cite{hendrycks2020measuring}, a question-answering benchmark spanning 57 academic subjects; (ii) \textbf{CosmosQA} \cite{huang2019cosmos}, focusing on reading comprehension requiring contextual inference; (iii) \textbf{HellaSwag} \cite{zellers2019hellaswag}, assessing commonsense inference for event followup prediction; (iv) \textbf{HaluDial} \cite{li2023halueval}, evaluating dialogue response selection from knowledge-grounded conversations; and (v) \textbf{HaluSum} \cite{li2023halueval}, testing document summarization on news articles. Each dataset is standardized to six options (including ``I don’t know" and ``None of the above") to align with uncertainty-aware evaluation protocols. This selection ensures diverse assessment of LLM capabilities in knowledge recall, reasoning, and abstention under ambiguity.

\textbf{Models:}~~We evaluated on a diverse set of LLM/VLM models with parameter scales ranging from 2.7B to 34B. For VLMs, the main body of the paper includes results for the LLaVA-v1.6 series (34B, 13B, and 7B parameters) \cite{liu2023improved}. Additional state-of-the-art VLMs—such as the lightweight MoE-LLaVA-Phi2 2.7B \cite{lin2024moe}, Monkey-Chat 7B \cite{li2024monkey}, InternLM-XComposer2-VL 7B \cite{dong2024internlm}, Yi-VL 6B \cite{young2024yi}, CogAgent-VQA 7B \cite{hong2024cogagent}, MobileVLMV2 \autoref{ap_exp}.  

For LLMs, the main body presents results for the Yi 34B model \cite{young2024yi} and the Qwen series (7B and 14B parameters) \cite{bai2023qwen}. Results for the Llama-2 foundation model series (7B and 13B parameters) are included in \autoref{ap_exp}.

\begin{table*}[h!]
\centering
\scriptsize
\setlength{\tabcolsep}{5.5pt}
\caption{Comparison of CAP (Ours) with Least Ambiguous Classifiers (LAC) \cite{sadinle2019least} and Adaptive Prediction Sets (APS) \cite{romano2020classification}. Models include VLMs and LLMs, assessed across datasets using AUROC (Hallucination Detection) and AUARC (Uncertainty-Guided Selective Generation). Best values are in \textbf{bold}.}
\label{tab:auroc_auarc}
\begin{tabular}{lccccccccccccc}
\toprule
\textbf{Models} & \textbf{Method} & \multicolumn{6}{c}{\textbf{AUROC} ↑ (Hallucination Detection)} & \multicolumn{6}{c}{\textbf{AUARC} ↑ (Uncertainty guided selective generation)} \\
\cmidrule(lr){3-8} \cmidrule(l){9-14}
\textbf{VLMs} & & \textbf{MMB} & \textbf{OOD} & \textbf{SQA} & \textbf{SB} & \textbf{AI2D} & \textbf{Avg.} & \textbf{MMB} & \textbf{OOD} & \textbf{SQA} & \textbf{SB} & \textbf{AI2D} & \textbf{Avg.} \\
\midrule
LLaVA-v1.6-34B
& APS & 0.7173 & 0.6962 & 0.7244 & 0.5566 & 0.8404 & 0.7070 & 0.9583 & 0.9138 & 0.9275 & 0.9155 & 0.9283 & 0.9287 \\
& LAC & 0.7837 & 0.7003 & 0.8000 & 0.5626 & 0.8476 & 0.7388 & 0.9412 & 0.9021 & 0.9099 & 0.8830 & 0.9192 & 0.9111 \\
& Ours & \textbf{0.8041} & \textbf{0.7849} & \textbf{0.8606} & \textbf{0.6512} & \textbf{0.8989} & \textbf{0.8000} & \textbf{0.9791} & \textbf{0.9717} & \textbf{0.9813} & \textbf{0.9441} & \textbf{0.9913} & \textbf{0.9735} \\
\cmidrule(lr){1-14}
LLaVA-v1.6-13B
& APS & 0.4930 & 0.5901 & 0.5281 & 0.4854 & 0.7775 & 0.5748 & 0.9566 & 0.8759 & 0.9444 & 0.9307 & 0.9142 & 0.9244 \\
& LAC & 0.6835 & 0.5919 & 0.5990 & 0.5038 & 0.7475 & 0.6251 & 0.9258 & 0.8512 & 0.8945 & 0.8791 & 0.8956 & 0.8892 \\
& Ours & \textbf{0.6382} & \textbf{0.7070} & \textbf{0.6663} & \textbf{0.6103} & \textbf{0.8083} & \textbf{0.6860} & \textbf{0.9761} & \textbf{0.9592} & \textbf{0.9565} & \textbf{0.9343} & \textbf{0.9838} & \textbf{0.9620} \\
\cmidrule(lr){1-14}
LLaVA-v1.6-7B
& APS & 0.6961 & 0.3424 & 0.6093 & 0.5699 & 0.8247 & 0.6085 & 0.9575 & 0.8712 & 0.9147 & 0.9239 & 0.8952 & 0.9125 \\
& LAC & 0.6849 & 0.4836 & 0.5555 & 0.4988 & 0.6930 & 0.5832 & 0.9212 & 0.8125 & 0.8671 & 0.8730 & 0.8691 & 0.8686 \\
& Ours & \textbf{0.7049} & \textbf{0.5643} & \textbf{0.6165} & \textbf{0.5919} & \textbf{0.7626} & \textbf{0.6480} & \textbf{0.9662} & \textbf{0.9253} & \textbf{0.9399} & \textbf{0.9353} & \textbf{0.9725} & \textbf{0.9478} \\
\midrule
\midrule
\textbf{LLMs} & & \textbf{HSwg} & \textbf{HDial} & \textbf{CQA} & \textbf{HSum} & \textbf{MMLU} & & \textbf{HSwg} & \textbf{HDial} & \textbf{CQA} & \textbf{HSum} & \textbf{MMLU} \\
\midrule
Yi-34B
& APS & 0.9109 & 0.5089 & 0.8370 & 0.5643 & 0.5883 & 0.6819 & 0.9735 & 0.7334 & 0.9373 & 0.7864 & 0.8806 & 0.8622 \\
& LAC & 0.9487 & 0.5650 & 0.9287 & 0.4181 & 0.6832 & 0.7087 & 0.9700 & 0.7140 & 0.9336 & 0.7529 & 0.8590 & 0.8459 \\
& Ours & \textbf{0.9726} & \textbf{0.7011} & \textbf{0.9649} & \textbf{0.6209} & \textbf{0.7425} & \textbf{0.8004} & \textbf{0.9973} & \textbf{0.9554} & \textbf{0.9963} & \textbf{0.9343} & \textbf{0.9669} & \textbf{0.9700} \\
\cmidrule(lr){1-14}
Qwen-14B
& APS & 0.8442 & 0.5296 & 0.7852 & 0.2611 & 0.6426 & 0.6125 & 0.9828 & 0.8326 & 0.9732 & 0.6266 & 0.8554 & 0.8541 \\
& LAC & 0.9182 & 0.4799 & 0.9132 & 0.1269 & 0.5445 & 0.5965 & 0.9748 & 0.8015 & 0.9657 & 0.5737 & 0.8216 & 0.8275 \\
& Ours & \textbf{0.9397} & \textbf{0.6175} & \textbf{0.9286} & \textbf{0.3510} & \textbf{0.6450} & \textbf{0.6964} & \textbf{0.9924} & \textbf{0.9323} & \textbf{0.9923} & \textbf{0.7146} & \textbf{0.9494} & \textbf{0.9162} \\
\cmidrule(lr){1-14}
Qwen-7B
& APS & 0.5638 & 0.3437 & 0.6107 & 0.2612 & 0.4829 & 0.4525 & 0.6853 & 0.7645 & 0.9000 & 0.5113 & 0.7459 & 0.7214 \\
& LAC & 0.4646 & 0.3542 & 0.7777 & 0.1654 & 0.4643 & 0.4452 & 0.6603 & 0.7275 & 0.8825 & 0.4754 & 0.7133 & 0.6918 \\
& Ours & \textbf{0.6380} & \textbf{0.4958} & \textbf{0.8037} & \textbf{0.4429} & \textbf{0.6053} & \textbf{0.5971} & \textbf{0.9325} & \textbf{0.9213} & \textbf{0.9831} & \textbf{0.6817} & \textbf{0.9450} & \textbf{0.8927} \\
\bottomrule
\end{tabular}
\end{table*}

\textbf{Evaluation Metrics:} ~~CAP is evaluated using the following metrics that assess both prediction quality and UQ, capturing its ability to produce single predictions, set predictions, or abstentions. The same metrics are applied to baseline conformal methods, including APS and LAC, following \cite{kostumov2024uncertainty, ye2024benchmarking}:

\textit{Accuracy:} For a test input \(X_t\) with true label \(Y_t\), let \(C(X_t)\) denote the generated prediction set. If a single prediction \(\hat{Y}_t\) is produced (e.g., in confident scenarios under ATCP), accuracy is binary: 1 if \(\hat{Y}_t = Y_t\), and 0 otherwise. For set predictions, accuracy is computed fractionally, inversely proportional to the size of \(C(X_t)\) when \(Y_t \in C(X_t)\).  

\textit{Coverage:} Coverage measures the fraction of instances where the correct label is included in the model’s output—either as a single prediction or within a prediction set. In setups with abstention, it also accounts for instances where the model successfully avoids making an incorrect explicit guess. This metric ensures the ground truth is not excluded from predictions. A key aspect of conformal prediction is meeting a predefined coverage guarantee, set at 90\% in our experiments.  

\textit{Set Sizes (SS):} Set Sizes measure the average number of labels in prediction sets, excluding single predictions and abstentions. This metric reflects model uncertainty, with larger sets indicating higher uncertainty and smaller sets implying greater confidence.  

\begin{table}[t!]
\centering
\scriptsize
\setlength{\tabcolsep}{3pt}
\caption{Coverage (\%) evaluation: Comparison of CAP (Ours) with LAC \cite{sadinle2019least} and APS \cite{romano2020classification}. CAP meets the 90\% coverage guarantee, \underline{underlined}, in instances.}
\label{tab:coverage}
\begin{tabular}{lccccccc}
\toprule
\multirow{2}{*}{\textbf{Model}} & \multirow{2}{*}{\textbf{Method}} & \multicolumn{5}{c}{\textbf{Coverage (\%)} ↑} \\
\cmidrule(lr){3-8}
\textbf{VLMs} & & \textbf{MMB} & \textbf{OOD} & \textbf{SQA} & \textbf{SB} & \textbf{AI2D} & \textbf{Avg.} \\
\midrule
\multirow{3}{*}{LLaVA-v1.6-34B}
& APS  & 98.26 & 94.87 & 98.08 & 95.81 & 97.48 & 96.90 \\
& LAC  & 90.73 & 91.42 & 88.67 & 90.23 & 90.21 & 90.25 \\
& Ours & \underline{93.97} & \underline{93.25} & \underline{93.07} & \underline{91.41} & \underline{95.46} & \underline{93.43} \\
\cmidrule(lr){1-8}
\multirow{3}{*}{LLaVA-v1.6-13B}
& APS  & 98.99 & 96.20 & 99.29 & 97.36 & 98.86 & 98.14 \\
& LAC  & 90.18 & 91.00 & 89.28 & 89.84 & 90.47 & 90.15 \\
& Ours & \underline{95.57} & \underline{92.48} & \underline{92.06} & \underline{90.67} & \underline{95.14} & \underline{93.18} \\
\cmidrule(lr){1-8}
\multirow{3}{*}{LLaVA-v1.6-7B}
& APS  & 98.45 & 97.89 & 97.88 & 96.74 & 96.19 & 97.43 \\
& LAC  & 89.26 & 89.10 & 89.83 & 90.19 & 89.65 & 89.61 \\
& Ours & \underline{92.96} & \underline{91.63} & \underline{90.49} & \underline{91.23} & \underline{93.41} & \underline{91.94} \\
\midrule
\midrule
\textbf{LLMs} & & \textbf{HSwg} & \textbf{HDial} & \textbf{CQA} & \textbf{HSum} & \textbf{MMLU} & \textbf{Avg.} \\
\midrule
\multirow{3}{*}{Qwen-7B}
& APS  & 92.12 & 95.24 & 98.92 & 90.18 & 96.24 & 94.54 \\
& LAC  & 89.64 & 90.90 & 90.44 & 90.12 & 90.66 & 90.35 \\
& Ours & \underline{91.96} & \underline{91.70} & \underline{95.68} & \underline{90.17} & \underline{91.32} & \underline{92.16} \\
\cmidrule(lr){1-8}
\multirow{3}{*}{Qwen-14B}
& APS  & 99.82 & 94.22 & 99.46 & 90.56 & 95.72 & 95.96 \\
& LAC  & 91.98 & 90.42 & 92.10 & 89.70 & 90.46 & 90.93 \\
& Ours & \underline{94.88} & \underline{90.96} & \underline{95.66} & \underline{90.32} & \underline{91.62} & \underline{92.68} \\
\cmidrule(lr){1-8}
\multirow{3}{*}{Yi-34B}
& APS  & 99.88 & 95.24 & 99.68 & 92.08 & 97.30 & 96.84 \\
& LAC  & 93.90 & 90.02 & 94.40 & 89.32 & 89.78 & 91.49 \\
& Ours & \underline{96.48} & \underline{92.56} & \underline{96.40} & \underline{90.82} & \underline{93.34} & \underline{93.92} \\
\bottomrule
\end{tabular}
\vspace{-20pt}
\end{table}

\begin{table*}[h!]
\centering
\scriptsize
\setlength{\tabcolsep}{5.5pt}
\caption{Evaluation of accuracy (\%) and set sizes:  Comparative analysis of CAP (Ours) with standard Least Ambiguous set-valued Classifiers (LAC) \cite{sadinle2019least}, and Adaptive Prediction Sets (APS) \cite{romano2020classification} methods. The table highlights that our proposed method achieves the highest average accuracy across datasets, while maintaining a balance in set sizes that avoids overly narrow or broad predictions observed in the baseline methods. Highest accuracy values are in \textbf{bold} and balanced set size values are \underline{underlined}.}
\label{tab:accuracy_vs_ss}
\begin{tabular}{lccccccccccccc}
\toprule
\textbf{Models} & \textbf{Method} & \multicolumn{5}{c}{\textbf{Accuracy (\%)} ↑} & & \multicolumn{5}{c}{\textbf{SS} ↓} \\
\cmidrule(lr){3-8} \cmidrule(l){9-14}
\textbf{VLMs} & & \textbf{MMB} & \textbf{OOD} & \textbf{SQA} & \textbf{SB} & \textbf{AI2D} & \textbf{Avg.} & \textbf{MMB} & \textbf{OOD} & \textbf{SQA} & \textbf{SB} & \textbf{AI2D} & \textbf{Avg.} \\
\midrule
LLaVA-v1.6-34B
& APS & 87.73 & 87.42 & 84.38 & \textbf{81.72} & 83.22 & 84.89 & 2.6501 & 1.6744 & 2.7269 & 2.6556 & 2.6386 & 2.4691 \\
& LAC & 86.75 & 86.47 & 83.53 & 81.39 & 82.55 & 84.14 & 1.2499 & 1.3101 & 1.2883 & 1.5854 & 1.4683 & 1.3804 \\
& Ours & \textbf{88.57} & \textbf{88.19} & \textbf{86.46} & 81.64 & \textbf{88.36} & \textbf{86.64} & \underline{1.6519} & \underline{1.6210} & \underline{1.8447} & \underline{1.9937} & \underline{2.1755} & \underline{1.8574} \\
\cmidrule(lr){1-14}
LLaVA-v1.6-13B
& APS & 82.29 & 80.02 & 78.08 & \textbf{77.83} & 80.39 & 79.72 & 3.1275 & 2.6857 & 3.2180 & 3.1280 & 3.0165 & 3.0351 \\
& LAC & 81.75 & 80.47 & 77.91 & 77.37 & 79.95 & 79.49 & 1.5573 & 1.6842 & 1.6884 & 1.8606 & 1.6505 & 1.6882 \\
& Ours & \textbf{82.66} & \textbf{80.79} & \textbf{79.08} & 77.50 & \textbf{84.87} & \textbf{81.38} & \underline{2.6249} & \underline{2.2271} & \underline{2.1796} & \underline{2.2776} & \underline{2.3135} & \underline{2.3245} \\
\cmidrule(lr){1-14}
LLaVA-v1.6-7B
& APS & 81.36 & 81.14 & 74.98 & \textbf{76.96} & 77.44 & 78.38 & 3.1540 & 2.9613 & 3.0303 & 3.1102 & 2.9752 & 3.0462 \\
& LAC & 80.60 & 79.89 & 75.03 & 76.65 & 77.03 & 77.84 & 1.5811 & 1.7250 & 1.8690 & 1.9445 & 1.7617 & 1.7763 \\
& Ours & \textbf{82.19} & \textbf{81.20} & \textbf{75.34} & 76.34 & \textbf{81.83} & \textbf{79.38} & \underline{1.9890} & \underline{2.2982} & \underline{2.1912} & \underline{2.3464} & \underline{2.3663} & \underline{2.2382} \\
\midrule
\midrule
\textbf{LLMs} & & \textbf{HSwg} & \textbf{HDial} & \textbf{CQA} & \textbf{HSum} & \textbf{MMLU} & & \textbf{HSwg} & \textbf{HDial} & \textbf{CQA} & \textbf{HSum} & \textbf{MMLU} \\
\midrule
Yi-34B
& APS & 95.21 & 83.99 & 95.74 & 81.20 & 80.64 & 87.76 & 3.0254 & 2.0548 & 2.5868 & 1.8630 & 2.8206 & 2.4701 \\
& LAC & 93.90 & 83.17 & 94.40 & 80.98 & 80.44 & 86.98 & 1.0000 & 1.3992 & 1.0000 & 1.3934 & 1.5886 & 1.2762 \\
& Ours & \textbf{96.17} & \textbf{85.56} & \textbf{96.12} & \textbf{83.09} & \textbf{82.90} & \textbf{88.77} & \underline{1.4790} & \underline{2.0714} & \underline{1.5664} & \underline{1.8540} & \underline{2.1220} & \underline{1.8186} \\
\cmidrule(lr){1-14}
Qwen-14B
& APS & 93.75 & 81.91 & 93.95 & 62.86 & 74.43 & \textbf{81.38} & 3.0120 & 2.4050 & 2.7242 & 2.6036 & 2.9640 & 2.7418 \\
& LAC & 91.98 & 82.42 & 92.06 & \textbf{64.22} & 74.26 & 80.59 & 1.0000 & 1.4634 & 1.0008 & 2.3154 & 2.1026 & 1.5764 \\
& Ours & \textbf{94.02} & \textbf{83.09} & \textbf{94.32} & 57.59 & \textbf{76.13} & 81.03 & \underline{1.3774} & \underline{1.8742} & \underline{1.3270} & \underline{2.3764} & \underline{2.5508} & \underline{1.9012} \\
\cmidrule(lr){1-14}
Qwen-7B
& APS & 72.46 & 74.47 & 88.38 & 52.48 & 67.47 & 71.85 & 2.3844 & 2.9366 & 3.1336 & 3.0076 & 3.5344 & 3.1993 \\
& LAC & 72.12 & 75.63 & 87.65 & \textbf{52.91} & 68.07 & 71.68 & 2.0564 & 2.0014 & 1.1790 & 2.9220 & 2.4890 & 2.1296 \\
& Ours & \textbf{73.79} & \textbf{75.81} & \textbf{90.06} & 47.75 & \textbf{72.25} & \textbf{71.93} & \underline{2.6116} & \underline{2.8832} & \underline{1.9172} & \underline{2.5734} & \underline{3.1820} & \underline{2.6335} \\
\bottomrule
\end{tabular}
\end{table*}

\textit{Area Under the Receiver Operating Characteristic (AUROC):} AUROC \cite{davis2006relationship} curve evaluates the model’s ability to rank predictions by confidence. It measures how effectively the model distinguishes correct from incorrect predictions, with a higher AUROC indicating more reliable confidence-based ranking.  

\textit{Area Under the Accuracy-Rejection Curve (AUARC):} AUARC illustrates the trade-off between accuracy and the retained fraction of predictions after abstaining from uncertain ones. For the proposed framework, AUARC quantifies how well the model's uncertainty estimates align with true prediction difficulty. A higher AUARC indicates better identification and abstention from difficult cases while maintaining high accuracy on confident predictions.  

\textit{Expected Calibration Error (ECE)} \cite{naeini2015obtaining}: ECE quantifies how well the model’s confidence estimates align with empirical correctness rates. Given \(n_{\text{bins}}\) confidence bins, it is defined as:  
\[
ECE = \sum_{b=1}^{n_{\text{bins}}} \frac{\lvert B_b \rvert}{N}
     \,\bigl|\text{acc}(B_b) - \text{conf}(B_b)\bigr|,
\]
where \( B_b \) is the set of samples whose confidence scores fall into bin \( b \), \( \text{acc}(B_b) \) is the mean accuracy, and \( \text{conf}(B_b) \) is the mean predicted confidence in that bin. Lower ECE indicates better calibration, meaning the model’s confidence estimates closely match empirical correctness rates.

\textbf{Prompting Strategies:}~~For VLMs, we adapt the multiple-choice Question Answering (VQA) template from LLaVA \cite{liu2024visual}. Each prompt starts with the attached image, followed by the question text and any relevant hints. Six answer options are then listed line by line, each prefixed with a letter (A-F). The prompt ends with the explicit instruction: \textit{``Answer with the option's letter from the given choices directly."} To ensure compatibility, we use model-specific templates sourced from their official GitHub repositories. Templates for CogAgent and InternLM-XComposer2 are obtained from Hugging Face. Due to the common constraint of single-image input in many VLMs, we exclude few-shot demonstrations. For more details on prompt template refer to Appendix \ref{prompt_template}.

For LLMs, we use the \textit{Base Prompt} strategy, following \cite{zheng2023judging}. This method concatenates the question with all answer options as the input prompt. The LLM is instructed to output the correct option using the prefix \textit{``Answer:"}, ensuring a standardized and straightforward input format for evaluation. For more details on prompt template please refer to Appendix \ref{prompt_template}.

\begin{table}[h!]
\centering
\scriptsize
\setlength{\tabcolsep}{2pt}
\caption{Evaluation of Expected Calibration Error (ECE):  Comparative analysis of the proposed CAP framework (Ours) with standard LAC \cite{sadinle2019least}, and APS \cite{romano2020classification} methods. The results show that the CAP method achieves significantly lower ECE values, in \textbf{bold}, compared to baseline.}
\label{tab:ece}
\begin{tabular}{lccccccc}
\toprule
\multirow{2}{*}{\textbf{Model}} & \multirow{2}{*}{\textbf{Method}} & \multicolumn{5}{c}{\textbf{ECE} ↓} \\
\cmidrule(lr){3-8}
\textbf{VLMs} & & \textbf{MMB} & \textbf{OOD} & \textbf{SQA} & \textbf{SB} & \textbf{AI2D} & \textbf{Avg.} \\
\midrule
\multirow{3}{*}{LLaVA-v1.6-34B}
& APS & 0.1277 & 0.1261 & 0.2082 & 0.1356 & 0.2353 & 0.1666 \\
& LAC & 0.0738 & 0.1124 & 0.1143 & 0.1312 & 0.1626 & 0.1109 \\
& Ours & \textbf{0.0085} & \textbf{0.0302} & \textbf{0.0309} & \textbf{0.0342} & \textbf{0.0385} & \textbf{0.0285} \\
\cmidrule(lr){1-8}
\multirow{3}{*}{LLaVA-v1.6-13B}
& APS & 0.1593 & 0.2218 & 0.1902 & 0.1607 & 0.2747 & 0.2013 \\
& LAC & 0.1300 & 0.1698 & 0.1618 & 0.1759 & 0.1908 & 0.1657 \\
& Ours & \textbf{0.0218} & \textbf{0.0159} & \textbf{0.0445} & \textbf{0.0601} & \textbf{0.0252} & \textbf{0.0335} \\
\cmidrule(lr){1-8}
\multirow{3}{*}{LLaVA-v1.6-7B}
& APS & 0.1576 & 0.2439 & 0.2128 & 0.1704 & 0.2641 & 0.2098 \\
& LAC & 0.1314 & 0.1974 & 0.1865 & 0.1797 & 0.1987 & 0.1787 \\
& Ours & \textbf{0.0419} & \textbf{0.0252} & \textbf{0.0498} & \textbf{0.0581} & \textbf{0.0148} & \textbf{0.0380} \\
\midrule
\midrule
\textbf{LLMs} & & \textbf{HSwg} & \textbf{HDial} & \textbf{CQA} & \textbf{HSum} & \textbf{MMLU} & \textbf{Avg.} \\
\midrule
\multirow{3}{*}{Qwen-7B}
& APS & 0.3470 & 0.2978 & 0.2327 & 0.4099 & 0.4032 & 0.3381 \\
& LAC & 0.3222 & 0.2680 & 0.1479 & 0.4381 & 0.3474 & 0.3047 \\
& Ours & \textbf{0.0807} & \textbf{0.0265} & \textbf{0.0772} & \textbf{0.1409} & \textbf{0.0485} & \textbf{0.0748} \\
\cmidrule(lr){1-8}
\multirow{3}{*}{Qwen-14B}
& APS & 0.0901 & 0.1972 & 0.0996 & 0.2949 & 0.2729 & 0.1909 \\
& LAC & 0.0156 & 0.1644 & 0.0278 & 0.3360 & 0.2273 & 0.1542 \\
& Ours & \textbf{0.0134} & \textbf{0.0307} & \textbf{0.0266} & \textbf{0.1271} & \textbf{0.0170} & \textbf{0.0429} \\
\cmidrule(lr){1-8}
\multirow{3}{*}{Yi-34B}
& APS & 0.1111 & 0.3240 & 0.1554 & 0.2163 & 0.2479 & 0.2109 \\
& LAC & 0.0514 & 0.2718 & 0.1030 & 0.1887 & 0.1727 & 0.1575 \\
& Ours & \textbf{0.0522} & \textbf{0.1528} & \textbf{0.0990} & \textbf{0.0542} & \textbf{0.0337} & \textbf{0.0784} \\
\bottomrule
\end{tabular}
\end{table}

\subsection{Evaluation and Results}  
We evaluate VLMs and LLMs with parameter sizes ranging from 7B to 34B, with detailed results presented in the following sections.

\textbf{Hallucination Detection:}~We evaluated the effectiveness of conformal methods in detecting hallucinations in VLM/LLM responses to multiple choice QA tasks. Here, hallucination refers to confidently asserted yet incorrect and arbitrary claims made by the model. Detecting them is framed as a binary classification task: distinguishing correct from hallucinated (incorrect) responses using uncertainty estimates. AUROC serves as the primary evaluation metric \cite{farquhar2024detecting}, with higher scores indicating better separation of correct and incorrect predictions based on model confidence. As shown in \autoref{tab:auroc_auarc}, our proposed CAP method consistently achieves higher AUROC scores than APS and LAC across various models and datasets. Notably, this improvement reaches up to 10.17\% on average and 22.19\% in specific instances, demonstrating CAP’s effectiveness in reliably detecting hallucinations.

\begin{figure*}[h!]
    \centering
    \makebox[\textwidth]{\includegraphics[width=0.9\textwidth]{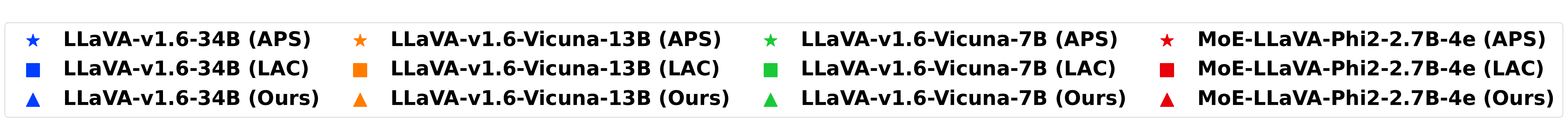}}
    \makebox[\textwidth]{
        \begin{tabular}{ccc}
            \includegraphics[width=0.30\textwidth]{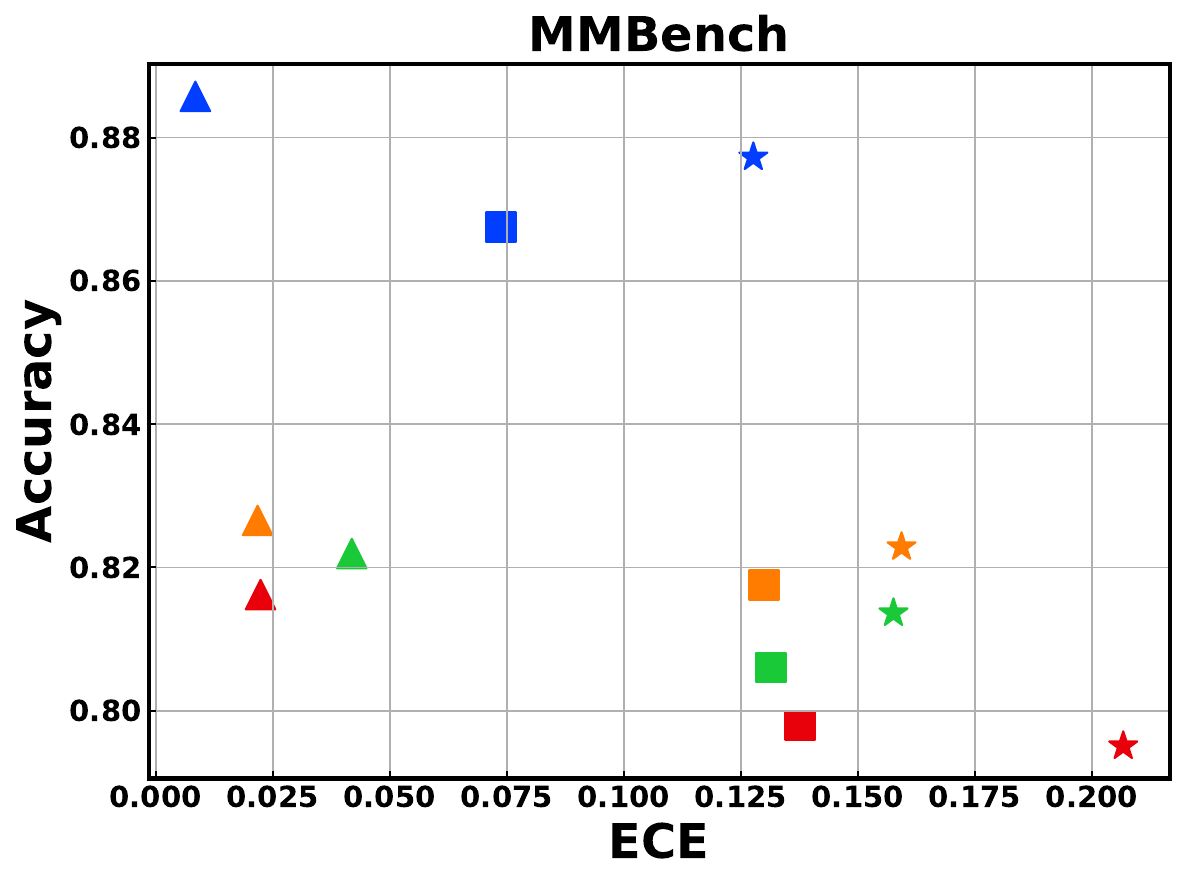} &
            \includegraphics[width=0.30\textwidth]{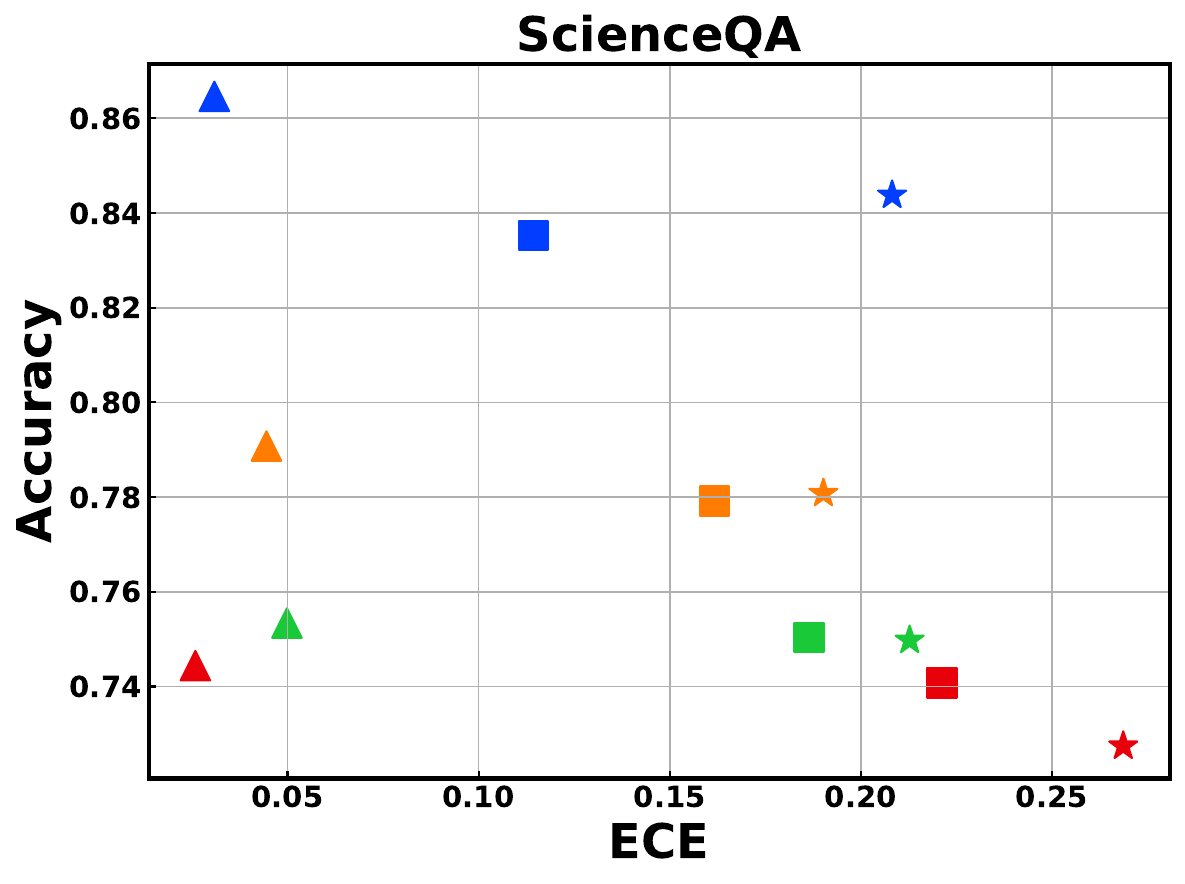} &
            \includegraphics[width=0.30\textwidth]{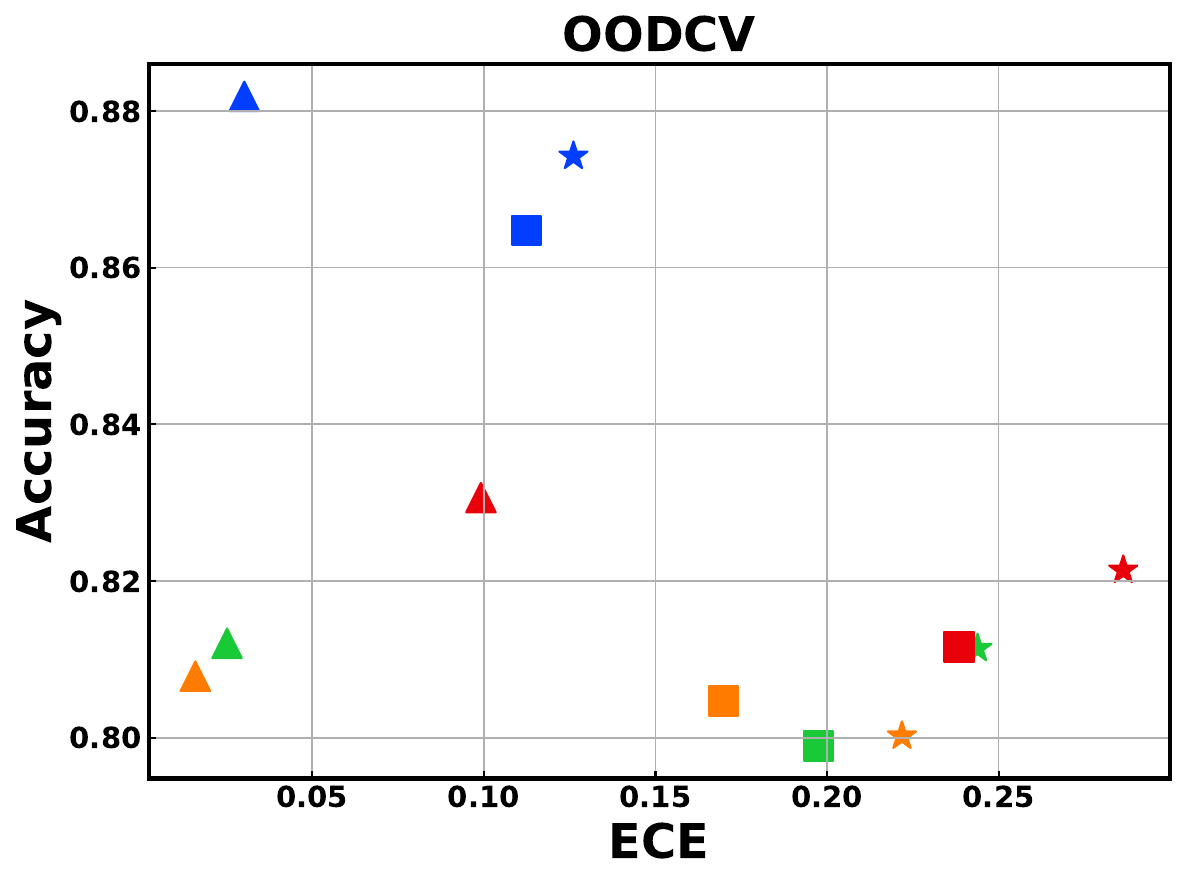} \\
        \end{tabular}
    }
\caption{Accuracy vs. Expected Calibration Error (ECE) comparison of CAP, APS, and LAC across various VLMs and five datasets: MMBench, ScienceQA, OODCV, SEEDBench, and AI2D. An ideal model has high accuracy and low ECE (upper-left). ATCP shows significant ECE improvement over baselines. Please refer to \autoref{fig:annotated_grid_ap} in Appendix \ref{accuracy_ece_appendix} for complete list of figures.}
    \label{fig:annotated_grid}
\end{figure*}

\begin{figure*}[h!]
    \centering
    \makebox[\textwidth]{\includegraphics[width=0.5\textwidth]{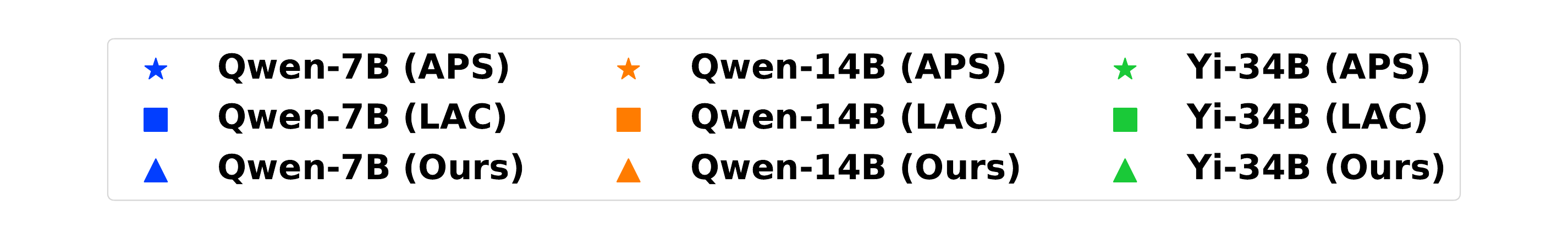}}
    \makebox[\textwidth]{
        \begin{tabular}{ccc}
            \includegraphics[width=0.30\textwidth]{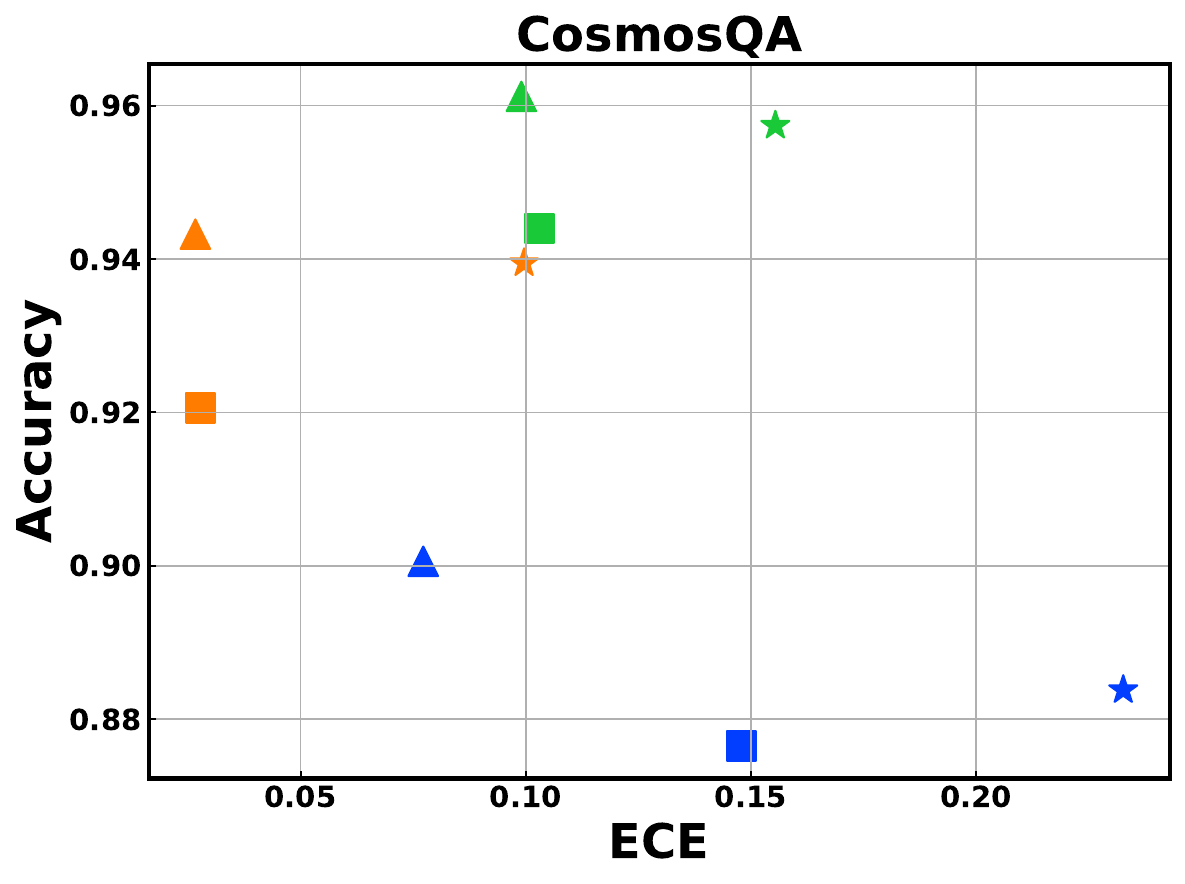} &
            \includegraphics[width=0.30\textwidth]{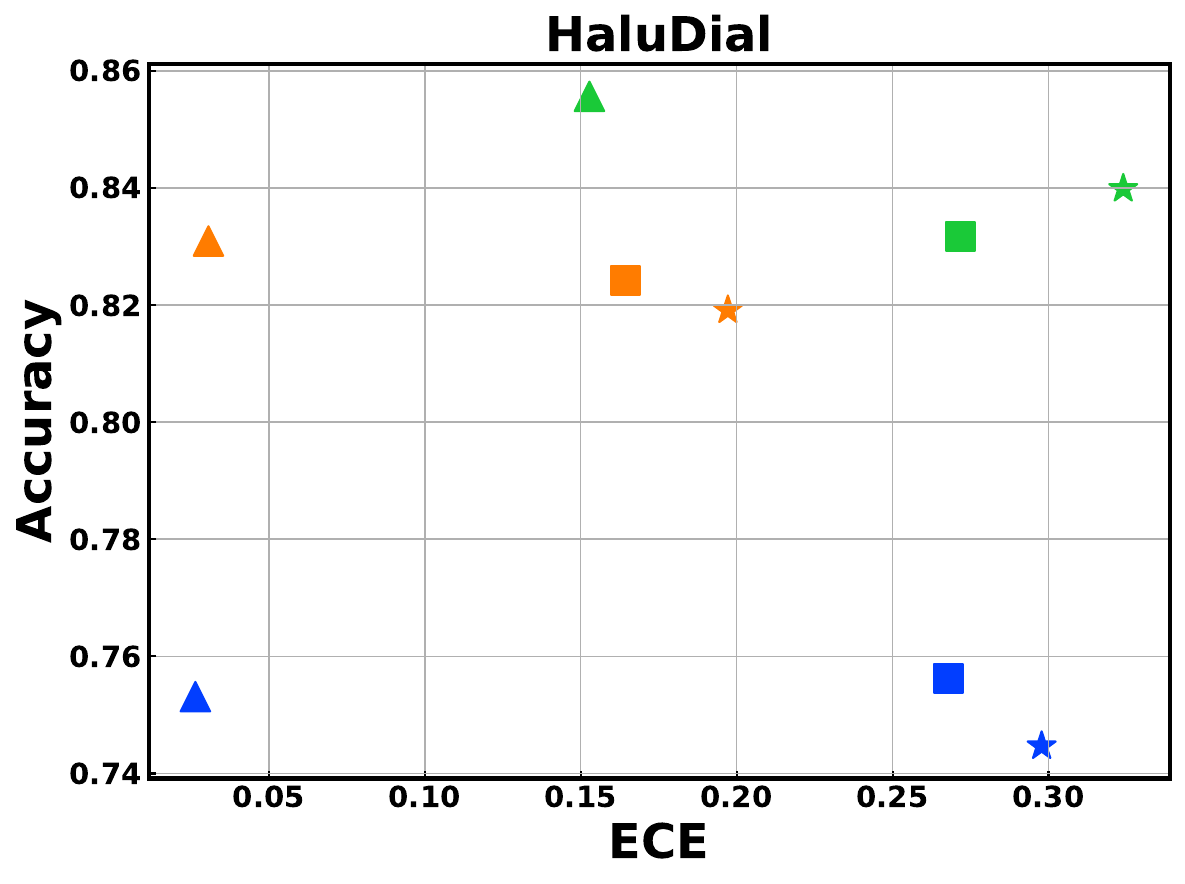} &
            \includegraphics[width=0.30\textwidth]{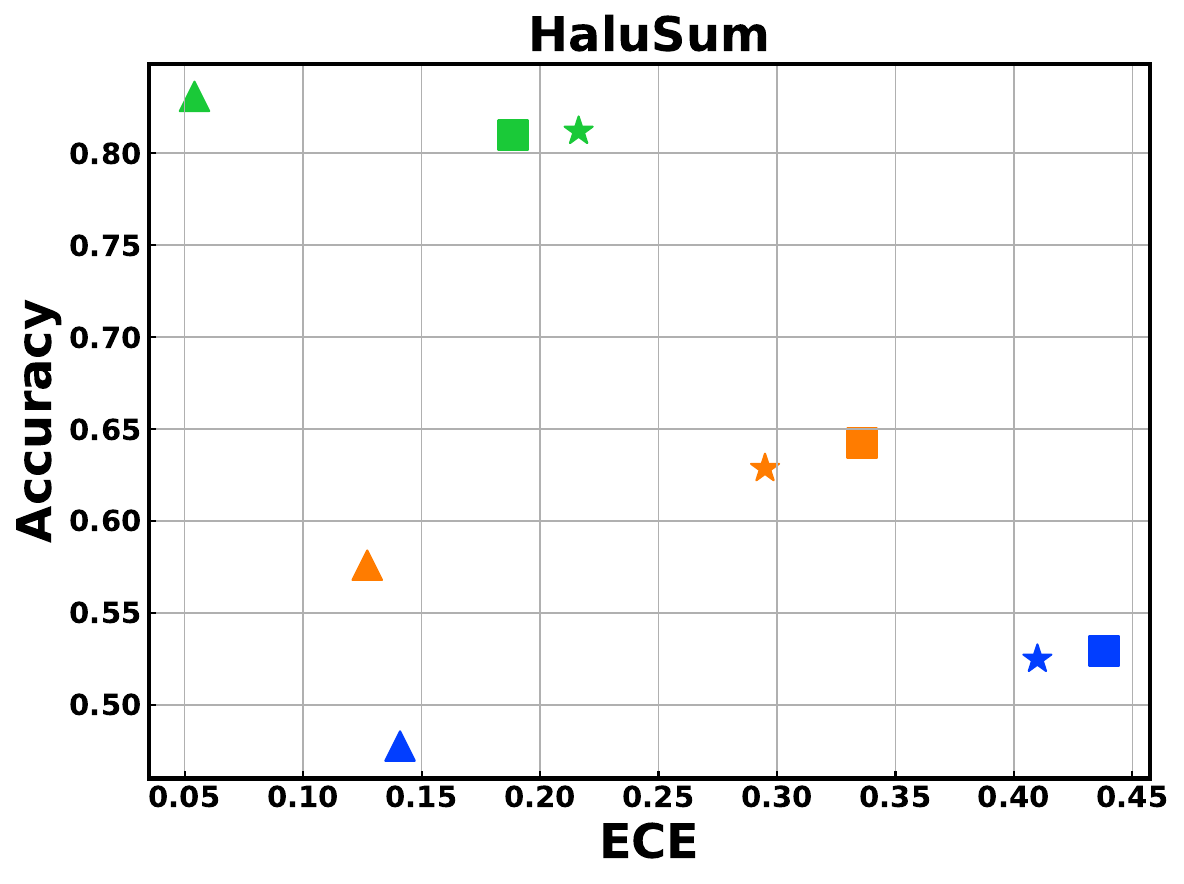} \\
        \end{tabular}
    }
    \caption{Accuracy versus Expected Calibration Error (ECE) comparison between ATCP, APS and LAC methods across different LLMs and five datasets i.e. CosmosQA, HaluDial, HaluSum, HellaSwag, MMLU. The ideal model should have high accuracy and low ECE, indicating accurate predictions with well calibrated uncertainty quantification (upper-left of the plot). The ECE of ATCP shows significant improvement compared to baseline methods. Please refer to \autoref{fig:annotated_grid_llm_ap} in Appendix \ref{accuracy_ece_appendix} for the complete list of figures.}
    \label{fig:annotated_grid_llm}
\end{figure*}

\textbf{Uncertainty-Guided Selective Generation:}~We assess the conformal model's ability to abstain from responses when uncertainty is high. Following \cite{farquhar2024detecting} and \cite{krishnan2024enhancing}, we evaluate this using AUARC, which quantifies the trade-off between accuracy and abstention. Higher AUARC indicates better alignment of uncertainty estimates with prediction difficulty, enabling effective abstention in challenging examples while maintaining accuracy on confident predictions. As shown in \autoref{tab:auroc_auarc}, our CAP method consistently outperforms APS and LAC in AUARC, with average improvements of up to 9.43\% and peak gains of 21.17\%. This demonstrates CAP’s effectiveness in allowing better abstention policies and selective generation.

\textbf{Coverage Guarantee:}~CAP ensures that the true label is included in the prediction set (or as a single prediction when confidence is high) with a predefined probability ($1-\alpha$, set to 90\%). This target represents the minimum probability of capturing the true label, corresponding to a maximum error rate of 10\%. As shown in \autoref{tab:coverage}, CAP consistently achieves at least 90\% coverage across all datasets and VLMs. Larger prediction sets indicate greater uncertainty while preserving statistical guarantees. Compared to baselines, CAP effectively balances \textit{coverage and prediction set size}. APS attains a higher coverage rate but at the cost of excessively large sets, introducing greater uncertainty and reducing practical utility (\autoref{tab:accuracy_vs_ss}). Conversely, LAC produces smaller sets but frequently falls short of the 90\% coverage target, compromising reliability (\autoref{tab:coverage}). CAP optimally bridges these extremes, reliably maintaining the 90\% guarantee while keeping prediction sets well-controlled. This balance ensures statistically valid and practically useful estimates.

\textbf{Accuracy and Set Size:}~Prediction set size is a key indicator of model uncertainty, with smaller sets generally reflecting lower uncertainty. However, as shown in \autoref{tab:accuracy_vs_ss}, the relationship between accuracy and set size is not always straightforward. While accuracy remains relatively consistent across APS, LAC, and our method, set sizes vary significantly. APS achieves competitive accuracy but often produces overly large prediction sets, indicating higher uncertainty and less informative outputs. In contrast, LAC generates the smallest sets but at the cost of compromised coverage rates and slightly lower accuracy in some cases (\autoref{tab:accuracy_vs_ss}).

\begin{figure*}[t!]
    \centering
    \makebox[\textwidth]{\includegraphics[width=0.4\textwidth]{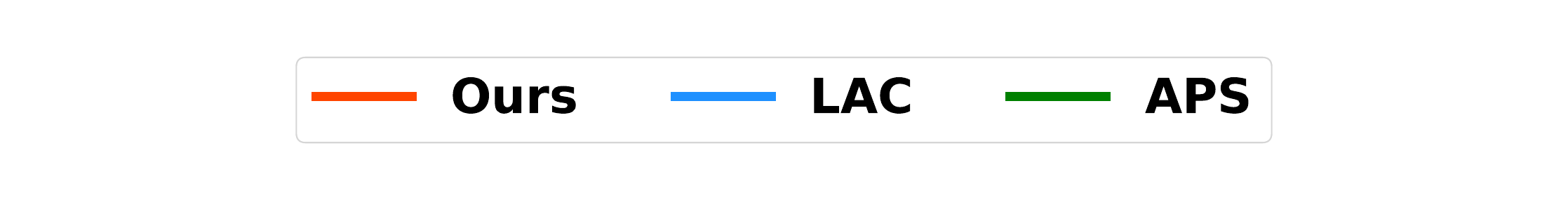}}
    \makebox[\textwidth]{
        \begin{tabular}{cccc}
            \includegraphics[width=0.24\textwidth]{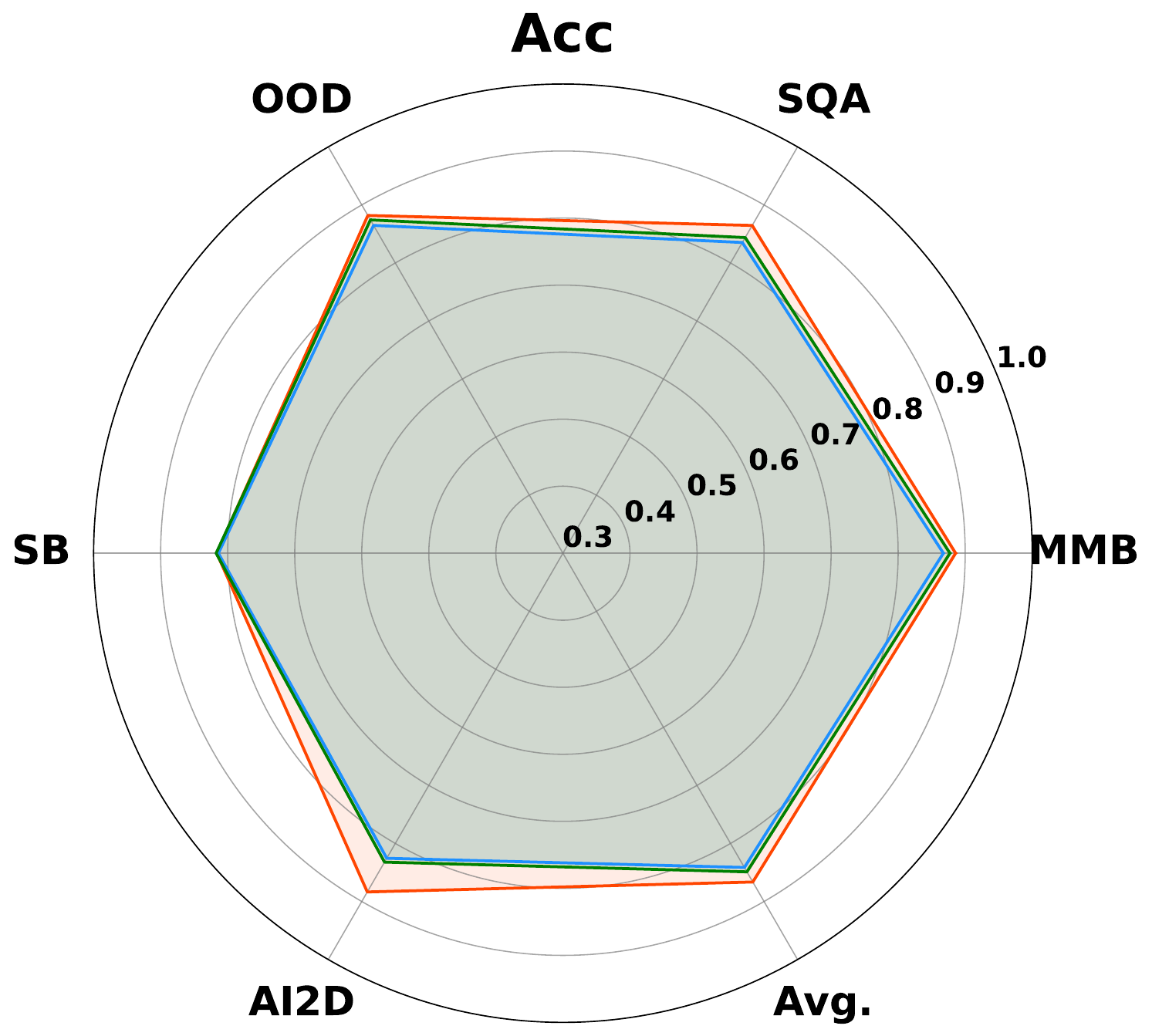} &
            \includegraphics[width=0.24\textwidth]{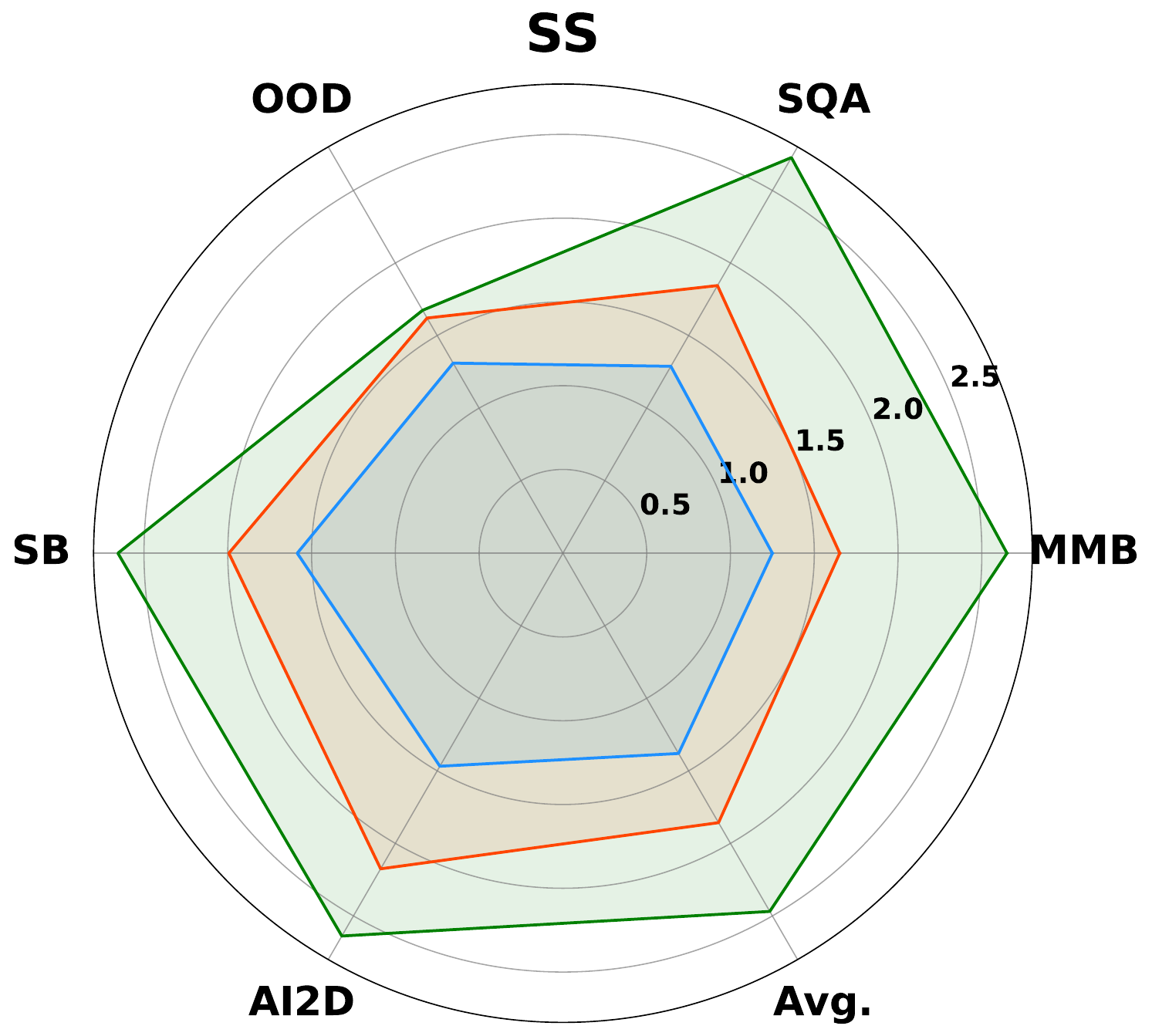} &
            \includegraphics[width=0.24\textwidth]{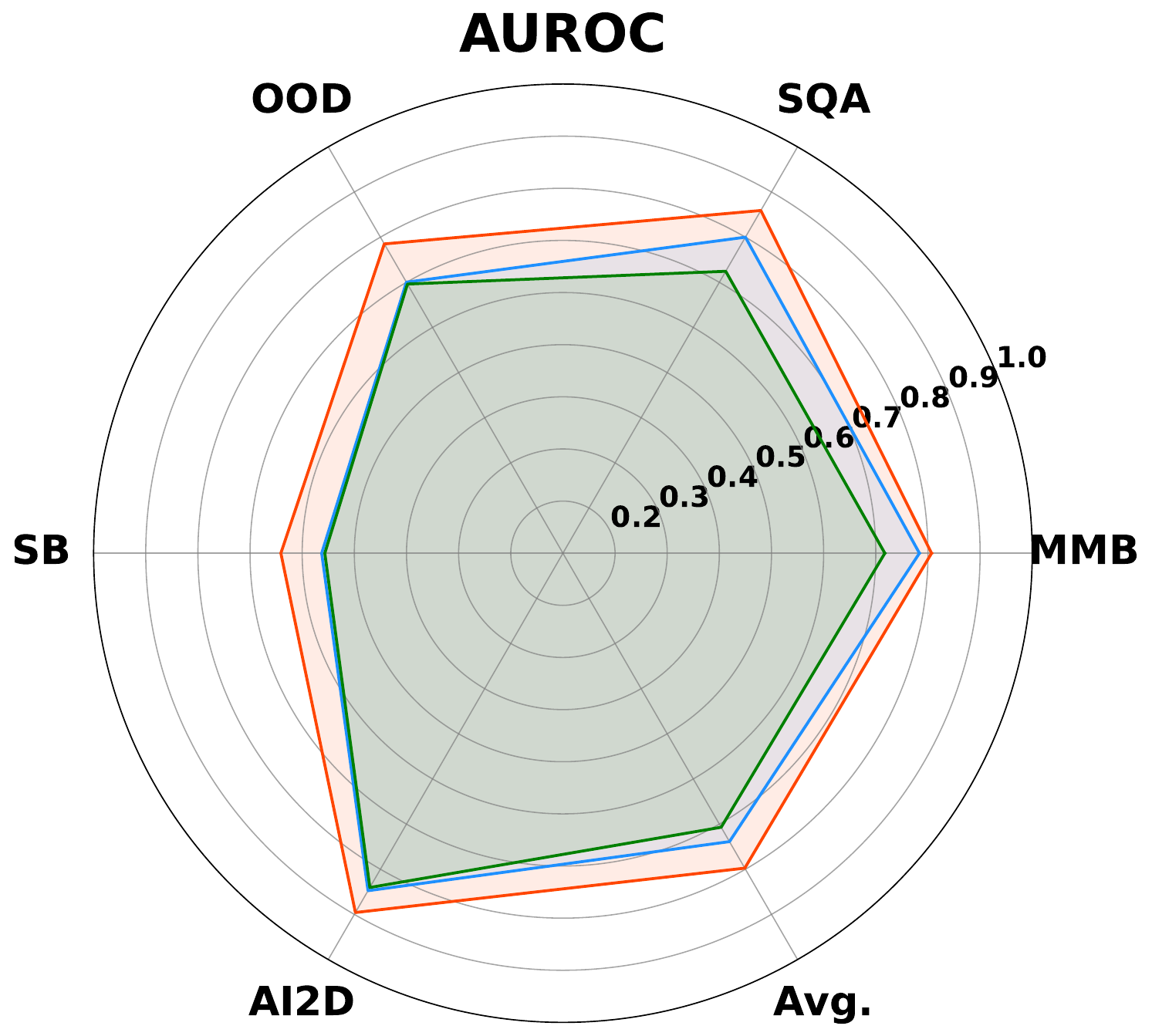} &
            \includegraphics[width=0.24\textwidth]{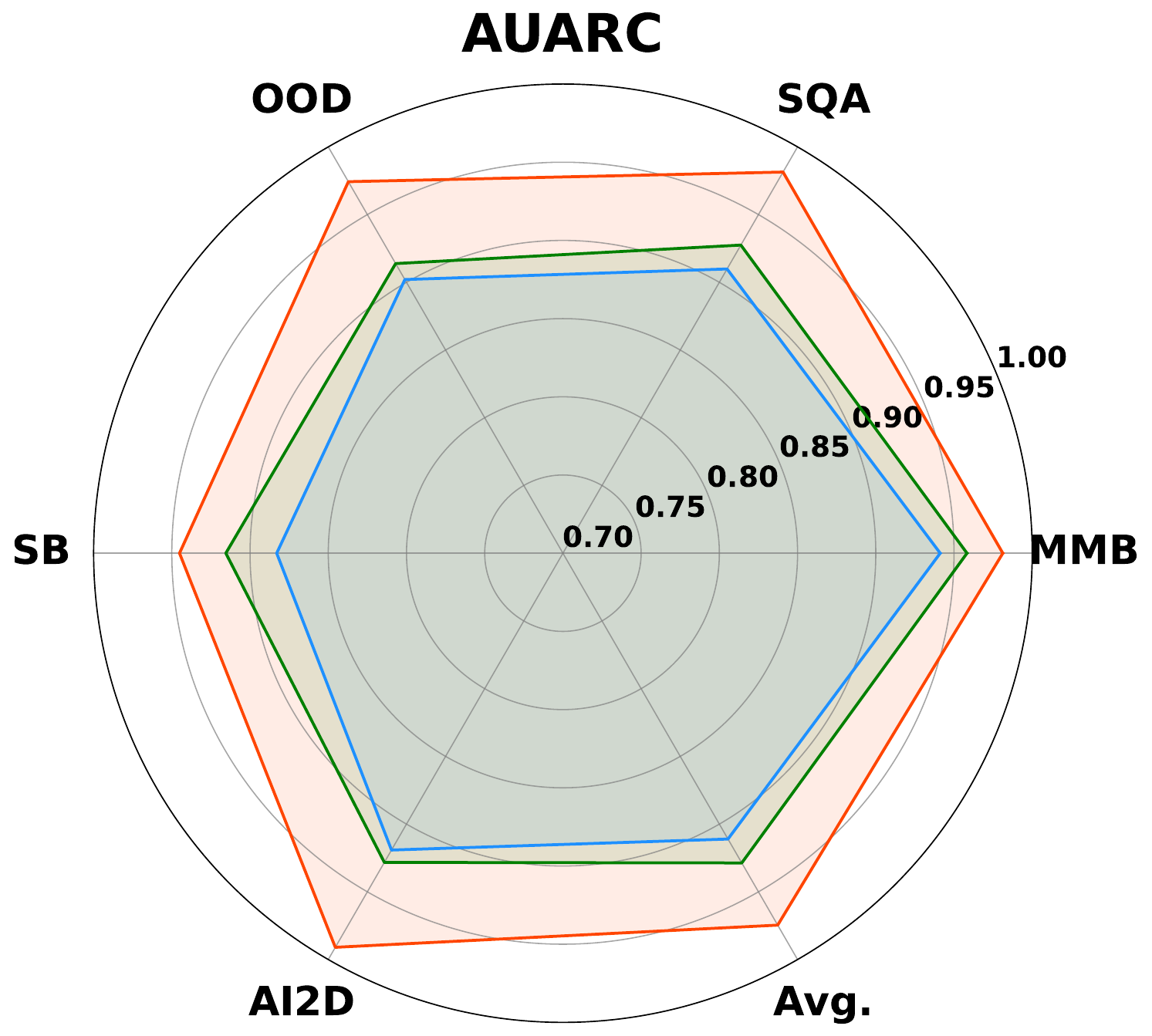} \\
        \end{tabular}
    }
    \makebox[\textwidth]{
        \begin{tabular}{cccc}
            \includegraphics[width=0.24\textwidth]{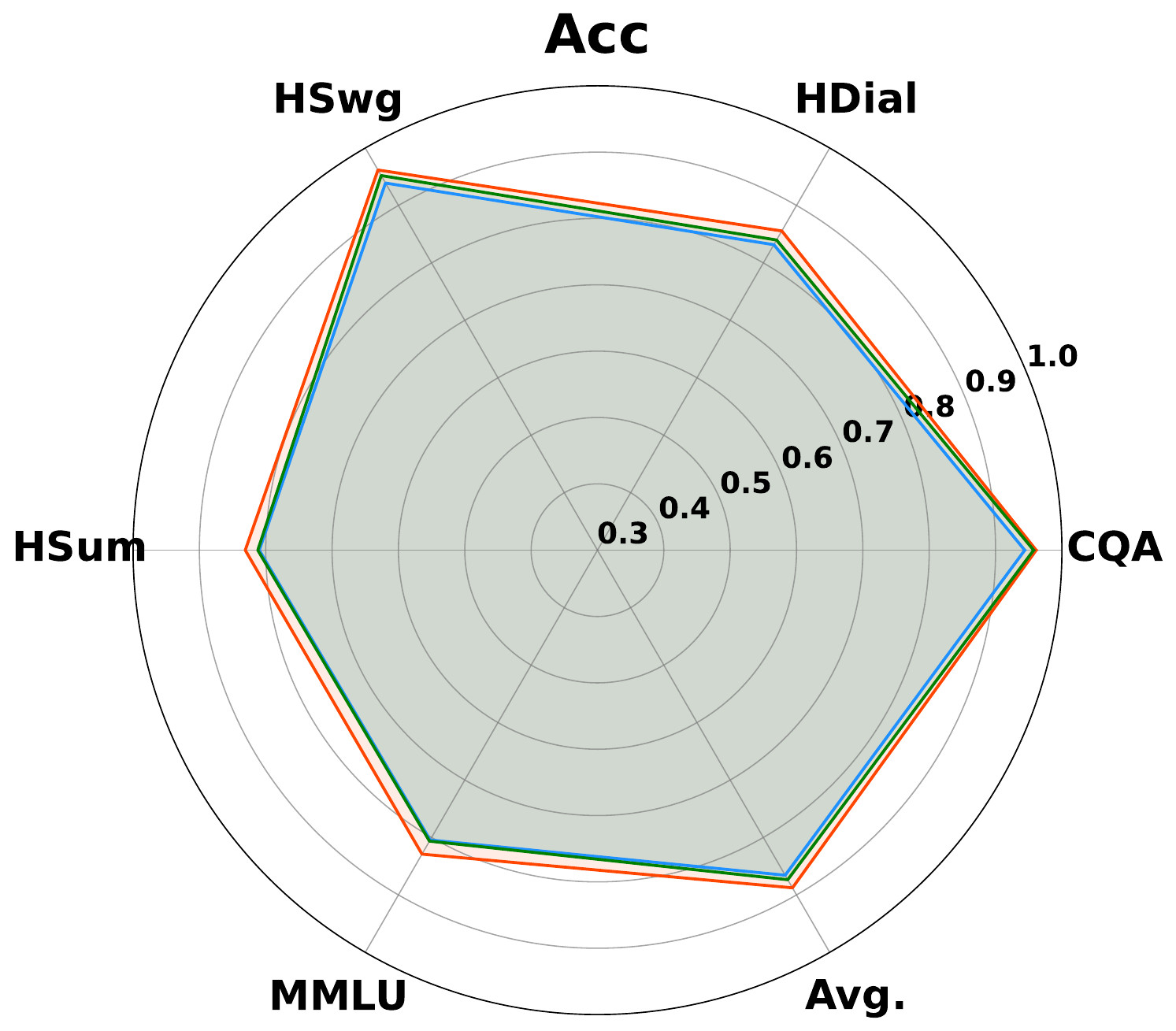} &
            \includegraphics[width=0.24\textwidth]{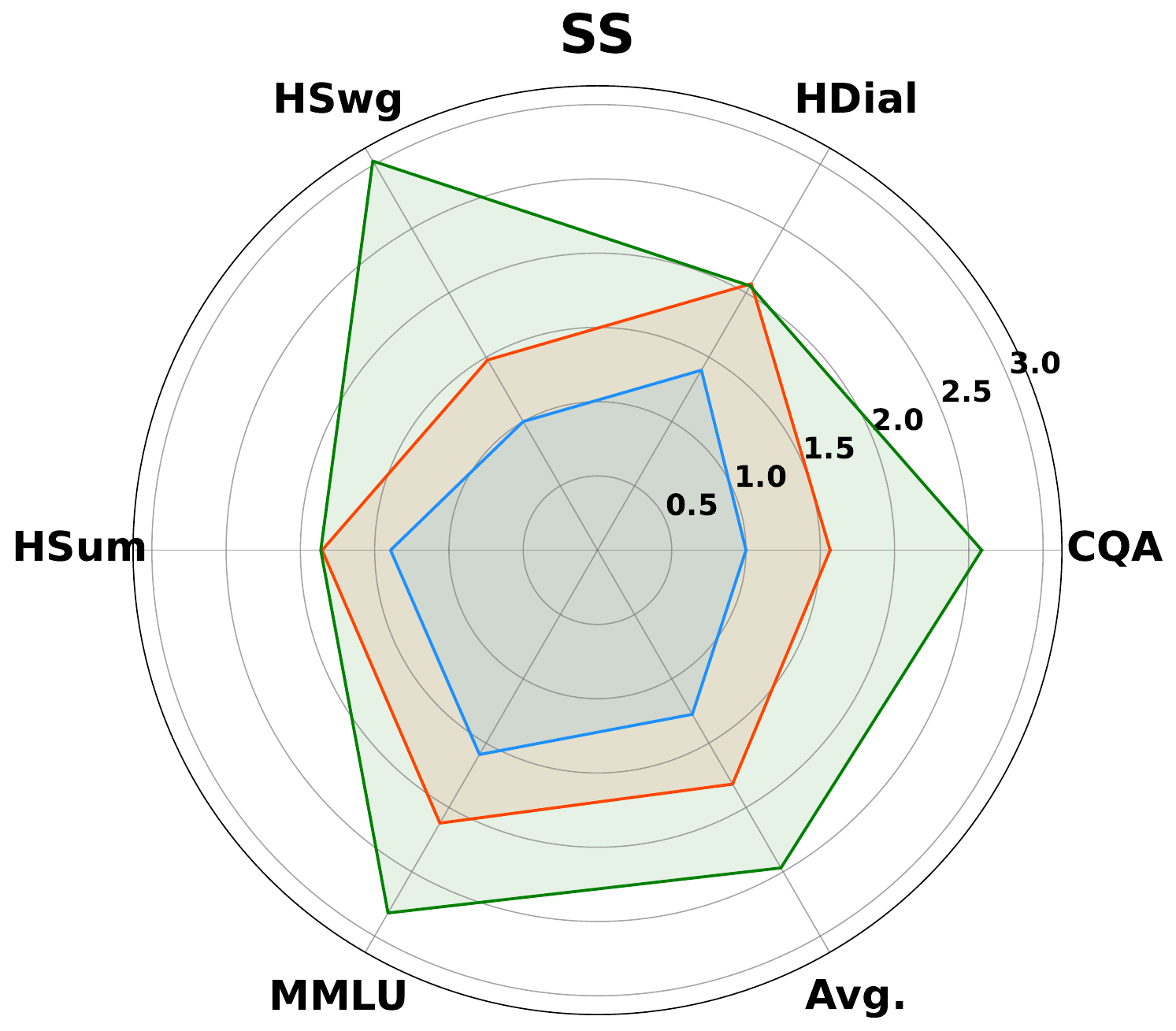} &
            \includegraphics[width=0.24\textwidth]{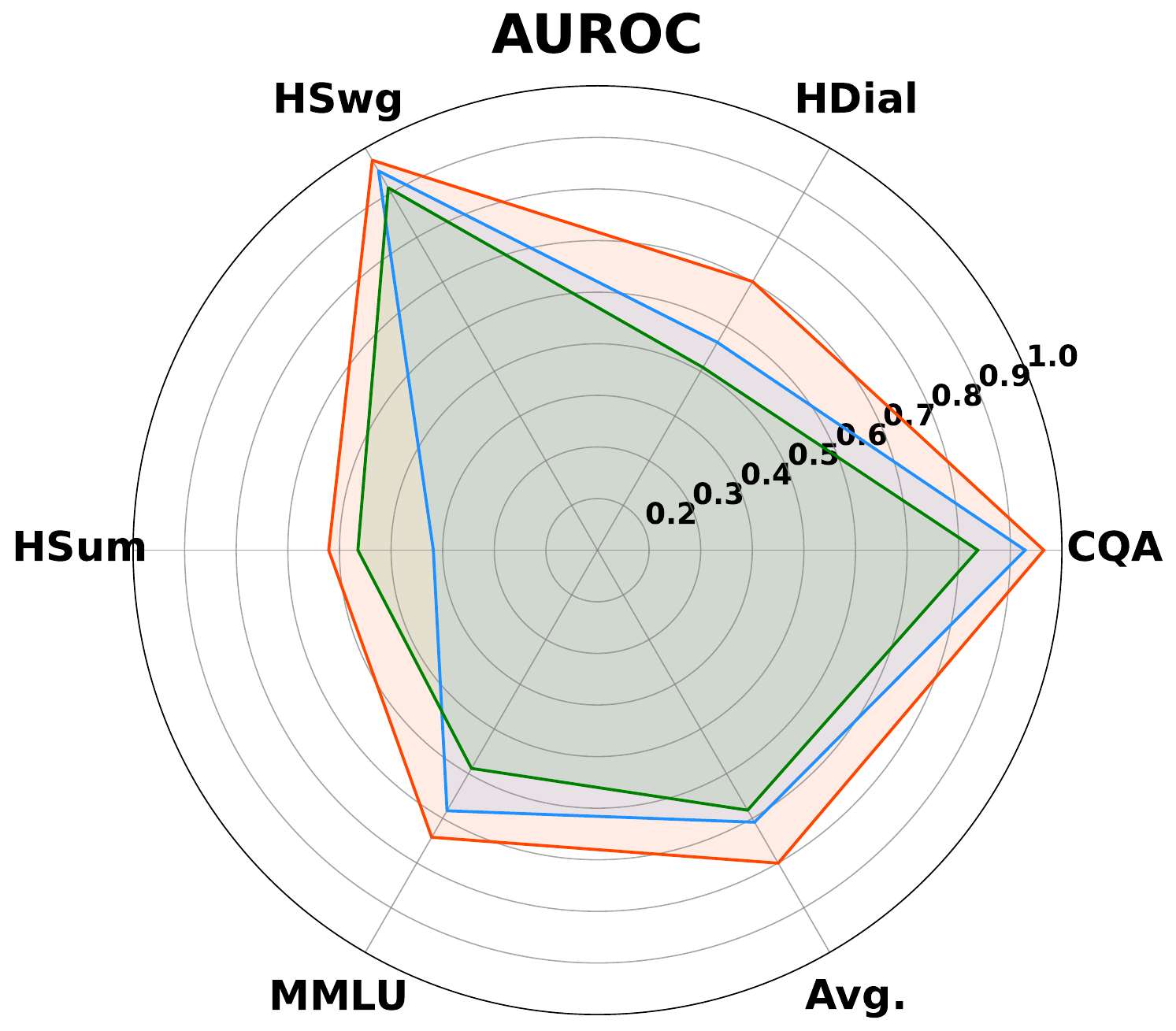} &
            \includegraphics[width=0.24\textwidth]{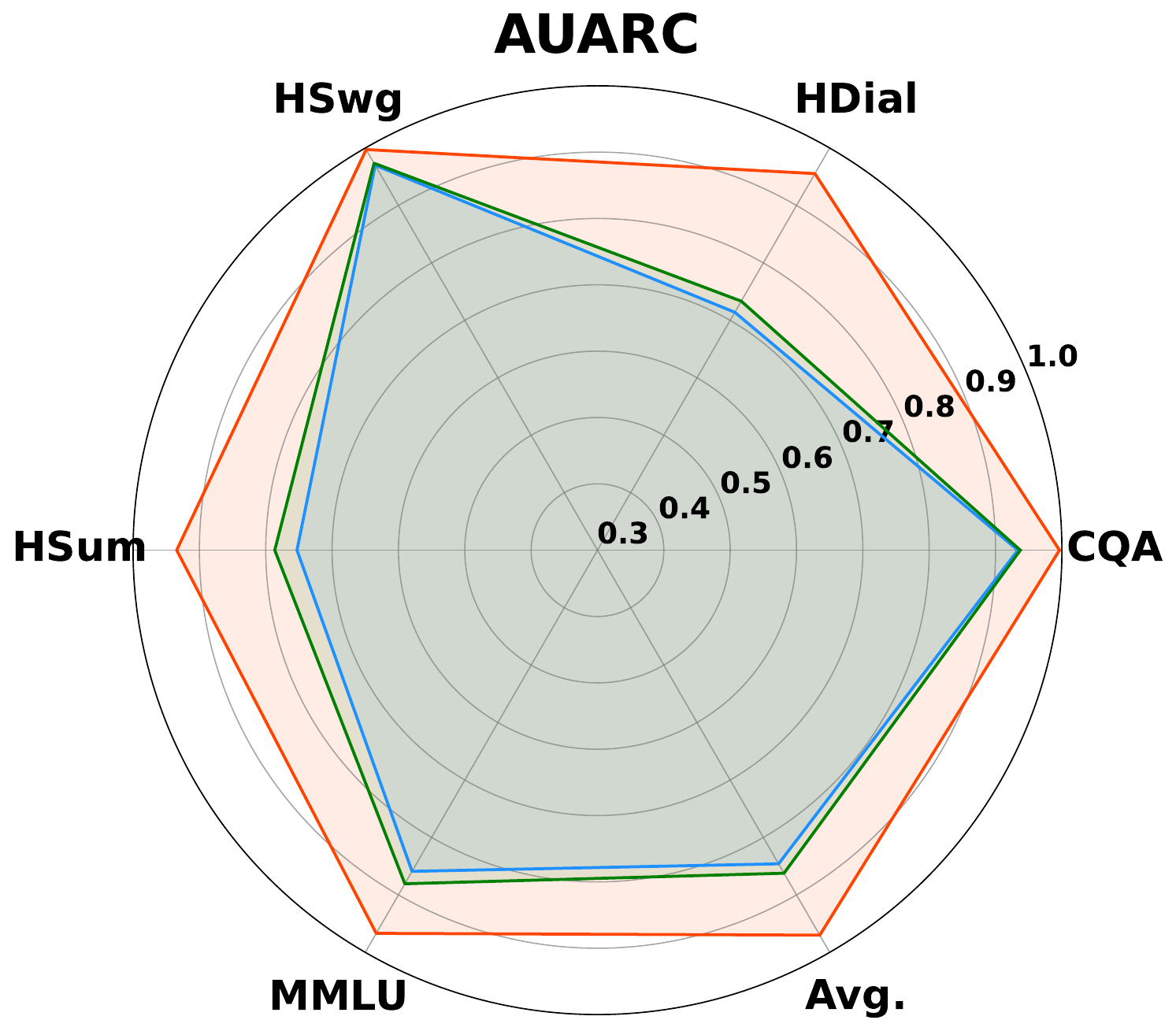} \\
        \end{tabular}
    }
    \caption{Performance comparison of CAP (Ours), APS, and LAC on Llava-v1.6-34B (VLM) and Yi-34B (LLM) across four metrics: (i) accuracy, (ii) set size, (iii) AUROC, and (iv) AUARC. Each figure shows model performance across ten benchmark datasets, illustrating the impact of conformal method on uncertainty metrics.}
    \label{fig:spider_all}
\end{figure*}

Our CAP method achieves the highest average accuracy across datasets by leveraging the trainable adaptive threshold mechanism. It generally outperforms APS in accuracy, with only minor exceptions. In terms of set sizes, CAP consistently strikes a balance--producing smaller, more controlled sets than APS while avoiding LAC’s overly narrow sets that lead to under-coverage. As shown in \autoref{tab:accuracy_vs_ss}, the \textit{balanced} set sizes of CAP are underlined. This ensures effective uncertainty management, which is a key strength of our approach. Importantly, these findings demonstrate that accuracy alone is insufficient for evaluating conformal methods. Significant variations in set sizes, despite similar accuracy, highlight the importance of set size as a distinct measure of performance and uncertainty.

\textbf{Accuracy and Expected Calibration Error:}~Calibration measures how well a model’s confidence estimates reflect actual correctness. We assess this using expected calibration error (ECE), where lower values indicate better alignment between confidence and accuracy. As shown in \autoref{tab:ece}, our CAP method consistently achieves significantly lower ECE than APS and LAC across all models and datasets. Notably, CAP improves calibration without compromising accuracy (\autoref{tab:accuracy_vs_ss}, \autoref{fig:annotated_grid}, \autoref{fig:annotated_grid_llm}). Compared to APS, CAP reduces ECE by an average of 82.9\% across all VLMs and LLMs (83.1\% for VLMs, 82.7\% for LLMs). Against LAC, CAP achieves an average ECE reduction of 74.8\% (74.4\% for VLMs, 75.2\% for LLMs), demonstrating superior calibration. The combination of lower ECE and improved accuracy underscores CAP's advantage: it delivers not only accurate predictions but also reliable uncertainty estimates. High-confidence predictions correspond to a higher likelihood of correctness, while uncertain predictions more accurately reflect the model’s limitations. \autoref{fig:annotated_grid} and \autoref{fig:annotated_grid_llm} illustrate CAP’s effectiveness, positioning it in the upper-left region of the accuracy-ECE space.

\textbf{Discussion and Limitations:}
Our extensive evaluations show that CAP can significantly improve over static uncertainty quantification methods by leveraging reinforcement learning to dynamically adjust thresholds, optimizing the trade-off between accuracy, coverage, and prediction set size. Empirically, CAP outperforms APS and LAC in hallucination detection (AUROC), selective generation (AUARC), and calibration error while maintaining valid coverage guarantees. As shown in \autoref{fig:spider_all}, CAP achieves higher accuracy than static baselines, balances prediction set sizes to prevent under- or over-coverage, and significantly improves uncertainty-aware metrics. Additional results across are provided in Appendix \ref{additional_results}.

However, integrating CP with RL introduces challenges. Learned policies may overfit, bias abstention strategies, or distort CP’s theoretical guarantees. CAP also introduces additional parameters and relies on well-tuned reward functions, which may require careful optimization for different data distributions. Extreme distribution shifts or limited calibration data can further impact performance if the calibration set fails to capture relevant uncertainty signals. These risks need a careful investigation and can be mitigated by enforcing distribution-aware regularization, calibrating policies through out-of-sample validation, and constraining reward functions to align with conformal principles. 

\section{Conclusion}
In this work, we propose a reinforcement learning-based approach to adaptively configure conformal prediction thresholds for selective abstention in large language and vision-language models. By dynamically adjusting the decision boundary between single-label, set-valued predictions, and abstentions, our method overcomes the limitations of static conformal approaches, such as rigid coverage–uncertainty trade-offs and suboptimal confidence calibration. Extensive evaluations across diverse tasks—from multiple-choice QA to image-based reasoning—demonstrate that our learned conformal abstention policy (CAP) outperforms APS and LAC, achieving higher accuracy, maintaining coverage guarantees, shrinking prediction sets, and reducing calibration error. Notably, CAP enhances hallucination detection and uncertainty-guided selective generation, highlighting the potential of coupling conformal prediction with adaptive policies for robust risk management in foundation models.


\bibliography{paper}
\bibliographystyle{IEEEtran}

\clearpage
\newpage
\appendices

\section{Formal Proof of Conformal Coverage Guarantee}
\label{sec:appendix_coverage}

We provide here a classic proof of the coverage property for standard (single-threshold) conformal prediction under i.i.d.\ assumptions. In the main text, this lays the foundation for our two-threshold extension (see Section~\ref{Conformal_Abstention}), where an additional threshold is introduced to distinguish between single-label predictions, set-valued predictions, and abstentions. Despite that extension, the core argument below underpins the claimed coverage guarantee at level \(1-\alpha\).

\newtheorem{theorem}{Theorem}
\begin{theorem}[Conformal Coverage Guarantee]
\label{thm:conformal_coverage}
Let \(\{(X_i, Y_i)\}_{i=1}^{n+1}\) be i.i.d.\ samples from an unknown distribution, partitioned into:
\begin{itemize}
    \item A \emph{calibration set} of size \(n\): \(\{(X_i, Y_i)\}_{i=1}^n\).
    \item A \emph{test point} \((X_{n+1}, Y_{n+1})\).
\end{itemize}
Suppose a \emph{nonconformity score function} \(s(\cdot,\cdot)\) assigns a real-valued score \(s(X_i, Y_i)\) to each calibration sample, capturing how ``atypical'' or ``nonconforming'' the pair \((X_i, Y_i)\) appears relative to a prediction model. Denoting
\[
s_i \;=\; s(X_i, Y_i), \quad i = 1,\ldots,n,
\]
let \(\hat{q}\) be the \((1-\alpha)\)-quantile of these calibration scores:
\[
\hat{q} \;=\; \mathrm{Quantile}\Bigl(\{s_1, \dots, s_n\},\; 1-\alpha\Bigr).
\]
Then we define the conformal prediction set for the test point \((X_{n+1}, \,\cdot\,)\) as
\[
C\bigl(X_{n+1}\bigr)
\;=\;
\Bigl\{\,y: s\bigl(X_{n+1}, y\bigr) \;\le\;\hat{q}\Bigr\}.
\]
Under the i.i.d.\ assumption, this set satisfies 
\[
\Pr\!\Bigl(Y_{n+1} \;\in\; C\bigl(X_{n+1}\bigr)\Bigr) \;\ge\; 1 - \alpha.
\]
\end{theorem}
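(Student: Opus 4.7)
\medskip

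The plan is to exploit the exchangeability of the calibration scores and the test score, which follows immediately from the i.i.d.\ assumption. Let \(s_i = s(X_i, Y_i)\) for \(i = 1, \ldots, n+1\); since \((X_i, Y_i)\) are i.i.d., the random variables \(s_1, \ldots, s_{n+1}\) are exchangeable, regardless of the particular (measurable) form of the score function \(s\). The crucial observation is that the event \(Y_{n+1} \in C(X_{n+1})\) is, by the very definition of the prediction set, identical to the event \(\{s_{n+1} \le \hat{q}\}\), so the coverage probability reduces to a purely distributional statement about the rank of \(s_{n+1}\) relative to the calibration scores.

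First I would formalize the quantile. Writing \(k = \lceil (n+1)(1-\alpha) \rceil\) and letting \(s_{(1)} \le s_{(2)} \le \cdots \le s_{(n)}\) denote the order statistics of \(\{s_1,\ldots,s_n\}\), the empirical \((1-\alpha)\)-quantile used in the construction can be identified with \(s_{(k)}\) (with the convention \(s_{(n+1)} = +\infty\) should \(k > n\), i.e.\ when \(\alpha < 1/(n+1)\)). Then \(\{s_{n+1} \le \hat{q}\}\) is equivalent to saying that the rank of \(s_{n+1}\) among \(\{s_1,\ldots,s_{n+1}\}\) is at most \(k\). Next I would invoke the standard exchangeability lemma: by symmetry, the rank \(R\) of \(s_{n+1}\) within the exchangeable collection \(\{s_1,\ldots,s_{n+1}\}\) (assuming no ties, which holds with probability one when the score distribution is continuous) is uniform on \(\{1,2,\ldots,n+1\}\). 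Hence
\[
\Pr\bigl(s_{n+1} \le s_{(k)}\bigr) \;=\; \Pr(R \le k) \;=\; \frac{k}{n+1} \;=\; \frac{\lceil (n+1)(1-\alpha) \rceil}{n+1} \;\ge\; 1-\alpha,
\]
which is exactly the desired lower bound.

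The final step would be to address ties, which are the only nontrivial subtlety. If the score distribution is not continuous, then ranks are not uniquely defined and one must either break ties uniformly at random (e.g.\ by appending an independent continuous perturbation to each score) or argue with weak ranks and a case analysis. In either case the bound \(\Pr(R \le k) \ge k/(n+1)\) still goes through because ties can only help include the test point. Overall the argument is short once exchangeability is in hand; the main conceptual obstacle is recognizing that the model \(f\), the input \(X\), and the score function \(s\) all become irrelevant once one conditions on the symmetry of the joint distribution of the \(n+1\) scores, so no distributional assumption beyond exchangeability is needed. I would flag that this classical single-threshold result underpins, but does not by itself establish, the two-threshold extension used in the main text, where coverage refers to the larger event that the true label is captured either as a singleton, within the set prediction, or by a correct abstention.
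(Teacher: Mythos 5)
Your proposal is correct and follows essentially the same rank-based exchangeability argument as the paper: both reduce the coverage event to the statement that the test score's rank among the $n+1$ scores is (stochastically) uniform, giving probability at least $\lceil (n+1)(1-\alpha)\rceil/(n+1) \ge 1-\alpha$. Your write-up is in fact slightly more careful than the paper's (explicit identification of $\hat{q}$ with the $k$-th order statistic, the $k>n$ convention, and the treatment of ties), whereas the paper dresses the same symmetry argument in a random-permutation framing; these are cosmetic differences, not a different route.
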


\begin{proof}
Because the samples \(\{(X_i, Y_i)\}_{i=1}^{n+1}\) are assumed exchangeable (i.i.d.), any permutation of the \(n+1\) points is equally likely. Consider a random permutation \(\pi\) of the indices \(\{1,\dots,n+1\}\), and let 
\[
(\widetilde{X}_i, \widetilde{Y}_i) \;=\; \bigl(X_{\pi_i},\, Y_{\pi_i}\bigr)
\]
be the permuted data. We then treat the first \(n\) permuted samples as a calibration set, computing their nonconformity scores,
\[
\widetilde{s}_i \;=\; s\bigl(\widetilde{X}_i, \,\widetilde{Y}_i\bigr),
\quad
i=1,\dots,n,
\]
and defining
\[
\widetilde{q}
=
\mathrm{Quantile}\Bigl(\{\widetilde{s}_1,\dots,\widetilde{s}_n\},\,1-\alpha\Bigr).
\]
The point \((\widetilde{X}_{n+1}, \widetilde{Y}_{n+1})\) is then the ``test'' sample in this permuted view, for which the conformal set is
\[
C\bigl(\widetilde{X}_{n+1}\bigr)
=
\bigl\{\,
y
:
s\bigl(\widetilde{X}_{n+1}, y\bigr)
\le
\widetilde{q}
\bigr\}.
\]

We must show that 
\(
\Pr\bigl(\widetilde{Y}_{n+1} \in C(\widetilde{X}_{n+1})\bigr) \ge 1 - \alpha
\)
with respect to the randomness of both the original samples and the random permutation. Note that \(\widetilde{Y}_{n+1}\) belongs to \(C(\widetilde{X}_{n+1})\) precisely if its nonconformity score
\[
\widetilde{s}_{n+1} \;=\; s\bigl(\widetilde{X}_{n+1}, \widetilde{Y}_{n+1}\bigr)
\]
does not exceed the \((1-\alpha)\)-quantile \(\widetilde{q}\). Equivalently, \(\widetilde{s}_{n+1}\) is at most the \(\lceil (n+1)(1-\alpha)\rceil\)-th largest among \(\{\widetilde{s}_1,\dots,\widetilde{s}_{n+1}\}\).

By symmetry, \(\widetilde{s}_{n+1}\) is equally likely to appear in any rank among the \(n+1\) scores \(\widetilde{s}_1,\dots,\widetilde{s}_{n+1}\). Hence, the probability that \(\widetilde{s}_{n+1}\) falls \emph{above} that critical rank is at most \(\alpha\). Therefore,
\[
\begin{aligned}
    \Pr\!\Bigl(\widetilde{Y}_{n+1} \,\notin\, C(\widetilde{X}_{n+1})\Bigr)
    &\;\le\;\alpha, \\
    \text{and so} \quad
    \Pr\!\Bigl(\widetilde{Y}_{n+1} \,\in\, C(\widetilde{X}_{n+1})\Bigr)
    &\;\ge\;1-\alpha.
\end{aligned}
\]
Reversing the permutation \(\pi\) simply reverts the data to its original indexing. Because all permutations are equally likely, we conclude that, for the original test point \((X_{n+1}, Y_{n+1})\), 
\[
\Pr\!\bigl(Y_{n+1} \in C(X_{n+1})\bigr) \;\ge\; 1-\alpha.
\]

\end{proof}

\noindent
\textbf{Interpretation in the Context of Two-Threshold Policies.}  
Although Theorem~\ref{thm:conformal_coverage} is stated for a single threshold \(\hat{q}\), the rank-based argument holds equally under mild modifications when additional thresholds are introduced. In the main text, we exploit two thresholds to partition nonconformity scores into regimes that yield single-label predictions, set-valued predictions, or abstentions. The coverage requirement is preserved provided that the relevant thresholds are computed against \(\{\widetilde{s}_1,\dots,\widetilde{s}_n\}\) (the calibration scores) and remain within the same unified conformal scoring framework. As a result, the final coverage probability for the true label \(Y_{n+1}\) remains at least \(1-\alpha\), up to the statistical deviations governed by the i.i.d.\ assumption on \(\{(X_i, Y_i)\}\).

In our method (see Section~\ref{rl_abstention} in the main paper), we further \emph{optimize} these thresholds via reinforcement learning to improve accuracy, set size, and abstention outcomes. Nonetheless, the conformal criterion ensures that the proportion of samples for which the correct label lies outside the conformal set remains bounded by \(\alpha\).

\section{Training via Reinforcement Learning} \label{rl_algorithm}

\cref{alg:conformal_rl} summarizes the training of our proposed adaptive conformal environment and abstention policy.

\begin{algorithm}[h]
   \caption{Conformalized Abstention Policy with Reinforcement Learning}
   \label{alg:conformal_rl}
\begin{algorithmic}
   \State \textbf{Input:} Calibration dataset $\mathcal{D}_{\mathrm{cal}}$, LLM/VLM model $M$, learning rate $\eta$, policy network $\pi_\theta$, cost function $C(\alpha, \beta)$
   \State \textbf{Output:} Optimized thresholds $\hat{q}_{\mathrm{predict}}$, $\hat{q}_{\mathrm{abstain}}$
   \For{each episode}
       \State Sample $\alpha \sim \mathcal{N}(\mu_\theta^{(\alpha)}, \sigma_\theta^{(\alpha)2})$ and $\beta \sim \mathcal{N}(\mu_\theta^{(\beta)}, \sigma_\theta^{(\beta)2})$
       \State Compute nonconformity scores $s_i = 1 - p_{y_i}(\mathbf{x}_i)$ for all $(\mathbf{x}_i, y_i) \in \mathcal{D}_{\mathrm{cal}}$
       \State Calculate quantile thresholds:
       \State \hspace{0.5cm} $\hat{q}_{\mathrm{predict}} = \text{Quantile}(\{s_i\}, (n + 1)(1 - \alpha)/n)$
       \State \hspace{0.5cm} $\hat{q}_{\mathrm{abstain}} = \text{Quantile}(\{s_i\}, (n + 1)(1 - \beta)/n)$
       \For{each test sample $\mathbf{x}$}
           \State Compute $s(\mathbf{x}) = 1 - \max_i p_i(\mathbf{x})$
           \State Compute action probabilities:
           \State \hspace{0.5cm} $p_{\mathrm{single}} = \sigma(-c[s(\mathbf{x}) - \hat{q}_{\mathrm{predict}}])$
           \State \hspace{0.5cm} $p_{\mathrm{abstain}} = \sigma(c[s(\mathbf{x}) - \hat{q}_{\mathrm{abstain}}])$
           \State \hspace{0.5cm} $p_{\mathrm{set}} = 1 - p_{\mathrm{single}} - p_{\mathrm{abstain}}$
           \State Sample action $a \in \{\text{single}, \text{set}, \text{abstain}\}$ based on these probabilities
       \EndFor
       \State Evaluate performance and compute cost $C(\alpha, \beta)$
       \State Compute reward $R(\alpha, \beta) = -C(\alpha, \beta)$
       \State Update policy parameters:
       \State \hspace{0.5cm} $\theta \leftarrow \theta + \eta \cdot R(\alpha, \beta) \nabla_\theta \log \pi_\theta(\alpha, \beta)$
   \EndFor
\end{algorithmic}
\end{algorithm}

\section{Experimental Details} \label{ap_exp}
\subsection{Datasets}
This section provides details about the datasets used in our evaluation. We focus on two groups of datasets: one for Vision-Language Models (VLMs) on multiple-choice visual question answering (MCQA) tasks, and another for Language Models (LLMs) across multiple tasks. Below, we describe the VLM datasets in detail.

\underline{\textbf{Datasets for VLMs}} \hspace{1em}
For the evaluation of Vision-Language Models, we focus on multiple-choice visual question answering (MCQA) tasks. The following datasets are used, each addressing specific aspects of visual understanding and reasoning:

\textbf{Comprehensive Visual Understanding and Reasoning} \hspace{1em}
The \textbf{MMBench} dataset \cite{liu2025mmbench} evaluates a model's ability to perform tasks across 20 distinct capability dimensions, organized into two broad categories: perception and reasoning. It includes approximately 3,000 multiple-choice questions in the test set and 4,000 in the development set. Since the test set lacks ground truth answers, we use the development set for evaluation. Questions have between two to four answer options, and we standardize them to four options by adding randomly sampled incorrect answers when necessary.

\textbf{Out-of-Distribution Instance Counting} \hspace{1em}
The \textbf{OODCV-VQA} dataset \cite{zhao2022ood}, part of a safety evaluation benchmark, focuses on out-of-distribution instance counting tasks. We specifically use the "Digits" subset, where each question involves counting objects in images and has two answer options. To ensure consistency, we augment the options to four by randomly sampling incorrect digits not included in the original options.

\textbf{Scientific Reasoning with Visual Context} \hspace{1em}
The \textbf{ScienceQA} dataset \cite{lu2022learn} tests a model's ability to answer scientific questions across three subjects: natural science, language science, and social science. We use the validation and test portions, selecting only image-based questions with closed-choice answers. The number of options ranges from two to five, and we standardize them to four by adding or removing incorrect options as needed, resulting in 3,952 questions.

\textbf{Multimodal Scene and Instance Understanding} \hspace{1em}
The \textbf{SEED-Bench} dataset \cite{li2023seed} evaluates a model's capabilities across 12 dimensions, including spatial and temporal understanding. For our evaluation, we focus on dimensions 1-9, which are related to image modality and assess tasks such as scene understanding, instance identity, and instance location. We use 14,233 questions from this benchmark, each with four answer options.

\textbf{Diagram Understanding and Reasoning} \hspace{1em}
The \textbf{AI2D} dataset \cite{kembhavi2016diagram} contains over 5,000 diagrams from elementary school science topics, accompanied by more than 15,000 multiple-choice questions. These questions test a model's ability to understand and reason about information presented in diagrams. Each question already includes four answer options, so no modifications are required.

\textbf{Standardization of Options}  \hspace{1em}
To ensure consistency across all datasets, we append two additional choices ("I don’t know" and "None of the above") to the list of options for each question, expanding the total number of options to six. This provides a uniform evaluation framework for all tasks.

\underline{\textbf{Datasets for LLMs:}} \hspace{1em}
To comprehensively evaluate the capabilities of Language Models, we focus on five key natural language processing (NLP) tasks: question answering, reading comprehension, commonsense inference, dialogue response selection, and document summarization. Each task is formulated as a multiple-choice question answering (MCQA) task, where the model must select the correct answer from six possible options (A, B, C, D, E, and F). Below, we describe the datasets used for each task.

\textbf{Question Answering (QA)} \hspace{1em}
For the question answering task, we use the \textbf{MMLU} dataset \cite{hendrycks2020measuring}. MMLU evaluates an LLM's ability to leverage its extensive world knowledge to answer questions across 57 diverse subjects, including elementary mathematics, US history, computer science, and law. These subjects are grouped into four broad categories: humanities, social sciences, STEM, and others (e.g., business, health, and miscellaneous topics). We sample 2,500 instances from each category, resulting in a total of 10,000 questions for evaluation.

\textbf{Reading Comprehension (RC)} \hspace{1em}
The reading comprehension task assesses an LLM's ability to understand and analyze textual contexts, infer meanings, and draw conclusions based on the provided information. For this task, we use the \textbf{CosmosQA} dataset \cite{huang2019cosmos}. CosmosQA focuses on reasoning beyond explicit text spans, requiring models to interpret everyday narratives and infer implicit meanings. Since ground truth labels for the test set are unavailable, we sample 10,000 instances from the training and development sets for evaluation.

\textbf{Commonsense Inference (CI)} \hspace{1em}
Commonsense inference evaluates an LLM's ability to reason about relationships between concepts and events using background knowledge and commonsense understanding. We employ the \textbf{HellaSwag} dataset \cite{zellers2019hellaswag} for this task. HellaSwag focuses on natural language inference, requiring models to select the most plausible continuation of a given event description. Similar to CosmosQA, we sample 10,000 instances from the training and development sets of HellaSwag for evaluation.

\textbf{Dialogue Response Selection (DRS)} \hspace{1em} 
The dialogue response selection task tests an LLM's ability to understand conversational contexts and select appropriate responses that maintain coherence and relevance. For this task, we use the dialogue data from the \textbf{HaluEval} benchmark \cite{li2023halueval}, specifically the \textbf{HaluDial} subset. HaluDial is derived from OpenDialKG (Moon et al., 2019), a knowledge-grounded dialogue dataset, and consists of exactly 10,000 instances for evaluation.

\textbf{Document Summarization (DS)} \hspace{1em}
Document summarization evaluates an LLM's ability to comprehend and condense lengthy documents into concise summaries that capture the main ideas and key information. For this task, we use the summarization data from the \textbf{HaluEval} benchmark \cite{li2023halueval}, specifically the \textbf{HaluSum} subset. HaluSum is derived from the CNN/Daily Mail dataset (See et al., 2017), which focuses on summarizing news articles, and contains exactly 10,000 instances for evaluation.

\textbf{Standardization of Options} \hspace{1em}
To ensure consistency across all datasets, we standardize the number of answer options. While MMLU, CosmosQA, and HellaSwag originally provide four options per question, HaluDial and HaluSum include only two options. For the latter, we augment the options by randomly selecting additional choices from other questions within the same dataset. Additionally, we append two universal options, "I don’t know" and "None of the above," to every question, resulting in six possible options for all datasets.

\subsection{Prompting Templates} \label{prompt_template}
This section describes the prompting strategies and templates used for evaluating VLMs and LLMs. The templates are designed to ensure consistent and effective evaluation across different model families.

\textbf{Prompting Templates for VLMs:} \hspace{1em}
For Vision-Language Models, we adopt a standardized prompting strategy tailored for multiple-choice visual question answering (MCQA) tasks. The template is inspired by the approach used in LLaVA (Liu et al., 2024) and is designed to maximize compatibility across various VLM architectures. The prompt structure is as follows:

\begin{itemize}
    \item The prompt begins with an attached image, serving as the primary visual input for the model.
    \item The question text follows, optionally including a hint if available.
    \item Six answer options are presented line by line, each prefixed with its corresponding letter (A-F). Additional choices such as \textit{``I don't know''} and \textit{``None of the above''} are also included to account for uncertainty.
    \item The prompt concludes with an explicit instruction: \textit{``Answer with the option's letter from the given choices directly.''}
    \item For models requiring a specific multimodal token format, the image is prepended with a designated image token, such as \texttt{\textless image\textgreater} or model-specific tokens like \texttt{DEFAULT\_IMAGE\_TOKEN}, ensuring compatibility with different VLM architectures.
    \item Depending on the model type, the prompt is wrapped within a structured conversational template. Examples include Vicuna-style conversation for LLaVA, structured input for CogVLM, Yi-VL, and Qwen-VL, ensuring consistency in processing.
\end{itemize}

To accommodate the constraints of single-image input in many VLMs, we intentionally exclude few-shot demonstrations from the prompts. The templates are adapted for specific model families, including LLaVA, Yi-VL, Qwen, Monkey, MoE-LLaVA, mPLUG-Owl, and MobileVLM, using their respective official repositories. For CogAgent and InternLM-XComposer2, the templates are sourced from their Hugging Face repositories.

Below is the base prompt template format utilized in our experiments:

\begin{table}[h]
    \centering
    \begin{minipage}{0.45\textwidth}
    \fbox{%
    \begin{minipage}{\textwidth}
    \texttt{%
    \textbf{Image}: \{<Image>\} \\[6pt]
    \textbf{Question}: \{Question Text\} \\[6pt]
    \textbf{Hint}: \{Optional Hint Text\} \\[6pt]
    \textbf{Choices:} \\[4pt]
    A. \{Content of option A\} \\
    B. \{Content of option B\} \\
    C. \{Content of option C\} \\
    D. \{Content of option D\} \\
    E. I don’t know \\
    F. None of the above \\[6pt]
    \textbf{Answer with the option's letter from the given choices directly.}
    }
    \end{minipage}%
    }
    \end{minipage}
    \caption{This table presents the structured prompt template used for multiple-choice question answering in VLMs. Each prompt consists of an attached image, a question (optionally with a hint), and six answer choices, including uncertainty options (\textit{"I don't know"} and \textit{"None of the above"}). To maintain consistency across different VLM architectures, model-specific input tokens (e.g., \texttt{\textless image\textgreater} or \texttt{DEFAULT\_IMAGE\_TOKEN}) are included when necessary. The prompt concludes with a direct instruction for the model to answer using the letter corresponding to its chosen option.}
\end{table}

This template ensures a consistent format for evaluating VLMs across diverse datasets and tasks. The inclusion of six options (A-F) standardizes the evaluation process, while the explicit instruction at the end guides the model to provide a direct response.

\textbf{Prompting Templates for LLMs:} \hspace{1em}
For Language Models, we employ a \textbf{base prompting strategy} without any strategy such as shared instruction or task-specific instruction prompt in order maintain a standardized approach across evaluations. This prompt is designed to evaluate several model performances across multiple tasks, including question answering (QA), reading comprehension (RC), commonsense inference (CI), dialogue response selection (DRS), and document summarization (DS). The prompt template is designed to provide a consistent structure for all tasks while accommodating task-specific information. The structure of the base prompt is as follows:
\begin{itemize}
    \item The prompt begins with the task-specific context, dialogue, or document:
    \begin{itemize}
        \item For \textit{QA tasks}, no background information is included.
        \item For \textit{RC and CI tasks}, the keyword \textit{``Context''} introduces the relevant background information.
        \item For \textit{DRS tasks}, the keyword \textit{``Dialogue''} incorporates the dialogue history.
        \item For \textit{DS tasks}, the keyword \textit{``Document''} includes the document content.
    \end{itemize}
    \item The question is presented next, followed by a list of six answer options:
    \begin{itemize}
        \item Four standard options (A-D) with task-specific content.
        \item Two additional options: \textit{``I don’t know''} and \textit{``None of the above.''}
    \end{itemize}
    \item The model is instructed to provide the letter corresponding to the correct answer.
\end{itemize}

Below is the base prompt template format utilized in our experiments:

\begin{table}[h]
    \centering
    \begin{minipage}{0.45\textwidth}
    \fbox{%
    \begin{minipage}{\textwidth}
    \texttt{%
    \textbf{Context/Dialogue/Document}: \{The context or dialogue history or document corresponding to the following question\} \\[6pt]
    \textbf{Question}: \{Question\} \\[6pt]
    \textbf{Choices:} \\[4pt]
    A. \{Content of option A\} \\
    B. \{Content of option B\} \\
    C. \{Content of option C\} \\
    D. \{Content of option D\} \\
    E. I don’t know \\
    F. None of the above \\[6pt]
    \textbf{Answer with the option's letter from the given choices directly.}
    }
    \end{minipage}%
    }
    \end{minipage}
    \caption{This table presents the structured prompt template used for multiple-choice question answering in LLMs. In the QA setting, no additional background information is included. For the RC and CI tasks, the keyword "Context" is introduced to incorporate relevant background information. Similarly, the keywords "Dialogue" and "Document" are used for DRS and DS tasks, respectively, to integrate necessary context.}
\end{table}

This template ensures a standardized format for evaluating LLMs across diverse tasks. For instruction-finetuned LLMs, the entire prompt input is treated as the user's message, and the "apply\_chat\_template" function is used to transform the prompt into a chat format, ensuring compatibility with chat-based models.

\subsection{Additional Models Evaluated}
This appendix provides additional details on the Vision-Language Models (VLMs) and Large Language Models (LLMs) that complement those discussed in the main body of the paper. These models were evaluated to broaden the scope of our analysis across different architectures and parameter scales.

For \textbf{VLMs}, we include results for several additional models. \textbf{Monkey-Chat 7B} \cite{li2024monkey} is a vision-language model optimized for multimodal chat-based reasoning. \textbf{InternLM-XComposer2-VL 7B} \cite{dong2024internlm} enhances vision-language interaction through structured prompts, while \textbf{Yi-VL 6B} \cite{young2024yi} is a smaller variant of the Yi-VL series, designed for effective image-text understanding. \textbf{CogAgent-VQA 7B} \cite{hong2024cogagent} focuses on visual question answering with robust reasoning capabilities. \textbf{MobileVLMV2 7B} \cite{chu2024mobilevlm} is a lightweight VLM tailored for mobile and edge applications. Additionally, \textbf{mPLUG-Owl2 7B} \cite{ye2024mplug} offers strong image-text understanding capabilities, and \textbf{Qwen-VL-Chat 7B} \cite{bai2023qwen} is designed for dialogue-driven multimodal interactions.

For \textbf{LLMs}, we also present results for the \textbf{Llama-2 7B and 13B} models \cite{touvron2023llama}, which serve as foundation models with strong text generation and reasoning capabilities.

The inclusion of these models extends the scope of our evaluation, providing a comprehensive comparison across diverse architectures and parameter scales.

\subsection{Additional Results} \label{additional_results}
\subsubsection{Results of VLMs}
Additional results in \autoref{tab:auroc_auarc_ap_vlm}, \autoref{tab:coverage_ap_vl}, and \autoref{tab:acc_ap_vl} demonstrate the performance of multiple VLMs mentioned in \autoref{ap_exp} in terms of uncertainty quantification i.e. AUROC vs AUARC, coverage rate vs set size, and accuracy vs expected calibration error respectively. As shown in these tables, our CAP model outperforms other methods in hallucination detection and uncertainty guided selective generation while satisfying the minimum coverage rate of 90\% in all instances and maintaining the middle ground in set size balancing this for all cases.

\begin{table*}[h!]
\centering
\caption{Evaluation of uncertainty quantification: Comparative analysis of the proposed CAP (Ours) meth with standard Least Ambiguous set-valued Classifiers (LAC) \cite{sadinle2019least}, and Adaptive Prediction Sets (APS) \cite{romano2020classification} methods (the best values are in bold). The comparison includes different datasets and VLM models, with quality of uncertainty quantification evaluated using the Area Under the Receiver Operating Characteristic (AUROC) and the Area Under the Accuracy-Rejection Curve (AUARC). Best values are in bold.}
\label{tab:auroc_auarc_ap_vlm}
\begin{adjustbox}{width=\textwidth}
\begin{tabular}{llcccccccccccc}
\toprule
\multirow{2}{*}{\textbf{Model}} & \multirow{2}{*}{\textbf{Method}} & \multicolumn{6}{c}{\textbf{AUROC} ↑ (Hallucination Detection)} & \multicolumn{6}{c}{\textbf{AUARC} ↑ (Uncertainty guided selective generation)} \\
\cmidrule(lr){3-8} \cmidrule(lr){9-14}
& & \textbf{MMB} & \textbf{OOD} & \textbf{SQA} & \textbf{SB} & \textbf{AI2D} & \textbf{Avg.} & \textbf{MMB} & \textbf{OOD} & \textbf{SQA} & \textbf{SB} & \textbf{AI2D} & \textbf{Avg.} \\
\midrule
\multirow{3}{*}{Monkey-Chat-7B}
& APS & 0.6360 & 0.2994 & 0.4916 & 0.5304 & 0.7662 & 0.5447 & 0.9285 & 0.7640 & 0.8950 & 0.8579 & 0.8635 & 0.8618 \\
& LAC & 0.6855 & 0.4151 & 0.6501 & 0.4596 & 0.6716 & 0.5764 & 0.8988 & 0.7137 & 0.8646 & 0.8028 & 0.8413 & 0.8242 \\
& Ours & \textbf{0.7241} & \textbf{0.5182} & \textbf{0.6739} & \textbf{0.5550} & \textbf{0.7340} & \textbf{0.6410} & \textbf{0.9652} & \textbf{0.9174} & \textbf{0.9686} & \textbf{0.9335} & \textbf{0.9747} & \textbf{0.9519} \\
\cmidrule(lr){1-14}
\multirow{3}{*}{InternLM-XComposer2-VL-7B}
& APS & 0.6648 & 0.5000 & 0.7010 & 0.4731 & 0.6421 & 0.5962 & 0.9267 & 0.7999 & 0.9537 & 0.8642 & 0.8879 & 0.8865 \\
& LAC & 0.6861 & 0.5275 & 0.7524 & 0.4810 & 0.6429 & 0.6180 & 0.9001 & 0.7807 & 0.9301 & 0.8322 & 0.8624 & 0.8611 \\
& Ours & \textbf{0.7068} & \textbf{0.6295} & \textbf{0.7909} & \textbf{0.5773} & \textbf{0.7035} & \textbf{0.6816} & \textbf{0.9667} & \textbf{0.9219} & \textbf{0.9762} & \textbf{0.9261} & \textbf{0.9624} & \textbf{0.9507} \\
\cmidrule(lr){1-14}
\multirow{3}{*}{CogAgent-VQA-7B}
& APS & 0.6416 & 0.3448 & 0.4930 & 0.5274 & \textbf{0.5341} & 0.5082 & 0.9240 & 0.7469 & 0.8448 & 0.8741 & 0.7828 & 0.8345 \\
& LAC & 0.7003 & 0.3396 & 0.5693 & 0.4844 & 0.4245 & 0.5036 & 0.8996 & 0.7015 & 0.8130 & 0.8251 & 0.7483 & 0.7975 \\
& Ours & \textbf{0.7432} & \textbf{0.5175} & \textbf{0.6355} & \textbf{0.5346} & 0.4867 & \textbf{0.5835} & \textbf{0.9746} & \textbf{0.9264} & \textbf{0.9608} & \textbf{0.9471} & \textbf{0.9553} & \textbf{0.9528} \\
\cmidrule(lr){1-14}
\multirow{3}{*}{MobileVLM-v2-7B}
& APS & \textbf{0.7646} & 0.3836 & 0.5652 & 0.4153 & 0.4867 & 0.5231 & 0.9610 & 0.8712 & 0.9503 & 0.9296 & 0.8508 & 0.9126 \\
& LAC & 0.7168 & 0.3963 & 0.6777 & 0.4617 & 0.3539 & 0.5213 & 0.9307 & 0.8196 & 0.9133 & 0.8673 & 0.7866 & 0.8635 \\
& Ours & 0.7368 & \textbf{0.5214} & \textbf{0.6672} & \textbf{0.5695} & \textbf{0.4633} & \textbf{0.5916} & \textbf{0.9682} & \textbf{0.9169} & \textbf{0.9698} & \textbf{0.9194} & \textbf{0.9103} & \textbf{0.9369} \\
\cmidrule(lr){1-14}
\multirow{3}{*}{mPLUG-Owl2-7B}
& APS & 0.5347 & 0.4550 & 0.3855 & 0.3421 & 0.4862 & 0.4407 & 0.9625 & 0.8706 & 0.9111 & 0.9134 & 0.8628 & 0.9041 \\
& LAC & 0.6575 & 0.5069 & 0.4828 & 0.3692 & 0.3432 & 0.4719 & 0.9247 & 0.8383 & 0.8677 & 0.8447 & 0.7949 & 0.8541 \\
& Ours & \textbf{0.6920} & \textbf{0.6316} & \textbf{0.5766} & \textbf{0.5169} & \textbf{0.4792} & \textbf{0.5793} & \textbf{0.9650} & \textbf{0.9244} & \textbf{0.9415} & \textbf{0.9051} & \textbf{0.9066} & \textbf{0.9285} \\
\cmidrule(lr){1-14}
\multirow{3}{*}{Qwen-VL-Chat-7B}
& APS & 0.6230 & 0.4610 & 0.5156 & 0.4990 & 0.6786 & 0.5554 & 0.8882 & 0.6872 & 0.8052 & 0.7918 & 0.8536 & 0.8052 \\
& LAC & 0.6557 & 0.4057 & 0.5394 & 0.4624 & 0.6511 & 0.5429 & 0.8593 & 0.6425 & 0.7851 & 0.7616 & 0.8292 & 0.7755 \\
& Ours & \textbf{0.6907} & \textbf{0.5348} & \textbf{0.6079} & \textbf{0.5481} & \textbf{0.6990} & \textbf{0.6161} & \textbf{0.9600} & \textbf{0.9171} & \textbf{0.9313} & \textbf{0.9262} & \textbf{0.9688} & \textbf{0.9407} \\
\cmidrule(lr){1-14}
\multirow{3}{*}{Yi-VL-6B}
& APS & 0.6094 & 0.3616 & 0.5674 & 0.4747 & 0.4486 & 0.4923 & 0.9517 & 0.8790 & 0.9012 & 0.9023 & 0.8747 & 0.9018 \\
& LAC & 0.6785 & 0.4638 & 0.5780 & 0.4387 & 0.4246 & 0.5167 & 0.9198 & 0.8461 & 0.8606 & 0.8501 & 0.8276 & 0.8608 \\
& Ours & \textbf{0.7432} & \textbf{0.6284} & \textbf{0.6446} & \textbf{0.5471} & \textbf{0.5331} & \textbf{0.6193} & \textbf{0.9676} & \textbf{0.9228} & \textbf{0.9551} & \textbf{0.9187} & \textbf{0.9312} & \textbf{0.9391} \\
\cmidrule(lr){1-14}
MoE-LLaVA-Phi2-2.7B
& APS & 0.6359 & 0.5785 & 0.5248 & 0.4199 & 0.4282 & 0.5175 & 0.9446 & 0.7610 & 0.8522 & 0.8815 & 0.8061 & 0.8491 \\
& LAC & 0.6864 & 0.5614 & 0.4810 & 0.4849 & 0.4142 & 0.5256 & 0.9070 & 0.7360 & 0.8083 & 0.8298 & 0.7576 & 0.8077 \\
& Ours & \textbf{0.7360} & \textbf{0.7147} & \textbf{0.5329} & \textbf{0.5772} & \textbf{0.5352} & \textbf{0.6192} & \textbf{0.9655} & \textbf{0.9477} & \textbf{0.9412} & \textbf{0.9342} & \textbf{0.9284} & \textbf{0.9434} \\
\bottomrule
\end{tabular}
\end{adjustbox}
\end{table*}

\begin{table*}[h!]
\centering
\caption{Evaluation of coverage rate (\%)  and set size: Comparative analysis of the proposed CAP (Ours) meth with standard LAC \cite{sadinle2019least}, and APS \cite{romano2020classification} methods. The comparison includes different datasets and VLM models, show casing the satisfied coverage rate and balanced set sizes produced by our method with \underline{underlined} values.}
\label{tab:coverage_ap_vl}
\begin{adjustbox}{width=\textwidth}
\begin{tabular}{llcccccccccccc}
\toprule
\multirow{2}{*}{\textbf{Model}} & \multirow{2}{*}{\textbf{Method}} & \multicolumn{6}{c}{\textbf{Coverage (\%)} ↑} & \multicolumn{6}{c}{\textbf{SS} ↓} \\
\cmidrule(lr){3-8} \cmidrule(lr){9-14}
& & \textbf{MMB} & \textbf{OOD} & \textbf{SQA} & \textbf{SB} & \textbf{AI2D} & \textbf{Avg.} & \textbf{MMB} & \textbf{OOD} & \textbf{SQA} & \textbf{SB} & \textbf{AI2D} & \textbf{Avg.} \\
\midrule
\multirow{3}{*}{Monkey-Chat-7B}
& APS & 97.85 & 96.27 & 98.84 & 96.50 & 97.28 & 97.35 & 3.787 & 3.669 & 3.455 & 4.013 & 4.040 & 3.793 \\
& LAC & 89.45 & 88.75 & 90.44 & 89.22 & 90.98 & 89.77 & 1.611 & 2.181 & 1.656 & 2.505 & 2.346 & 2.060 \\
& Ours & \underline{93.33} & \underline{91.35} & \underline{94.69} & \underline{92.03} & \underline{94.36} & \underline{93.15} & \underline{2.383} & \underline{2.987} & \underline{2.567} & \underline{3.285} & \underline{3.017} & \underline{2.848} \\
\cmidrule(lr){1-14}
\multirow{3}{*}{InternLM-XComposer2-VL-7B}
& APS & 96.57 & 92.48 & 98.74 & 94.46 & 96.28 & 95.71 & 3.479 & 2.575 & 3.383 & 3.578 & 3.673 & 3.338 \\
& LAC & 89.17 & 88.96 & 89.58 & 89.90 & 89.87 & 89.50 & 1.966 & 1.819 & 1.443 & 2.584 & 2.358 & 2.034 \\
& Ours & \underline{93.51} & \underline{90.01} & \underline{92.97} & \underline{90.21} & \underline{92.43} & \underline{91.82} & \underline{2.763} & \underline{2.457} & \underline{1.926} & \underline{3.123} & \underline{2.902} & \underline{2.634} \\
\cmidrule(lr){1-14}
\multirow{3}{*}{CogAgent-VQA-7B}
& APS & 98.54 & 95.64 & 97.47 & 95.94 & 93.83 & 96.28 & 2.997 & 2.944 & 2.833 & 2.996 & 3.240 & 3.002 \\
& LAC & 90.68 & 90.37 & 90.14 & 89.36 & 90.65 & 90.24 & 1.665 & 1.971 & 1.895 & 1.975 & 2.640 & 2.030 \\
& Ours & \underline{94.15} & \underline{92.12} & \underline{93.53} & \underline{93.59} & \underline{94.25} & \underline{93.53} & \underline{2.175} & \underline{2.757} & \underline{2.506} & \underline{3.015} & \underline{3.652} & \underline{2.821} \\
\cmidrule(lr){1-14}
\multirow{3}{*}{MobileVLM-v2-7B}
& APS & 97.99 & 96.27 & 99.04 & 97.67 & 95.87 & 97.37 & 3.439 & 3.074 & 3.610 & 3.494 & 3.866 & 3.497 \\
& LAC & 89.63 & 90.86 & 89.07 & 89.49 & 90.23 & 89.86 & 1.629 & 2.153 & 1.625 & 2.106 & 2.925 & 2.088 \\
& Ours & \underline{92.78} & \underline{91.49} & \underline{94.18} & \underline{91.53} & \underline{90.23} & \underline{92.04} & \underline{2.159} & \underline{2.623} & \underline{2.329} & \underline{2.567} & \underline{3.448} & \underline{2.625} \\
\cmidrule(lr){1-14}
\multirow{3}{*}{mPLUG-Owl2-7B}
& APS & 99.27 & 95.08 & 98.18 & 97.09 & 95.81 & 97.09 & 3.365 & 2.485 & 3.346 & 3.431 & 3.379 & 3.201 \\
& LAC & 89.81 & 89.52 & 91.40 & 89.94 & 90.34 & 90.20 & 1.727 & 1.689 & 2.070 & 2.432 & 2.624 & 2.109 \\
& Ours & \underline{92.65} & \underline{91.28} & \underline{91.91} & \underline{91.94} & \underline{89.52} & \underline{91.46} & \underline{2.080} & \underline{2.062} & \underline{2.401} & \underline{2.753} & \underline{2.934} & \underline{2.446} \\
\cmidrule(lr){1-14}
\multirow{3}{*}{Qwen-VL-Chat-7B}
& APS & 96.21 & 93.46 & 92.01 & 92.97 & 96.70 & 94.27 & 3.413 & 3.589 & 3.349 & 3.692 & 3.796 & 3.568 \\
& LAC & 88.44 & 88.75 & 88.11 & 89.21 & 89.69 & 88.84 & 1.990 & 3.049 & 2.451 & 2.945 & 2.394 & 2.566 \\
& Ours & \underline{93.01} & \underline{90.37} & \underline{90.44} & \underline{91.06} & \underline{93.94} & \underline{91.76} & \underline{2.665} & \underline{3.673} & \underline{3.074} & \underline{3.504} & \underline{3.061} & \underline{3.195} \\
\cmidrule(lr){1-14}
\multirow{3}{*}{Yi-VL-6B}
& APS & 98.63 & 95.43 & 98.13 & 95.94 & 96.77 & 96.98 & 3.326 & 2.506 & 3.503 & 3.116 & 3.491 & 3.189 \\
& LAC & 90.22 & 89.94 & 89.78 & 89.84 & 91.01 & 90.16 & 1.621 & 1.536 & 2.009 & 2.106 & 2.514 & 1.957 \\
& Ours & \underline{93.38} & \underline{91.35} & \underline{92.41} & \underline{90.11} & \underline{90.99} & \underline{91.61} & \underline{2.082} & \underline{1.962} & \underline{2.574} & \underline{2.522} & \underline{2.927} & \underline{2.414} \\
\cmidrule(lr){1-14}
\multirow{3}{*}{MoE-LLaVA-Phi2-2.7B}
& APS  & 99.50 & 93.95 & 97.07 & 97.60 & 96.50 & 96.92 & 3.4961 & 2.2651 & 3.2969 & 3.3834 & 3.3425 & 3.1568 \\
& LAC  & 89.26 & 89.17 & 90.84 & 89.66 & 90.08 & 89.80 & 1.5843 & 1.5204 & 2.0976 & 2.0021 & 2.4891 & 1.9387 \\
& Ours & \underline{92.10} & \underline{91.63} & \underline{92.67} & \underline{91.34} & \underline{90.84} & \underline{91.72} &\underline{2.0461} & \underline{2.1280} & \underline{2.6631} & \underline{2.5178} & \underline{2.9515} & \underline{2.4613} \\
\bottomrule
\end{tabular}
\end{adjustbox}
\end{table*}

\begin{table*}[h!]
\centering
\caption{Evaluation of accuracy (\%)  and ECE: Comparative analysis of the proposed CAP (Ours) meth with standard LAC \cite{sadinle2019least}, and APS \cite{romano2020classification} methods. The comparison includes different datasets and VLM models, demonstrating the significant reduction in expected calibration error while improving overall accuracy.}
\label{tab:acc_ap_vl}
\begin{adjustbox}{width=\textwidth}
\begin{tabular}{llcccccccccccc}
\toprule
\multirow{2}{*}{\textbf{Model}} & \multirow{2}{*}{\textbf{Method}} & \multicolumn{6}{c}{\textbf{Accuracy (\%)} ↑} & \multicolumn{6}{c}{\textbf{ECE} ↓} \\
\cmidrule(lr){3-8} \cmidrule(lr){9-14}
& & \textbf{MMB} & \textbf{OOD} & \textbf{SQA} & \textbf{SB} & \textbf{AI2D} & \textbf{Avg.} & \textbf{MMB} & \textbf{OOD} & \textbf{SQA} & \textbf{SB} & \textbf{AI2D} & \textbf{Avg.} \\
\midrule
\multirow{3}{*}{Monkey-Chat-7B}
& APS & 81.40 & 76.75 & 79.27 & 72.33 & 73.58 & 76.67 & 0.2134 & 0.3583 & 0.2696 & 0.2825 & 0.3237 & 0.2895 \\
& LAC & 81.26 & 77.06 & 80.89 & 72.58 & 74.53 & 77.26 & 0.1480 & 0.3042 & 0.1857 & 0.2494 & 0.2608 & 0.2296 \\
& Ours & \textbf{84.03} & \textbf{78.61} & \textbf{82.37} & \textbf{74.57} & \textbf{78.70} & \textbf{79.66} & \textbf{0.0159} & \textbf{0.0336} & \textbf{0.0190} & \textbf{0.0336} & \textbf{0.0207} & \textbf{0.0246} \\
\cmidrule(lr){1-14}
\multirow{3}{*}{InternLM-XComposer2-VL-7B}
& APS & 76.72 & 77.04 & 81.73 & 70.80 & 72.52 & 75.76 & 0.1805 & 0.2179 & 0.1727 & 0.2203 & 0.2417 & 0.2066 \\
& LAC & 77.30 & 77.88 & 82.72 & 71.72 & 73.13 & 76.55 & 0.1284 & 0.1871 & 0.1073 & 0.2093 & 0.1776 & 0.1620 \\
& Ours & \textbf{78.46} & \textbf{78.07} & \textbf{84.10} & \textbf{72.15} & \textbf{75.02} & \textbf{77.56} & \textbf{0.0341} & \textbf{0.0173} & \textbf{0.0246} & \textbf{0.0593} & \textbf{0.0289} & \textbf{0.0328} \\
\cmidrule(lr){1-14}
\multirow{3}{*}{CogAgent-VQA-7B}
& APS & 81.07 & 76.86 & 75.98 & 76.03 & 67.99 & 75.59 & 0.2310 & 0.3614 & 0.3043 & 0.2327 & 0.3148 & 0.2889 \\
& LAC & 80.55 & 76.91 & 76.16 & 75.48 & 67.98 & 75.42 & 0.1608 & 0.3407 & 0.2340 & 0.2151 & 0.2912 & 0.2484 \\
& Ours & \textbf{83.29} & \textbf{79.72} & \textbf{79.25} & \textbf{76.18} & \textbf{69.60} & \textbf{77.61} & \textbf{0.0134} & \textbf{0.0470} & \textbf{0.0366} & \textbf{0.0246} & \textbf{0.0110} & \textbf{0.0265} \\
\cmidrule(lr){1-14}
\multirow{3}{*}{MobileVLM-v2-7B}
& APS & 80.79 & 74.86 & 78.11 & \textbf{74.30} & 63.37 & 74.29 & 0.1460 & 0.2230 & 0.1790 & 0.1691 & 0.2838 & 0.2002 \\
& LAC & 80.85 & 75.23 & 78.90 & 74.28 & 63.41 & 74.53 & 0.1202 & 0.2239 & 0.1363 & 0.1927 & 0.2932 & 0.1933 \\
& Ours & \textbf{82.12} & \textbf{75.38} & \textbf{79.66} & 73.95 & \textbf{64.77} & \textbf{75.18} & \textbf{0.0464} & \textbf{0.0503} & \textbf{0.0306} & \textbf{0.0780} & \textbf{0.0643} & \textbf{0.0539} \\
\cmidrule(lr){1-14}
\multirow{3}{*}{mPLUG-Owl2-7B}
& APS & 78.94 & 80.48 & 73.87 & \textbf{70.54} & 65.42 & 73.85 & 0.1578 & 0.1858 & 0.2260 & 0.1982 & 0.2588 & 0.2053 \\
& LAC & 78.67 & 79.91 & 74.53 & 69.76 & 64.91 & 73.55 & 0.1453 & 0.1574 & 0.2093 & 0.2384 & 0.2857 & 0.2072 \\
& Ours & \textbf{79.78} & \textbf{80.88} & \textbf{75.28} & 69.88 & \textbf{65.96} & \textbf{74.36} & \textbf{0.0473} & \textbf{0.0352} & \textbf{0.0439} & \textbf{0.0863} & \textbf{0.0668} & \textbf{0.0559} \\
\cmidrule(lr){1-14}
\multirow{3}{*}{Qwen-VL-Chat-7B}
& APS & 76.71 & 62.45 & 70.04 & 66.86 & 71.25 & 69.46 & 0.2350 & 0.3944 & 0.2487 & 0.2937 & 0.3211 & 0.2986 \\
& LAC & 76.78 & 63.91 & 71.12 & 67.42 & 72.10 & 70.27 & 0.1653 & 0.3684 & 0.2249 & 0.2739 & 0.2510 & 0.2567 \\
& Ours & \textbf{79.70} & \textbf{67.14} & \textbf{73.18} & \textbf{69.89} & \textbf{76.11} & \textbf{73.20} & \textbf{0.0167} & \textbf{0.0316} & \textbf{0.0403} & \textbf{0.0379} & \textbf{0.0087} & \textbf{0.0270} \\
\cmidrule(lr){1-14}
\multirow{3}{*}{Yi-VL-6B}
& APS & 80.54 & \textbf{81.23} & 74.40 & \textbf{74.59} & 67.98 & 75.75 & 0.1694 & 0.1720 & 0.2553 & 0.1837 & 0.2621 & 0.2085 \\
& LAC & 80.70 & 80.67 & 74.77 & 74.18 & 68.29 & 75.72 & 0.1263 & 0.1594 & 0.2136 & 0.2019 & 0.2565 & 0.1915 \\
& Ours & \textbf{81.82} & 81.05 & \textbf{76.03} & 73.99 & \textbf{69.72} & \textbf{76.52} & \textbf{0.0308} & \textbf{0.0332} & \textbf{0.0287} & \textbf{0.0650} & \textbf{0.0504} & \textbf{0.0416} \\
\cmidrule(lr){1-14}
\multirow{3}{*}{MoE-LLaVA-Phi2-2.7B}
& APS & 79.51 & 82.14 & 72.74 & 74.51 & 66.56 & 75.89 & 0.2067 & 0.2863 & 0.2687 & 0.2476 & 0.3266 & 0.2672 \\
& LAC & 79.80 & 81.16 & 74.08 & 74.86 & 66.66 & 75.71 & 0.1377 & 0.2385 & 0.2212 & 0.2100 & 0.2825 & 0.2180 \\
& Ours & \textbf{81.62} & \textbf{83.06} & \textbf{74.44} & \textbf{75.86} & \textbf{69.02} & \textbf{76.80} & \underline{0.0224} & \underline{0.0991} & \underline{0.0259} & \underline{0.0333} & \underline{0.0238} & \underline{0.0409} \\
\bottomrule
\end{tabular}
\end{adjustbox}
\end{table*}

\subsubsection{Results of LLMs}
Additional results in \autoref{tab:auroc_auarc_ap_llm}, \autoref{tab:coverage_ap_llm}, and \autoref{tab:acc_ap_llm} demonstrate the performance of Llama-2 series models (7B and 13B) discussed in \autoref{ap_exp} in terms of uncertainty quantification i.e. AUROC vs AUARC, coverage rate vs set size, and accuracy vs expected calibration error respectively. As shown in these tables, our CAP model outperforms other methods in hallucination detection and uncertainty guided selective generation while satisfying the minimum coverage rate of 90\% in all instances and maintaining the middle ground in set size balancing this for all cases.

\begin{table*}[h!]
\centering
\caption{Evaluation of uncertainty quantification: Comparative analysis of the proposed CAP (Ours) meth with standard Least Ambiguous set-valued Classifiers (LAC) \cite{sadinle2019least}, and Adaptive Prediction Sets (APS) \cite{romano2020classification} methods (the best values are in bold). The comparison includes different datasets and LLM models, with quality of uncertainty quantification evaluated using the Area Under the Receiver Operating Characteristic (AUROC) and the Area Under the Accuracy-Rejection Curve (AUARC). Best values are in bold.}
\label{tab:auroc_auarc_ap_llm}
\begin{adjustbox}{width=\textwidth}
\begin{tabular}{llcccccccccccc}
\toprule
\multirow{2}{*}{\textbf{Model}} & \multirow{2}{*}{\textbf{Method}} & \multicolumn{6}{c}{\textbf{AUROC} ↑ (Hallucination Detection)} & \multicolumn{6}{c}{\textbf{AUARC} ↑ (Uncertainty guided selective generation)} \\
\cmidrule(lr){3-8} \cmidrule(lr){9-14}
& & \textbf{HSwag} & \textbf{HDial} & \textbf{CQA} & \textbf{HSum} & \textbf{MMLU} & \textbf{Avg.} & \textbf{HSwag} & \textbf{HDial} & \textbf{CQA} & \textbf{HSum} & \textbf{MMLU} & \textbf{Avg.} \\
\midrule
\multirow{3}{*}{Llama2-7B}
& APS & 0.4884 & 0.4646 & 0.6378 & 0.6353 & 0.4495 & 0.5351 & 0.3473 & 0.3301 & 0.5296 & 0.2962 & 0.5774 & 0.4161 \\
& LAC & 0.4079 & 0.2623 & 0.5490 & 0.7205 & 0.3594 & 0.4598 & 0.3395 & 0.2891 & 0.5185 & 0.2923 & 0.5496 & 0.3978 \\
& Ours & \textbf{0.7066} & \textbf{0.7040} & \textbf{0.7724} & \textbf{0.7672} & \textbf{0.6324} & \textbf{0.7165} & \textbf{0.8681} & \textbf{0.8354} & \textbf{0.9599} & \textbf{0.9078} & \textbf{0.8935} & \textbf{0.8929} \\
\cmidrule(lr){1-14}
\multirow{3}{*}{Llama2-13B}
& APS & 0.6225 & 0.3460 & 0.5186 & 0.4092 & 0.4132 & 0.4619 & 0.5788 & 0.5065 & 0.7893 & 0.4709 & 0.7455 & 0.6182 \\
& LAC & 0.4685 & 0.2007 & 0.6377 & 0.3478 & 0.3808 & 0.4071 & 0.5591 & 0.4801 & 0.7710 & 0.4580 & 0.6950 & 0.5926 \\
& Ours & \textbf{0.6396} & \textbf{0.5043} & \textbf{0.7159} & \textbf{0.6255} & \textbf{0.5572} & \textbf{0.6085} & \textbf{0.9254} & \textbf{0.8134} & \textbf{0.9685} & \textbf{0.8986} & \textbf{0.9177} & \textbf{0.9047} \\
\bottomrule
\end{tabular}
\end{adjustbox}
\end{table*}

\begin{table*}[h!]
\centering
\caption{Evaluation of coverage rate (\%)  and set size: Comparative analysis of the proposed CAP (Ours) meth with standard LAC \cite{sadinle2019least}, and APS \cite{romano2020classification} methods. The comparison includes different datasets and LLM models, show casing the satisfied coverage rate and balanced set sizes produced by our method with \underline{underlined} values.}
\label{tab:coverage_ap_llm}
\begin{adjustbox}{width=\textwidth}
\begin{tabular}{llcccccccccccc}
\toprule
\multirow{2}{*}{\textbf{Model}} & \multirow{2}{*}{\textbf{Method}} & \multicolumn{6}{c}{\textbf{Coverage (\%)} ↑} & \multicolumn{6}{c}{\textbf{SS} ↓} \\
\cmidrule(lr){3-8} \cmidrule(lr){9-14}
& & \textbf{HSwag} & \textbf{HDial} & \textbf{CQA} & \textbf{HSum} & \textbf{MMLU} & \textbf{Avg.} & \textbf{HSwag} & \textbf{HDial} & \textbf{CQA} & \textbf{HSum} & \textbf{MMLU} & \textbf{Avg.} \\
\midrule
\multirow{3}{*}{Llama2-7B}
& APS & 90.02 & 90.44 & 91.78 & 89.72 & 92.50 & 90.89 & 3.346 & 3.257 & 2.661 & 3.227 & 3.319 & 3.162 \\
& LAC & 90.66 & 89.96 & 90.08 & 89.22 & 90.54 & 90.09 & 3.253 & 3.251 & 2.275 & 3.423 & 3.021 & 3.044 \\
& Ours & \underline{90.38} & \underline{90.42} & \underline{91.22} & \underline{89.78} & \underline{91.04} & \underline{90.56} & \underline{3.378} & \underline{3.252} & \underline{2.316} & \underline{3.360} & \underline{3.191} & \underline{3.099} \\
\cmidrule(lr){1-14}
\multirow{3}{*}{Llama2-13b}
& APS & 89.70 & 90.32 & 97.06 & 90.26 & 95.86 & 92.64 & 2.801 & 2.571 & 2.881 & 2.306 & 3.320 & 2.776 \\
& LAC & 89.88 & 90.62 & 90.52 & 89.98 & 89.18 & 90.03 & 2.497 & 2.535 & 1.568 & 2.117 & 2.578 & 2.259 \\
& Ours & \underline{90.11} & \underline{90.41} & \underline{94.40} & \underline{90.30} & \underline{93.62} & \underline{91.77} & \underline{3.071} & \underline{2.537} & \underline{2.465} & \underline{2.122} & \underline{3.104} & \underline{2.660} \\
\bottomrule
\end{tabular}
\end{adjustbox}
\end{table*}

\begin{table*}[h!]
\centering
\caption{Evaluation of accuracy (\%)  and ECE: Comparative analysis of the proposed CAP (Ours) meth with standard LAC \cite{sadinle2019least}, and APS \cite{romano2020classification} methods. The comparison includes different datasets and LLM models, demonstrating the significant reduction in expected calibration error while improving overall accuracy.}
\label{tab:acc_ap_llm}
\begin{adjustbox}{width=\textwidth}
\begin{tabular}{llcccccccccccc}
\toprule
\multirow{2}{*}{\textbf{Model}} & \multirow{2}{*}{\textbf{Method}} & \multicolumn{6}{c}{\textbf{Accuracy (\%)} ↑} & \multicolumn{6}{c}{\textbf{ECE} ↓} \\
\cmidrule(lr){3-8} \cmidrule(lr){9-14}
& & \textbf{HSwag} & \textbf{HDial} & \textbf{CQA} & \textbf{HSum} & \textbf{MMLU} & \textbf{Avg.} & \textbf{HSwag} & \textbf{HDial} & \textbf{CQA} & \textbf{HSum} & \textbf{MMLU} & \textbf{Avg.} \\
\midrule
\multirow{3}{*}{Llama2-7B}
& APS & 54.86 & 50.54 & 74.43 & 60.78 & 59.33 & 59.99 & 0.5720 & 0.5934 & 0.5085 & 0.6176 & 0.4894 & 0.5562 \\
& LAC & 55.05 & 50.24 & 74.10 & 59.23 & 59.11 & 59.55 & 0.5784 & 0.5927 & 0.4915 & 0.6127 & 0.4703 & 0.5491 \\
& Ours & \textbf{61.00} & \textbf{57.78} & \textbf{80.32} & \textbf{68.36} & \textbf{64.13} & \textbf{66.32} & \textbf{0.0606} & \textbf{0.0572} & \textbf{0.1953} & \textbf{0.1693} & \textbf{0.0414} & \textbf{0.1048} \\
\cmidrule(lr){1-14}
\multirow{3}{*}{Llama2-13b}
& APS & 70.38 & \textbf{66.84} & 83.42 & 72.59 & 65.48 & 71.74 & 0.4165 & 0.4207 & 0.3462 & 0.4666 & 0.3930 & 0.4086 \\
& LAC & 70.46 & 66.69 & 83.26 & 72.34 & 64.79 & 71.49 & 0.4183 & 0.4373 & 0.2808 & 0.4638 & 0.3444 & 0.3889 \\
& Ours & \textbf{73.31} & 64.28 & \textbf{85.89} & \textbf{73.44} & \textbf{68.11} & \textbf{73.00} & \textbf{0.0814} & \textbf{0.0721} & \textbf{0.1138} & \textbf{0.1559} & \textbf{0.0203} & \textbf{0.0887} \\
\bottomrule
\end{tabular}
\end{adjustbox}
\end{table*}

\subsubsection{Accuracy vs ECE:}  \label{accuracy_ece_appendix}
\autoref{fig:acc_ece_ap_vlm} shows the results of accuracy vs ECE achieved using CAP versus APS and LAC across multiple VLMs. Lower ECE values indicate better calibration, signifying that confidence scores are more reliable indicators of prediction accuracy. As shown in these figures, CAP was able to improve the accuracy while \textbf{significantly} reducing the expected calibration error. Moreover, \autoref{fig:acc_ece_ap_llm} shows the same trend in LLMs consistently reducing ECE while improving accuracy across all tasks and datasets.

\begin{figure*}[h!]
    \centering
    \makebox[\textwidth]{\includegraphics[width=0.9\textwidth]{figures/legend.pdf}}
    \makebox[\textwidth]{
        \begin{tabular}{ccc}
            \includegraphics[width=0.30\textwidth]{figures/mmbench_plot.pdf} &
            \includegraphics[width=0.30\textwidth]{figures/scienceqa_plot.pdf} &
            \includegraphics[width=0.30\textwidth]{figures/oodcv_plot.pdf} \\
        \end{tabular}
    }
    \makebox[\textwidth]{
        \begin{tabular}{cc}
            \includegraphics[width=0.30\textwidth]{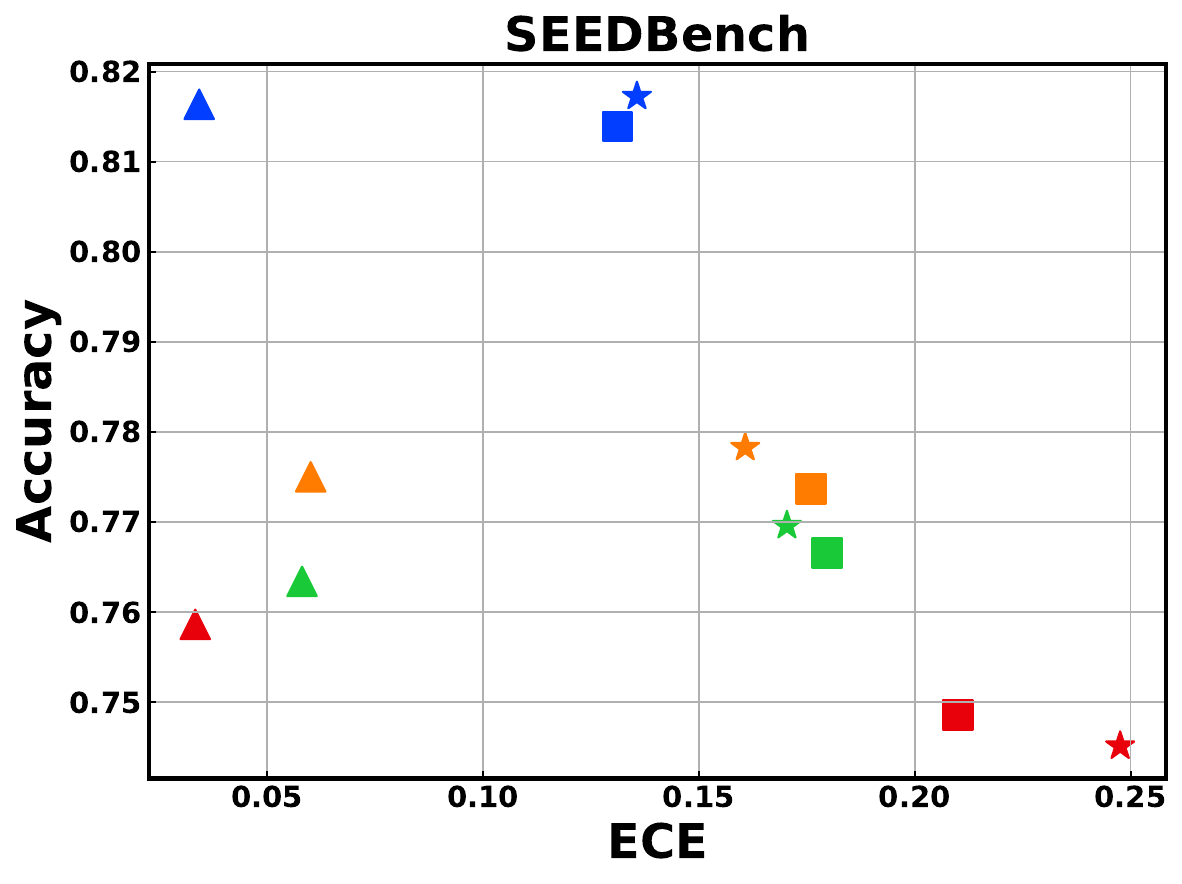} &
            \includegraphics[width=0.30\textwidth]{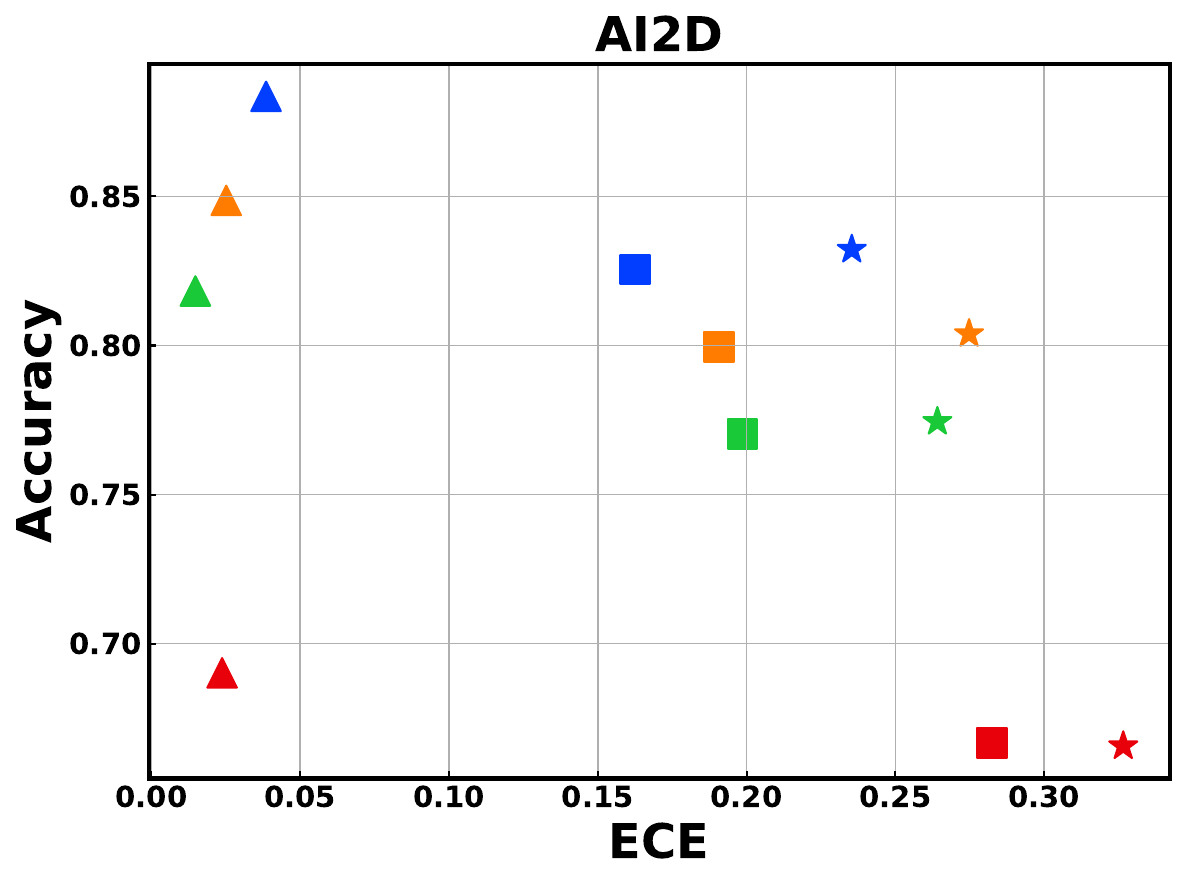} \\
        \end{tabular}
    }
\caption{Accuracy vs. Expected Calibration Error (ECE) comparison of CAP, APS, and LAC across various VLMs and five datasets: MMBench, ScienceQA, OODCV, SEEDBench, and AI2D. An ideal model has high accuracy and low ECE (upper-left). ATCP shows significant ECE improvement over baselines.}
    \label{fig:annotated_grid_ap}
\end{figure*}

\begin{figure*}[h!]
    \centering
    \makebox[\textwidth]{\includegraphics[width=\textwidth]{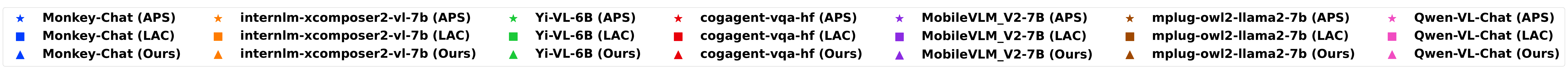}}
    \makebox[\textwidth]{
        \begin{tabular}{ccc}
            \includegraphics[width=0.30\textwidth]{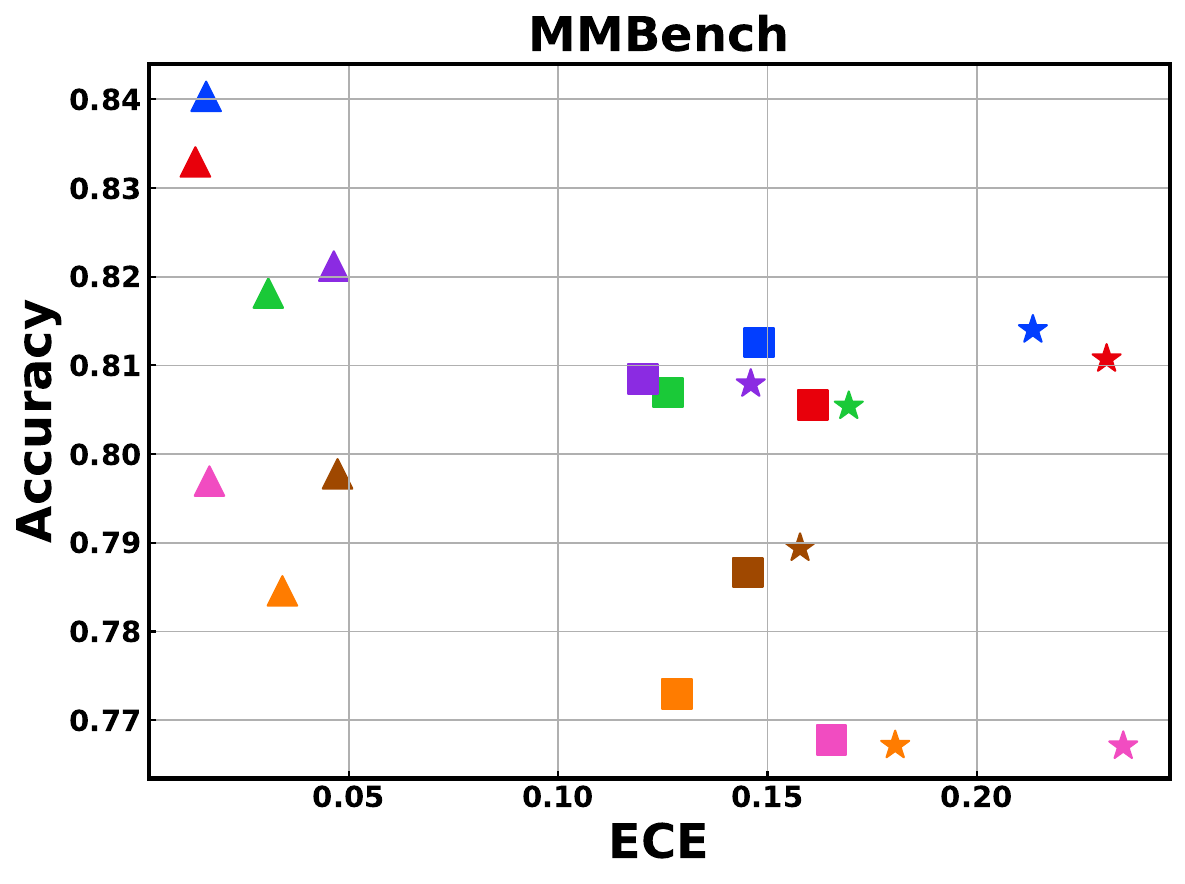} &
            \includegraphics[width=0.30\textwidth]{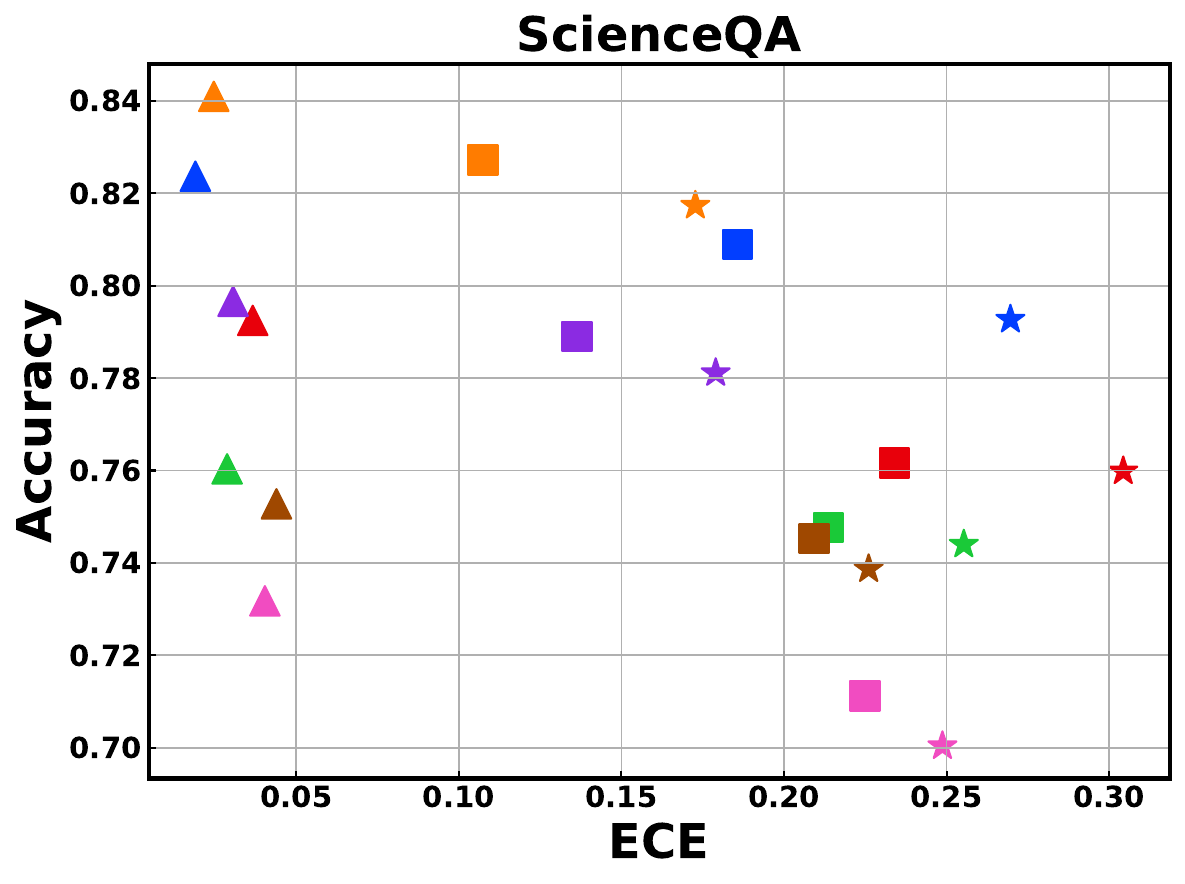} &
            \includegraphics[width=0.30\textwidth]{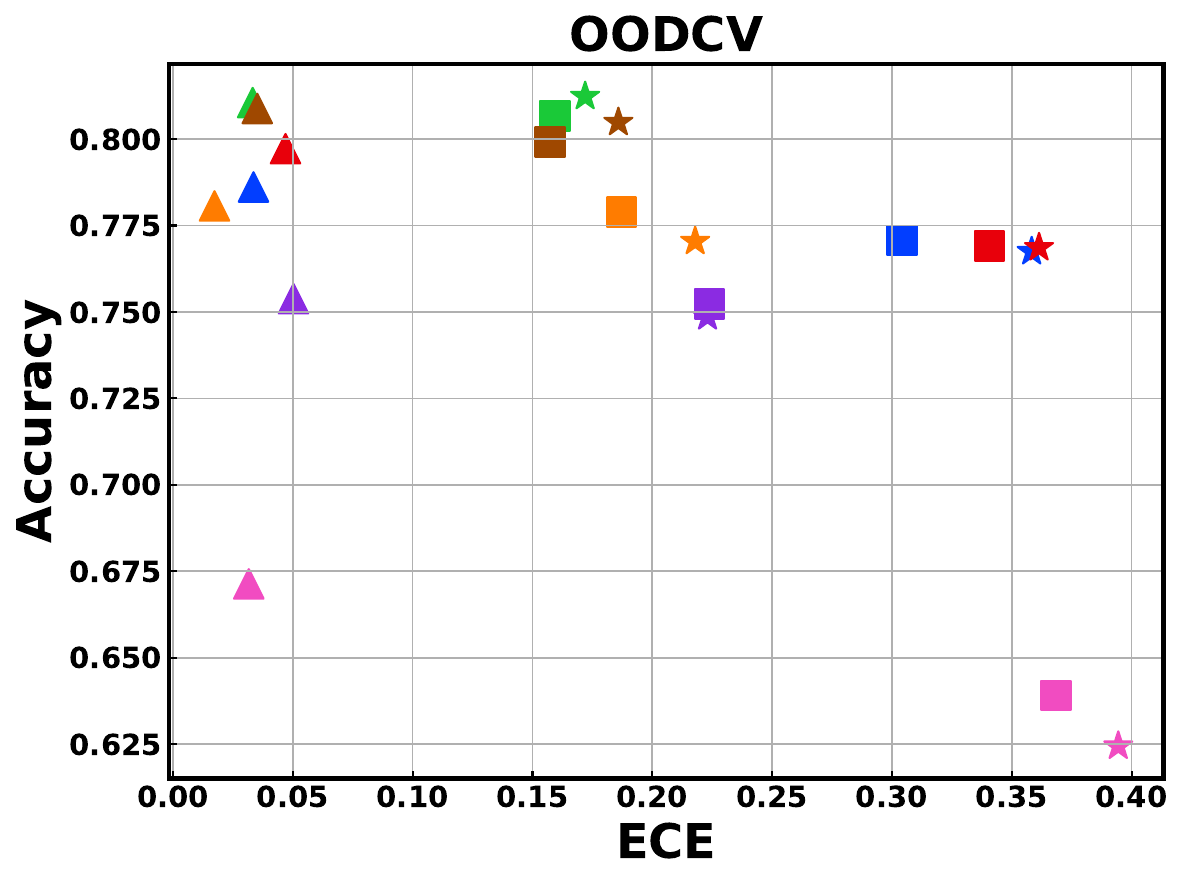} \\
        \end{tabular}
    }
    \makebox[\textwidth]{
        \begin{tabular}{cc}
            \includegraphics[width=0.30\textwidth]{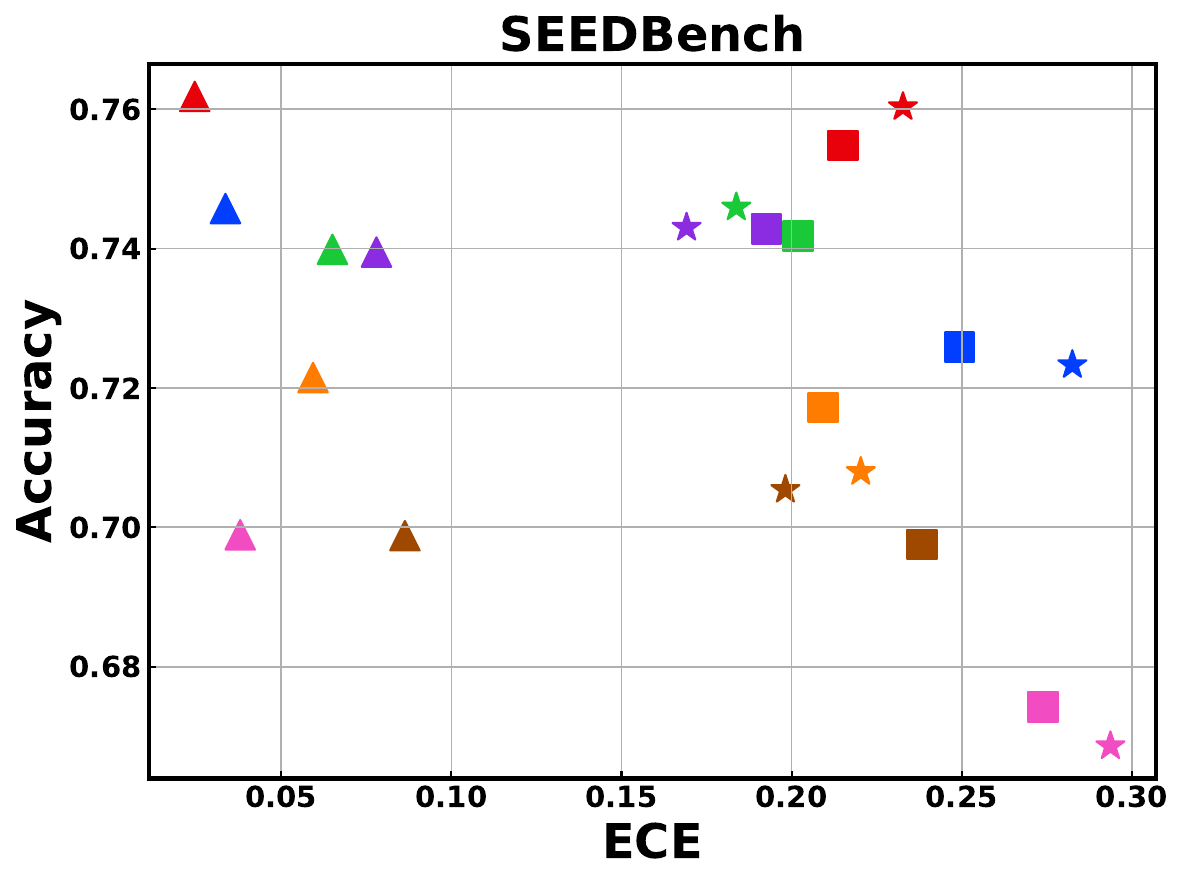} &
            \includegraphics[width=0.30\textwidth]{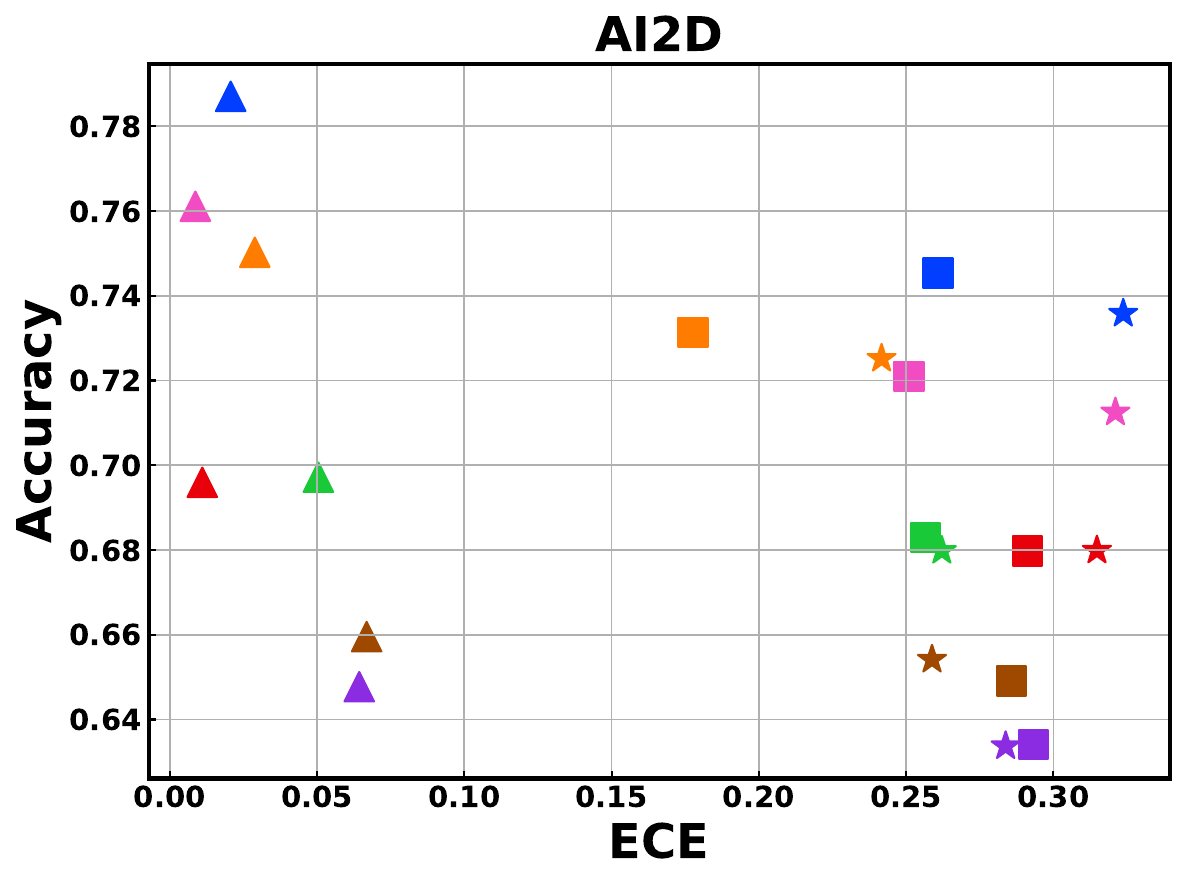} \\
        \end{tabular}
    }
    \caption{Accuracy versus Expected Calibration Error (ECE) comparison between CAP, APS and LAC methods across different VLMs and five datasets i.e. MMBench, ScienceQA, OODCV, SEEDBench, AI2D. The ideal model should have high accuracy and low ECE, indicating accurate predictions with well calibrated uncertainty quantification (upper-left of the plot). The ECE of ATCP shows significant improvement compared to baseline methods.}
    \label{fig:acc_ece_ap_vlm}
\end{figure*}

\begin{figure*}[h!]
    \centering
    \makebox[\textwidth]{\includegraphics[width=0.5\textwidth]{figures_llm/legend.pdf}}
    \makebox[\textwidth]{
        \begin{tabular}{ccc}
            \includegraphics[width=0.30\textwidth]{figures_llm/cosmosqa_plot.pdf} &
            \includegraphics[width=0.30\textwidth]{figures_llm/halu_dialogue_plot.pdf} &
            \includegraphics[width=0.30\textwidth]{figures_llm/halu_summarization_plot.pdf} \\
        \end{tabular}
    }
    \makebox[\textwidth]{
        \begin{tabular}{cc}
            \includegraphics[width=0.30\textwidth]{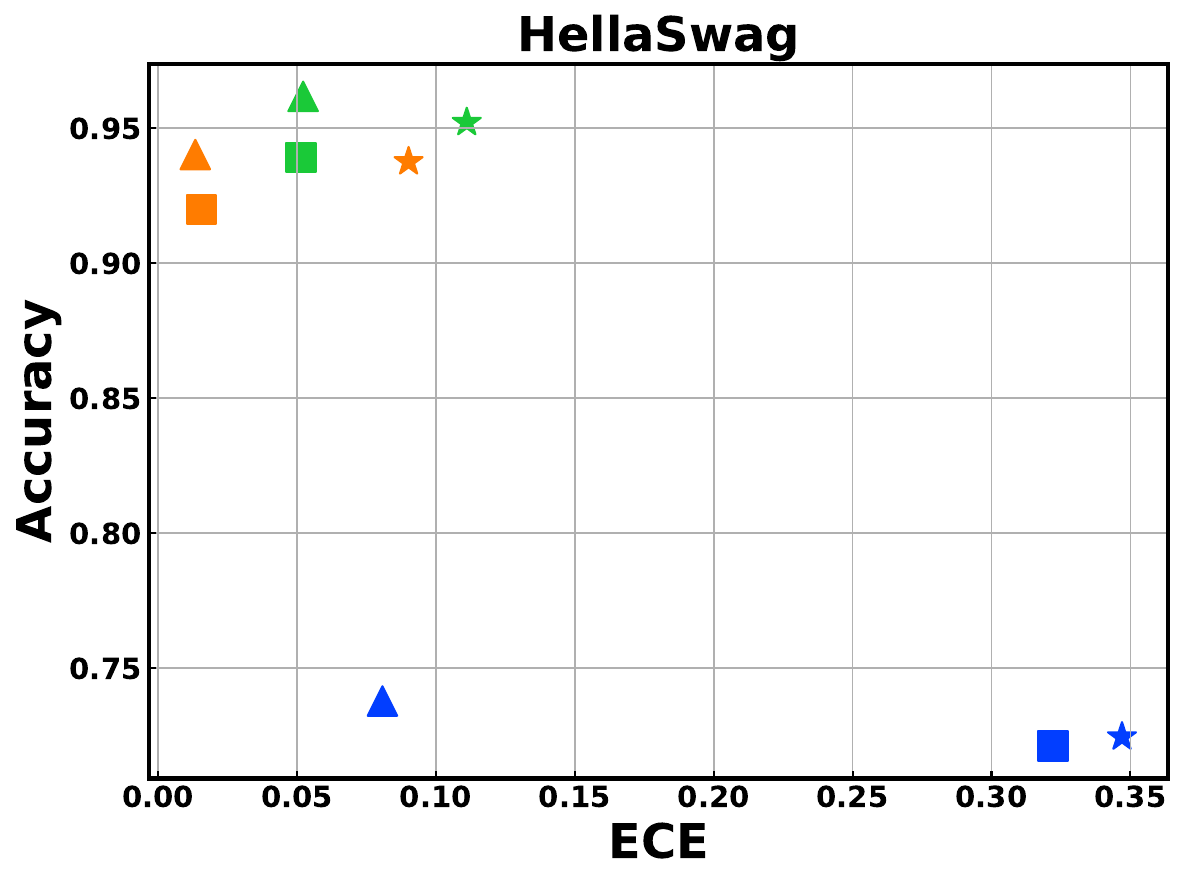} &
            \includegraphics[width=0.30\textwidth]{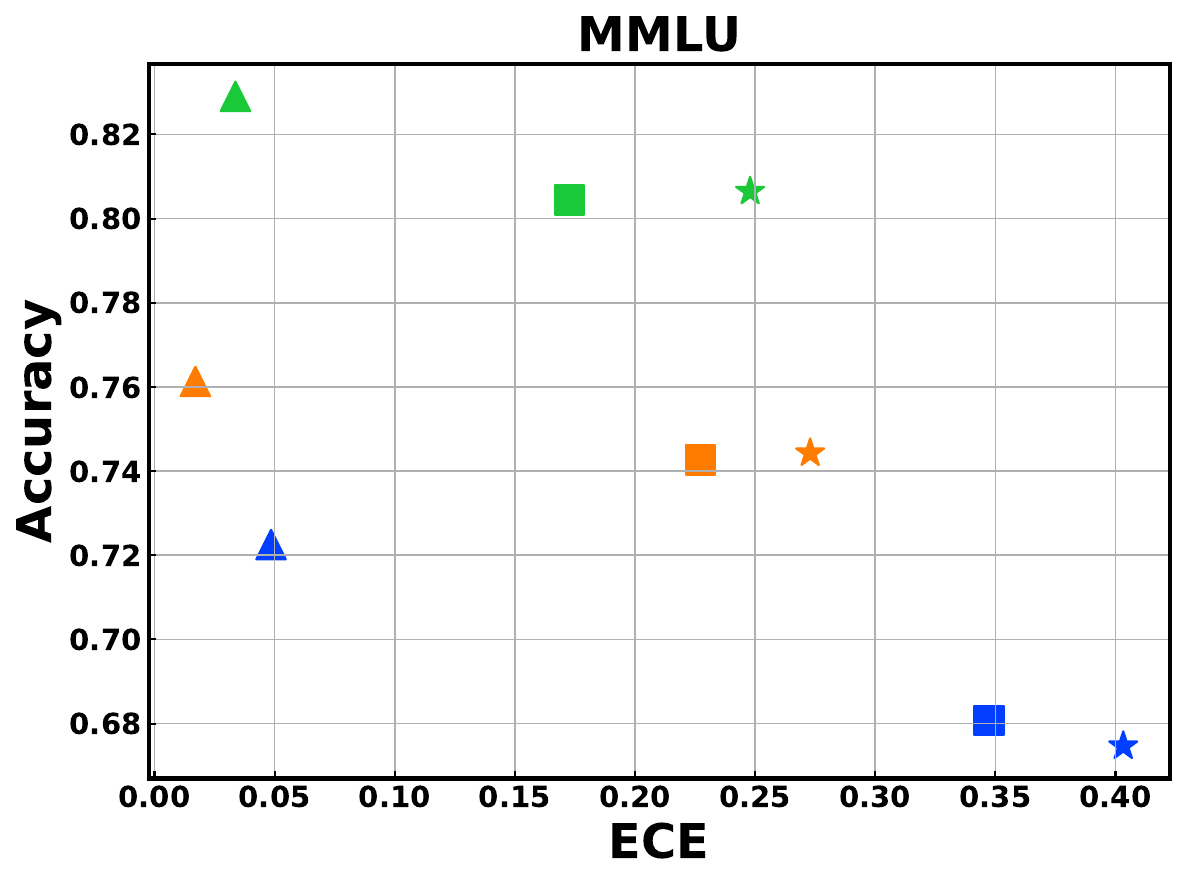} \\
        \end{tabular}
    }
    \caption{Accuracy versus Expected Calibration Error (ECE) comparison between ATCP, APS and LAC methods across different LLMs and five datasets i.e. CosmosQA, HaluDial, HaluSum, HellaSwag, MMLU. The ideal model should have high accuracy and low ECE, indicating accurate predictions with well calibrated uncertainty quantification (upper-left of the plot). The ECE of ATCP shows significant improvement compared to baseline methods.}
    \label{fig:annotated_grid_llm_ap}
\end{figure*}

\begin{figure*}[h!]
    \centering
    \makebox[\textwidth]{\includegraphics[width=0.6\textwidth]{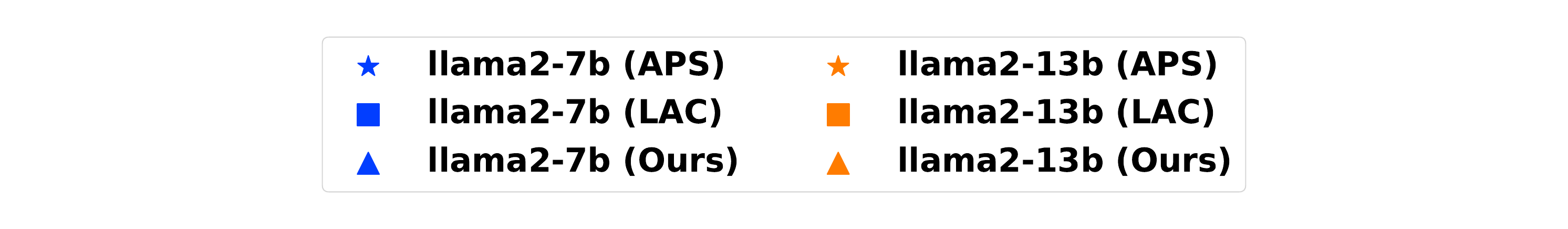}}
    \makebox[\textwidth]{
        \begin{tabular}{ccc}
            \includegraphics[width=0.30\textwidth]{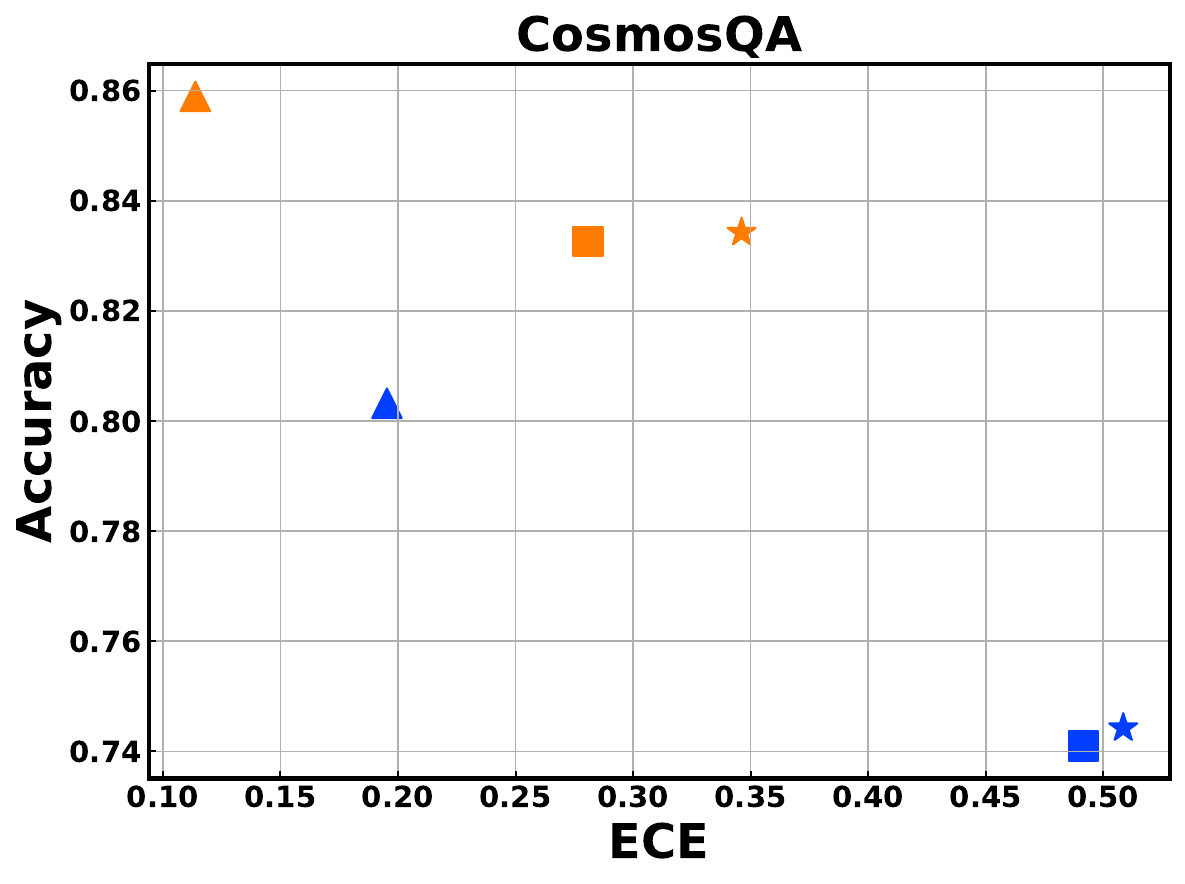} &
            \includegraphics[width=0.30\textwidth]{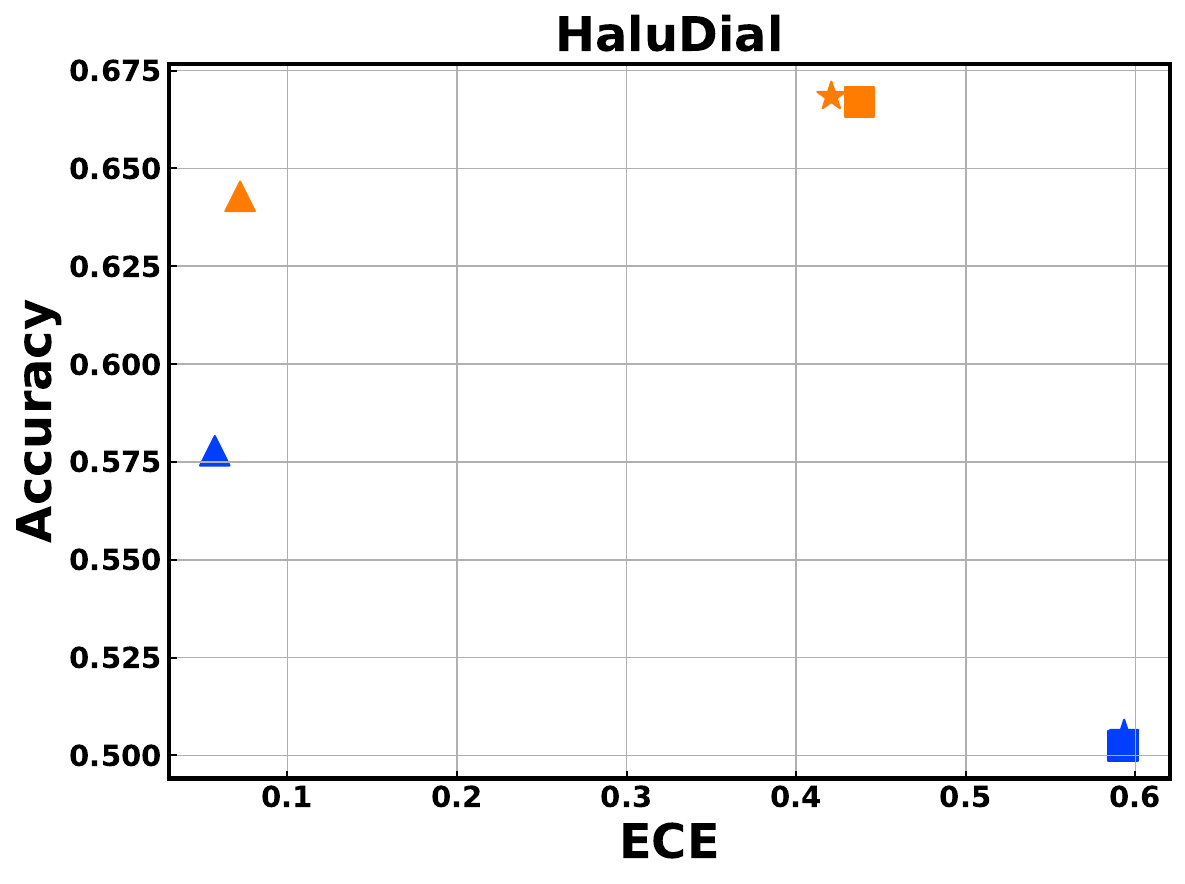} &
            \includegraphics[width=0.30\textwidth]{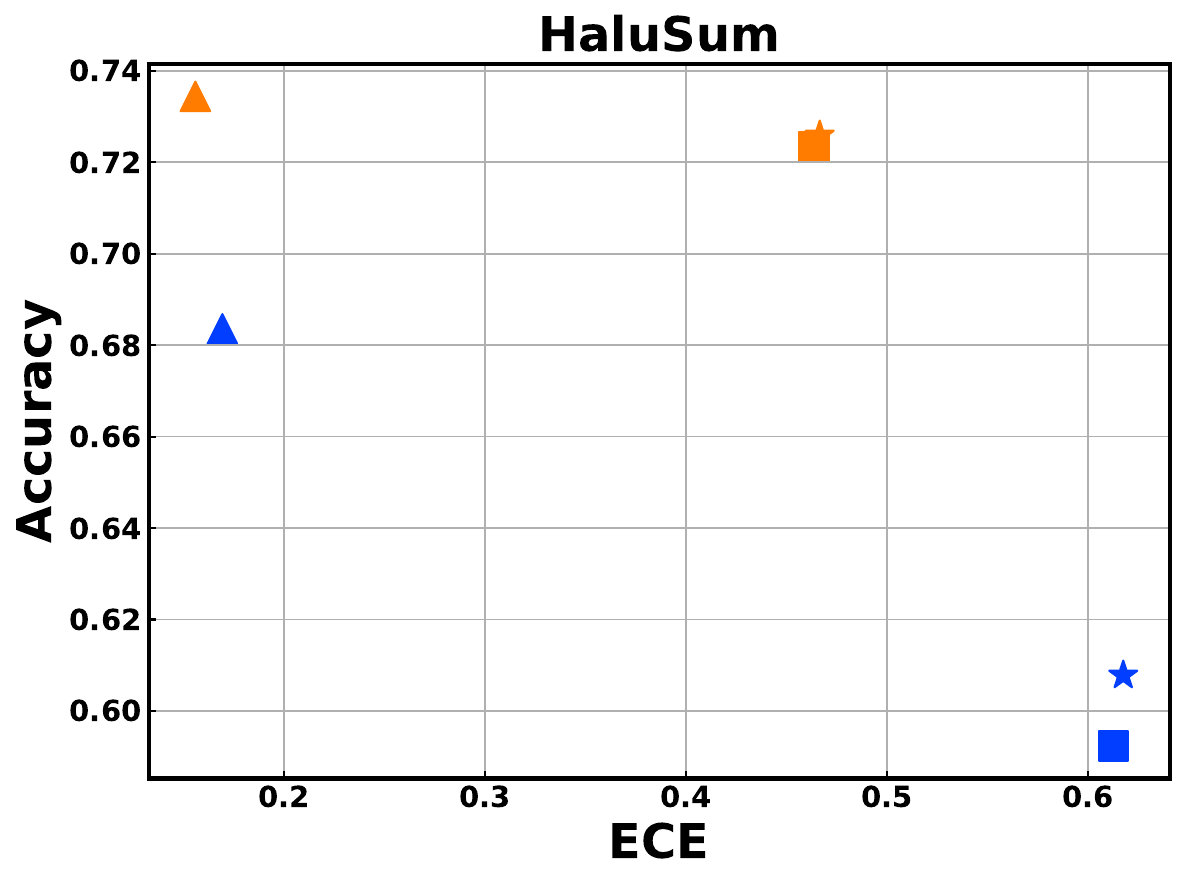} \\
        \end{tabular}
    }
    \makebox[\textwidth]{
        \begin{tabular}{cc}
            \includegraphics[width=0.30\textwidth]{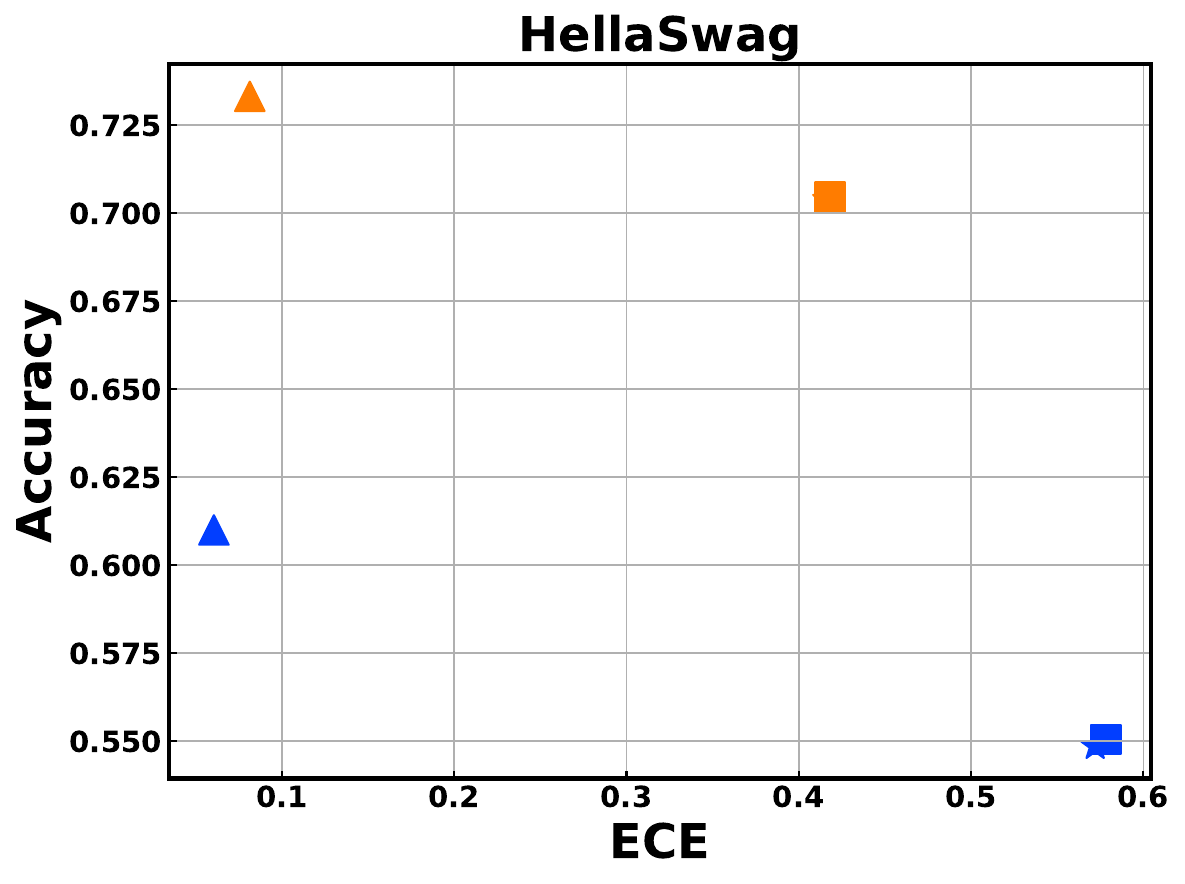} &
            \includegraphics[width=0.30\textwidth]{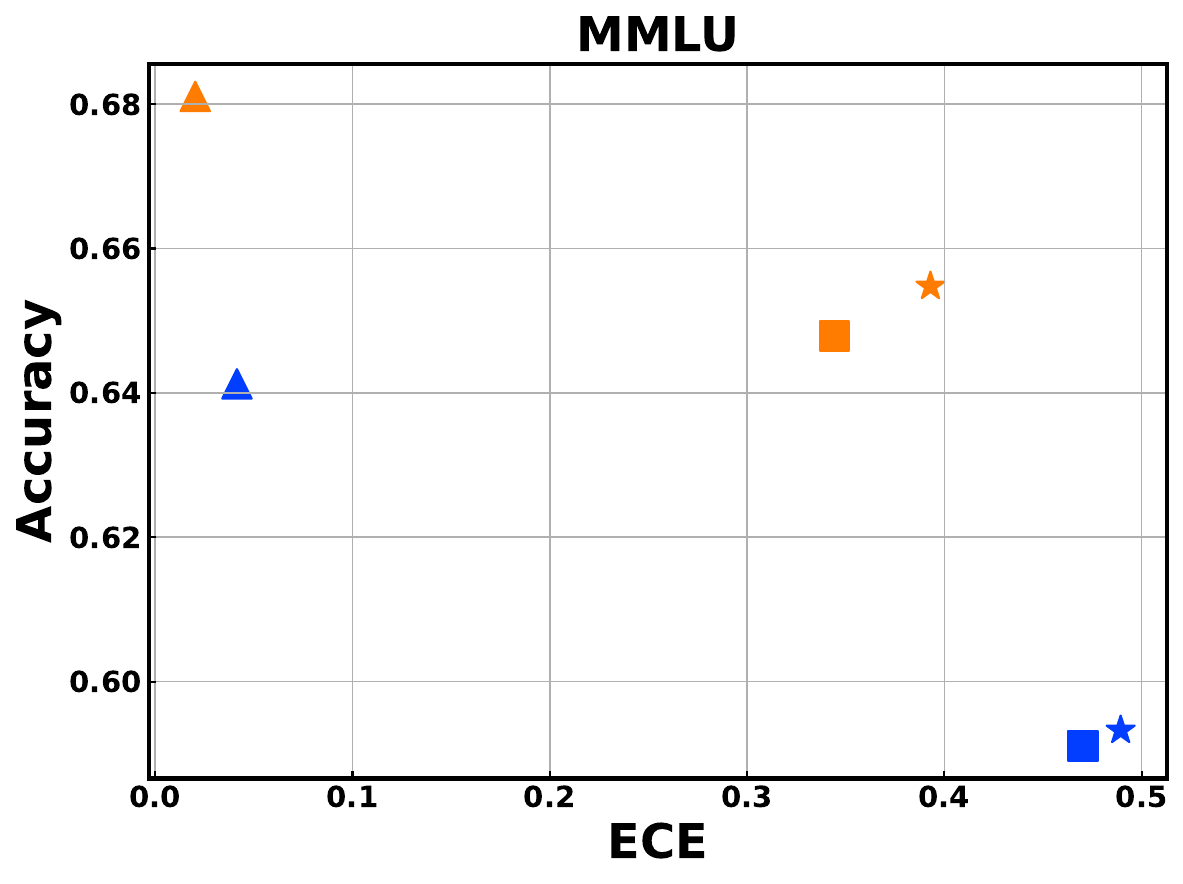} \\
        \end{tabular}
    }
    \caption{Accuracy versus Expected Calibration Error (ECE) comparison between ATCP, APS and LAC methods across different LLMs and five datasets i.e. CosmosQA, HaluDial, HaluSum, HellaSwag, MMLU. The ideal model should have high accuracy and low ECE, indicating accurate predictions with well calibrated uncertainty quantification (upper-left of the plot). The ECE of ATCP shows significant improvement compared to baseline methods.}
    \label{fig:acc_ece_ap_llm}
\end{figure*}

\subsubsection{Effect of Model Scale}
To examine the impact of model scale, we analyzed the performance of our CAP method across models of varying sizes. As shown in \autoref{fig:spider}, larger models generally achieve higher accuracy, with the most significant gains observed when scaling from 13B to 34B parameters. Prediction set size inversely correlates with model scale, as larger models produce smaller sets, reflecting greater precision and reduced uncertainty. Additionally, AUROC and AUARC improve consistently with increasing model scale, indicating that larger models are not only more accurate but also less prone to hallucinations and better at abstaining when uncertainty is high.

To examine the impact of model scale, we analyzed the performance of our CAP method across models of varying sizes. larger models generally achieve higher accuracy and produce smaller set sizes while showing better performance in avoiding hallucinations and uncertainty guided selective generation. As shown in \autoref{fig:spider}, \autoref{fig:spider_llm}, and \autoref{fig:spider_ap_llm}, the most significant gains observed when scaling the model size from 13B to 34B parameters. Prediction set size inversely correlates with model scale, as larger models produce smaller sets, reflecting greater precision and reduced uncertainty. Additionally in \autoref{fig:spider_ap_vlm}, we can see slight gains in all metrics comparing VLMs with 7B parameters against Yi-VL with 6B parameters. However, since size differences in VLMs in this particular benchmark are not very different, part of the gap in results between model would be associated with different finetuning methods used for these models and the pre-trained model under the hood.

\begin{figure*}[h!]
    \centering
    \makebox[\textwidth]{\includegraphics[width=0.7\textwidth]{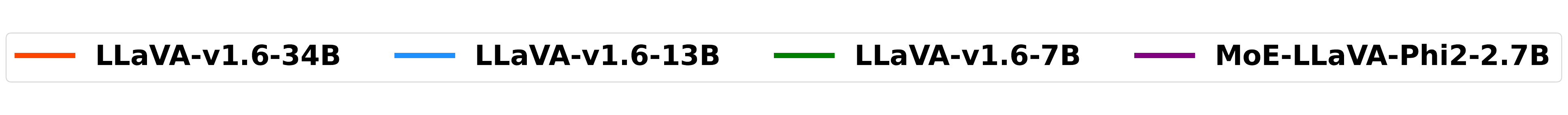}}
    \makebox[\textwidth]{
        \begin{tabular}{cccc}
            \includegraphics[width=0.24\textwidth]{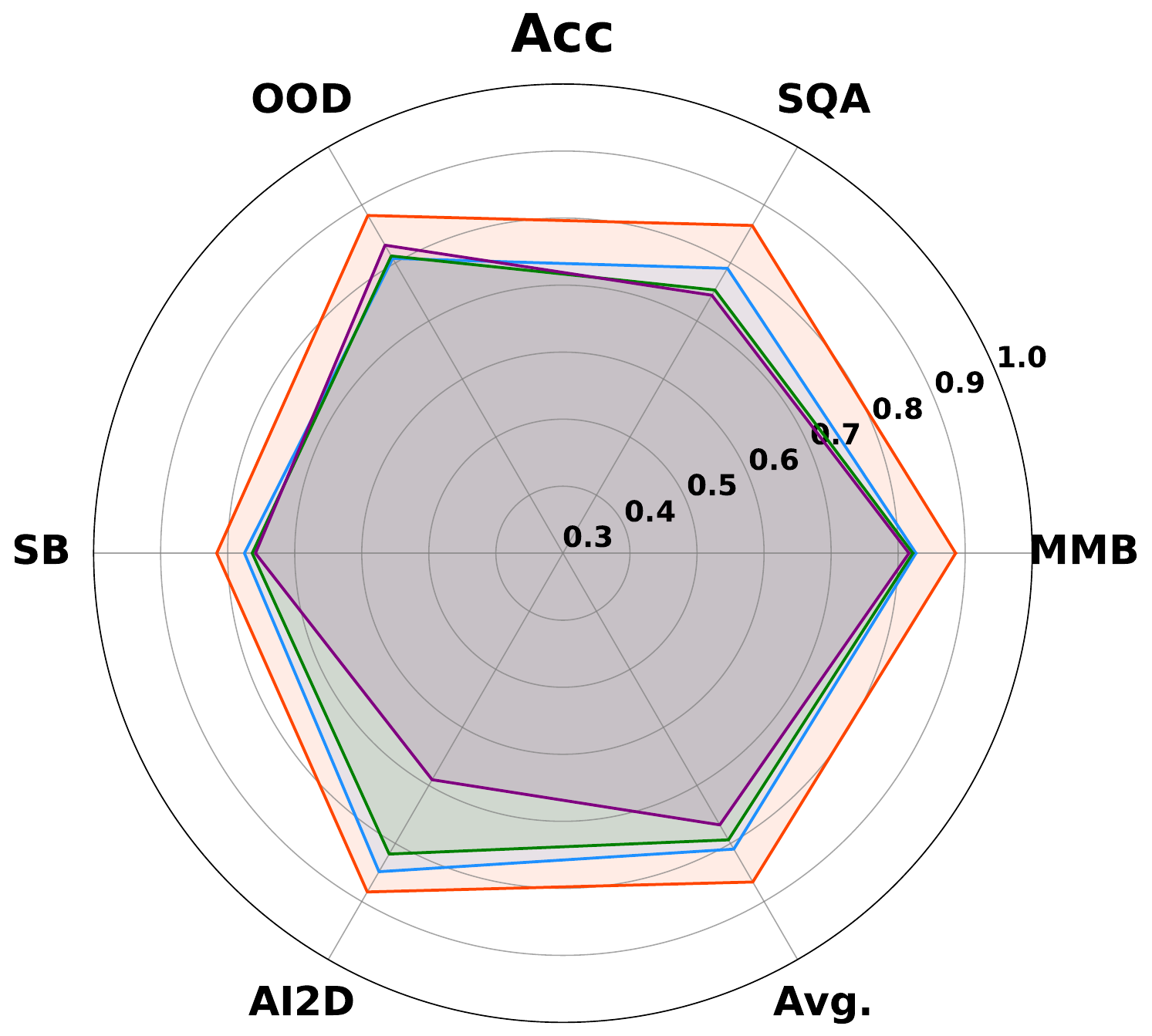} &
            \includegraphics[width=0.24\textwidth]{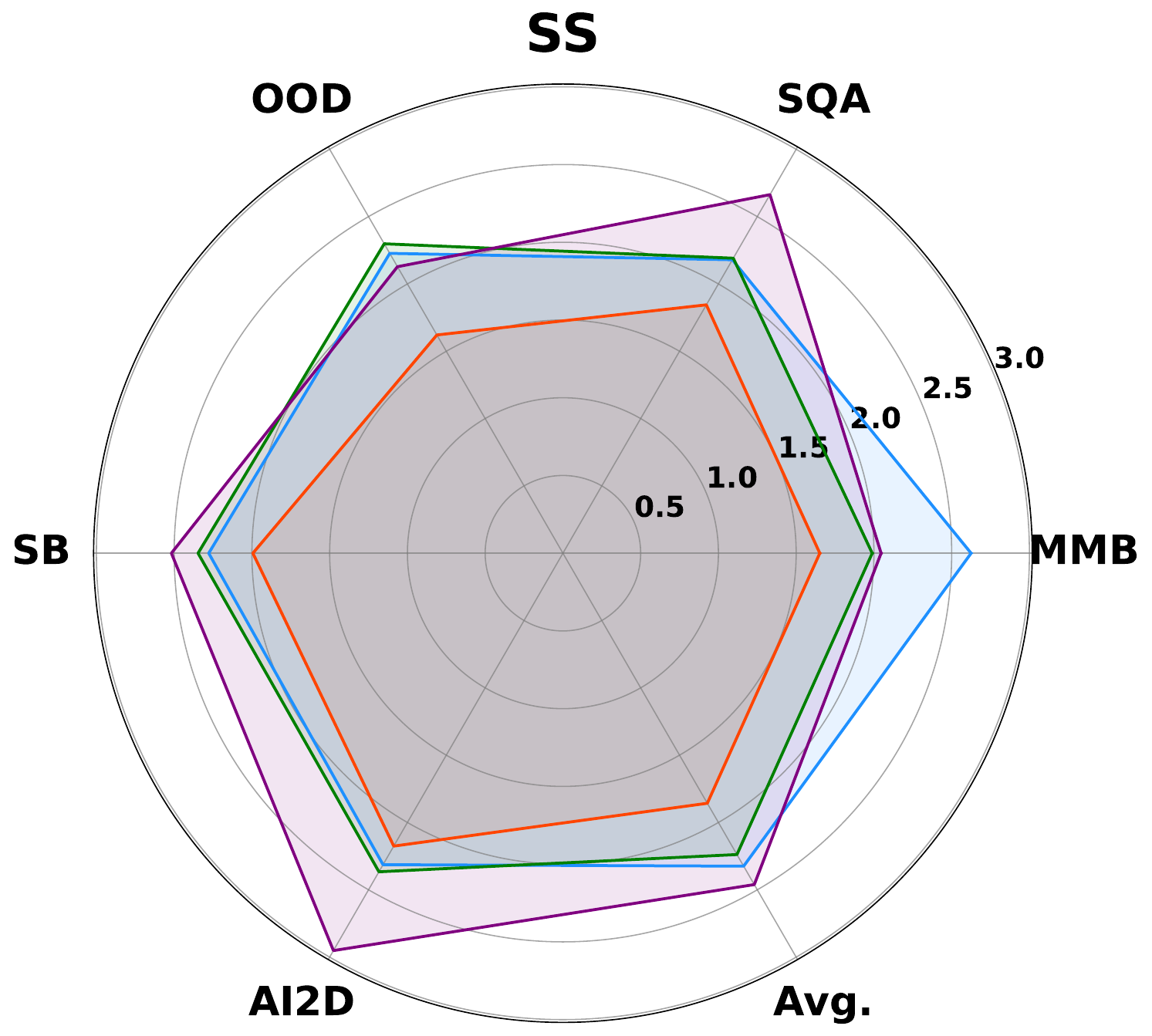} &
            \includegraphics[width=0.24\textwidth]{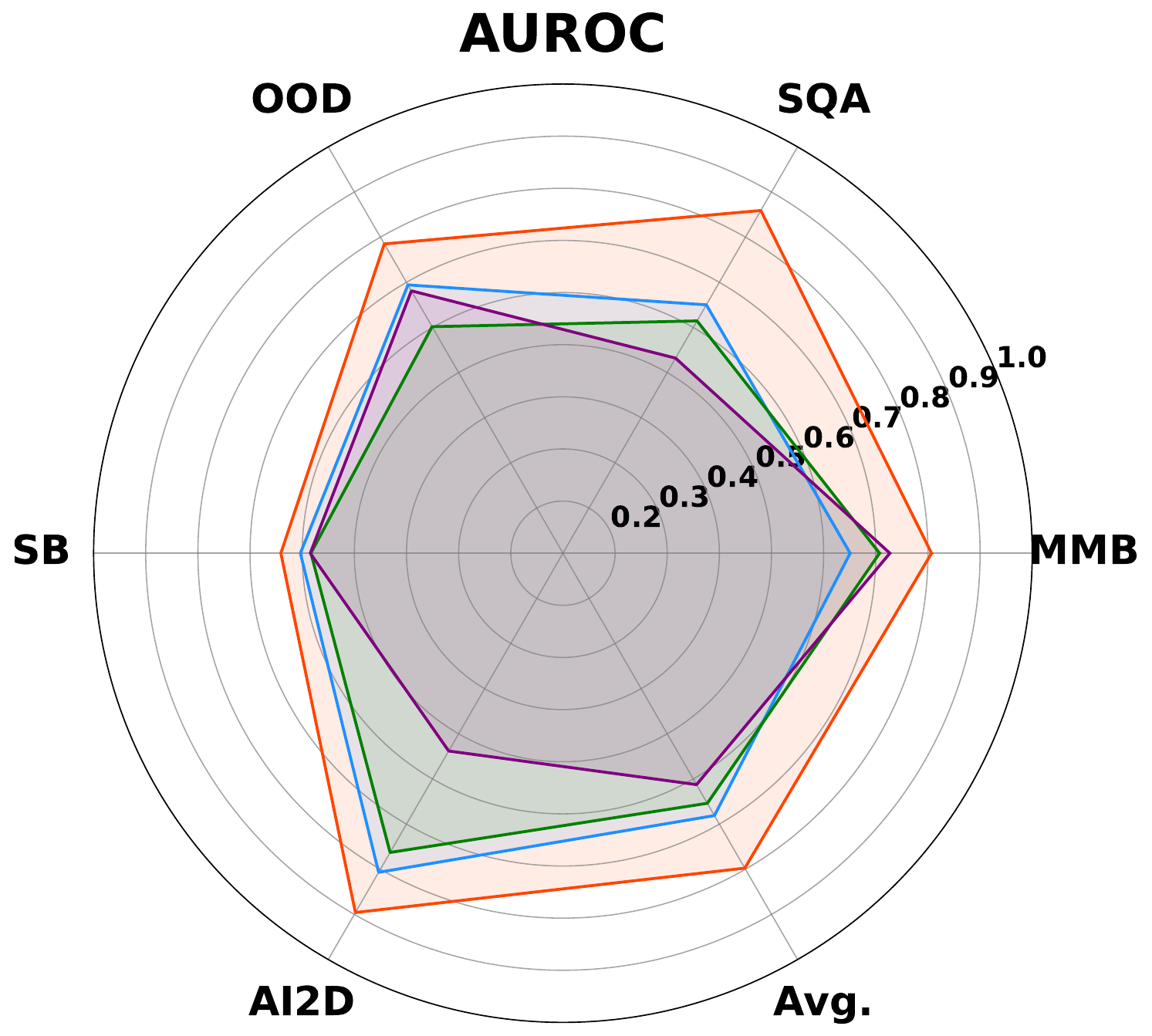} &
            \includegraphics[width=0.24\textwidth]{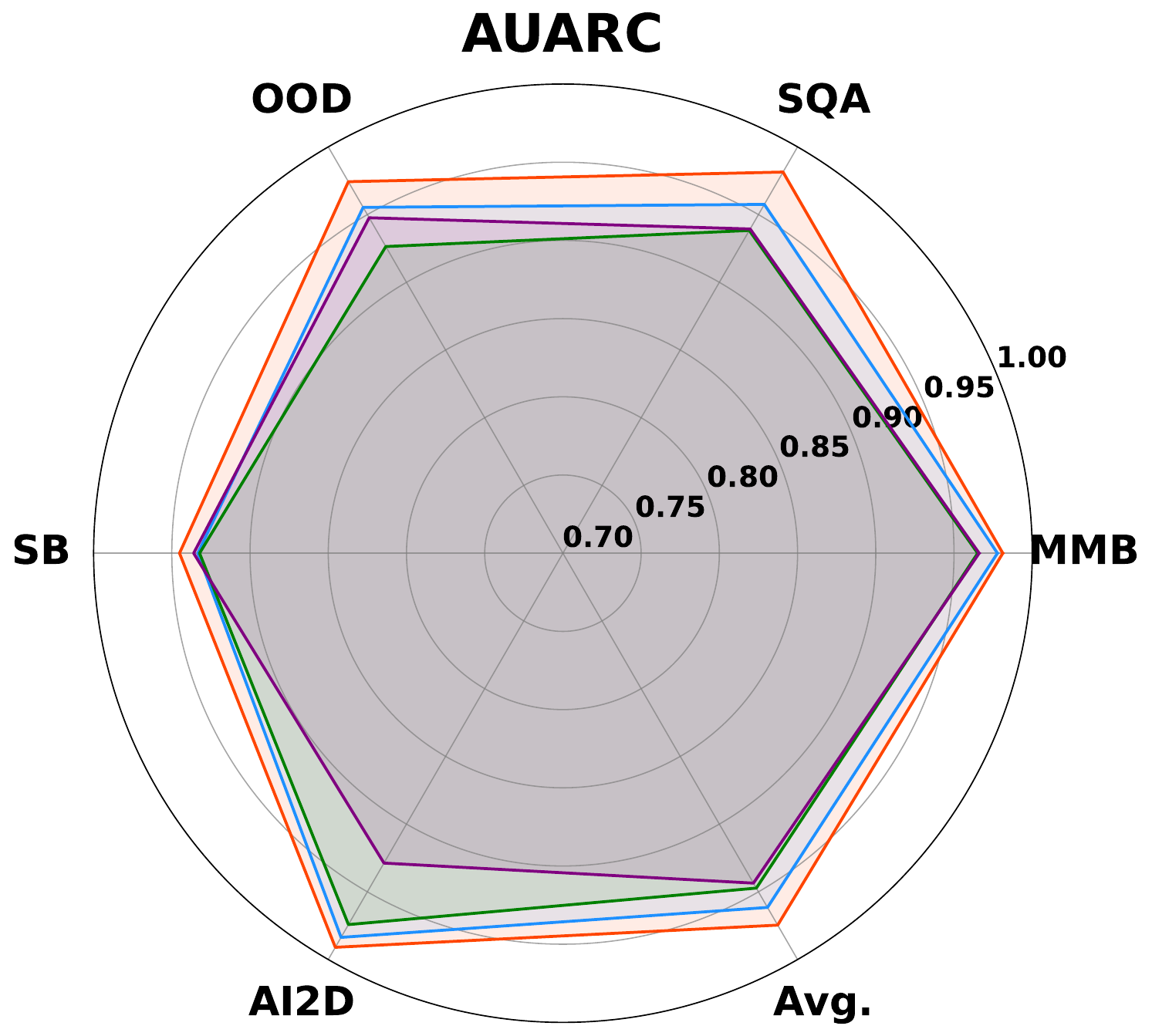} \\
        \end{tabular}
    }
    \caption{Performance comparison of VLMs with different model sizes (2.7B to 34B) across various metrics. Figures from left to right represents the performance of four models on one of the four metrics i) accuracy, ii) set size, iii) AUROC, and iv) AUARC respectively. In each figure, we have drawn the performance of models across five datasets in VLM benchmark. Each figure represents the effect of model scale (number of parameters) in its performance across different uncertainty metrics.}
    \label{fig:spider}
\end{figure*}

\begin{figure*}[h!]
    \centering
    \makebox[\textwidth]{\includegraphics[width=0.8\textwidth]{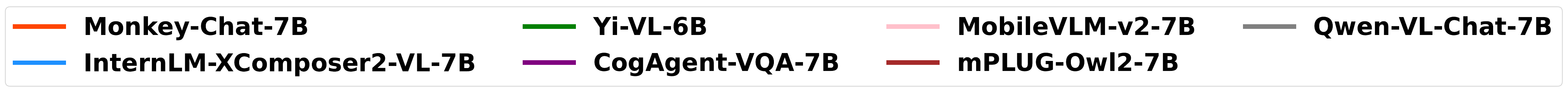}}
    \makebox[\textwidth]{
        \begin{tabular}{cccc}
            \includegraphics[width=0.24\textwidth]{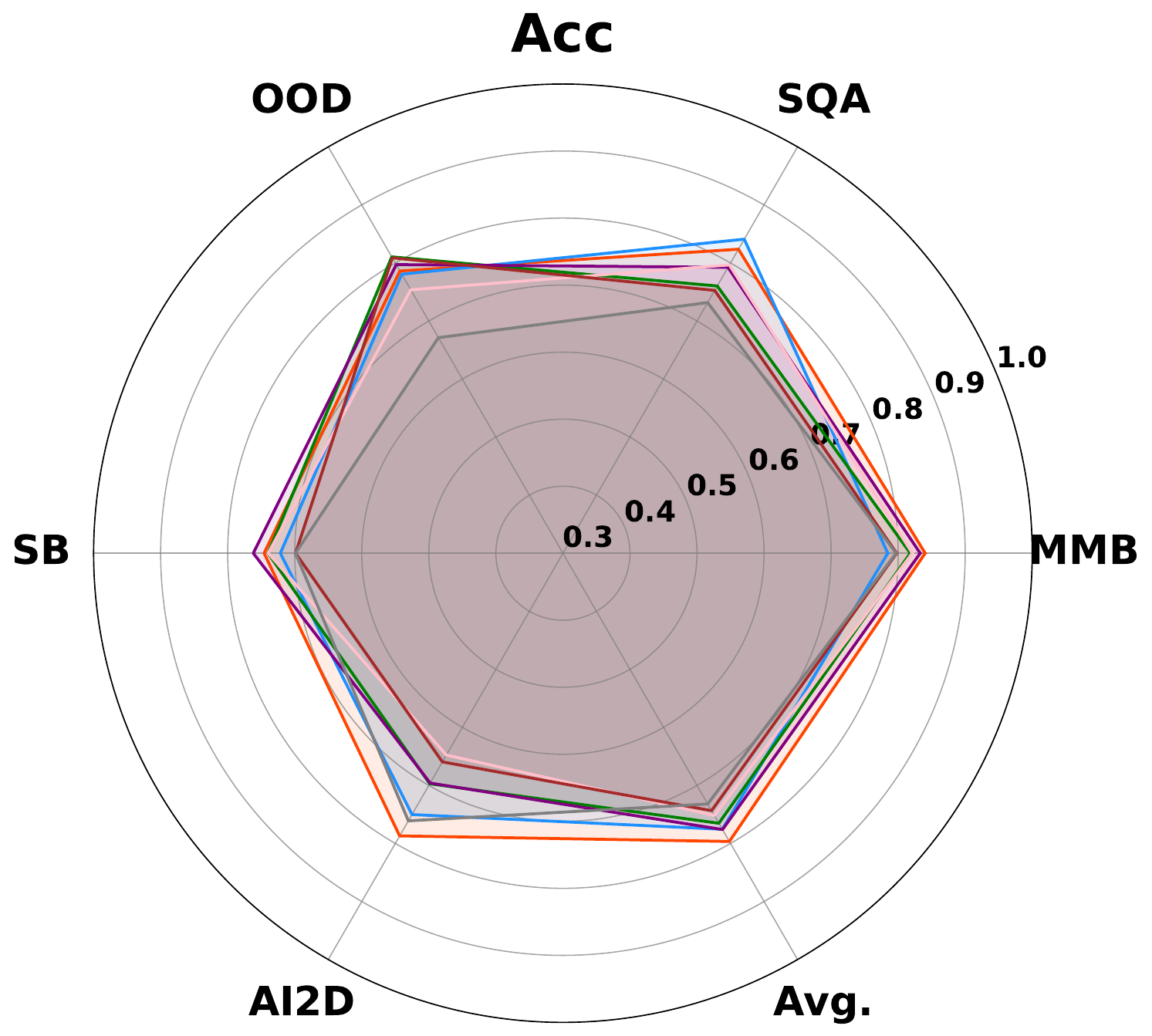} &
            \includegraphics[width=0.24\textwidth]{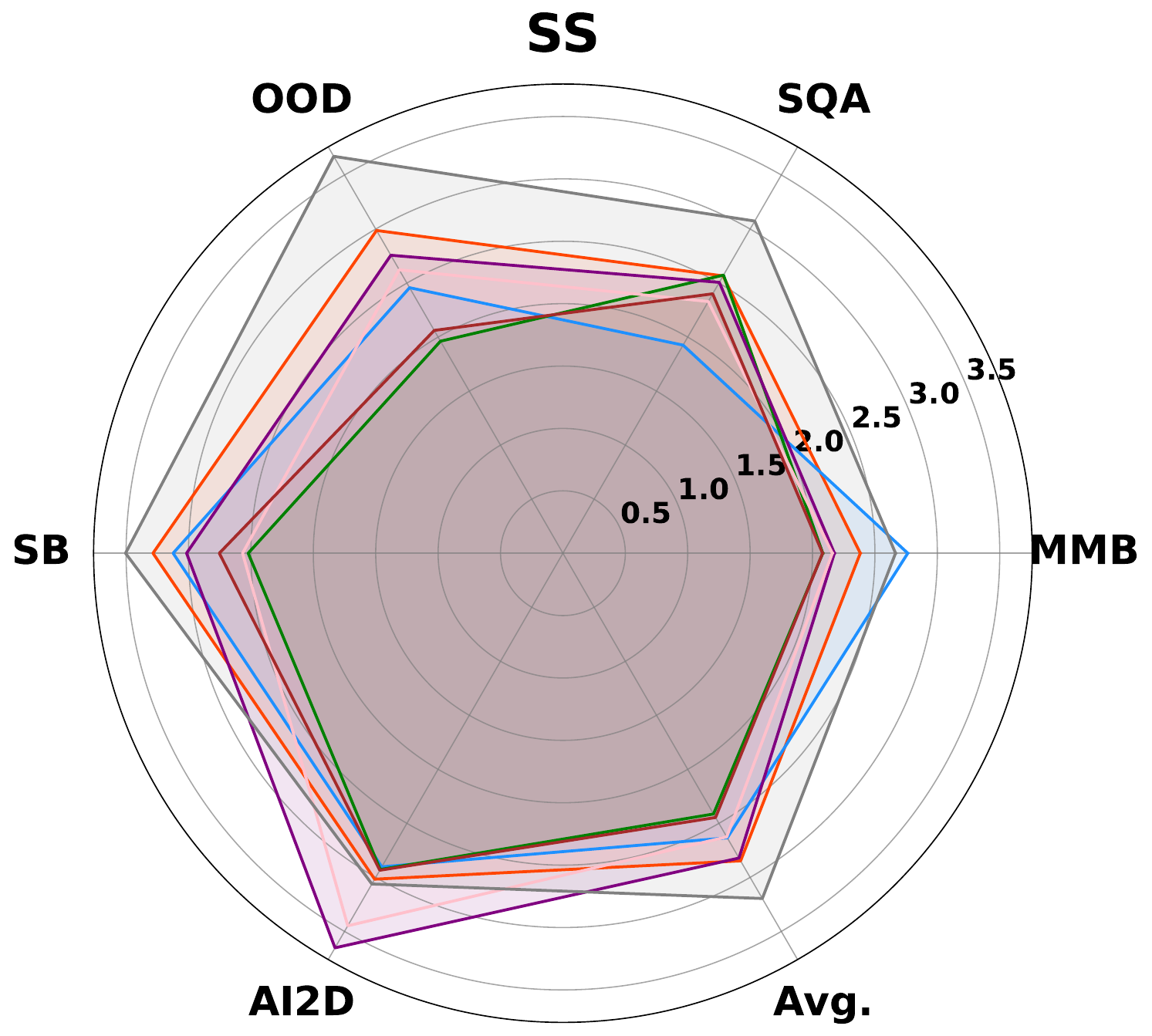} &
            \includegraphics[width=0.24\textwidth]{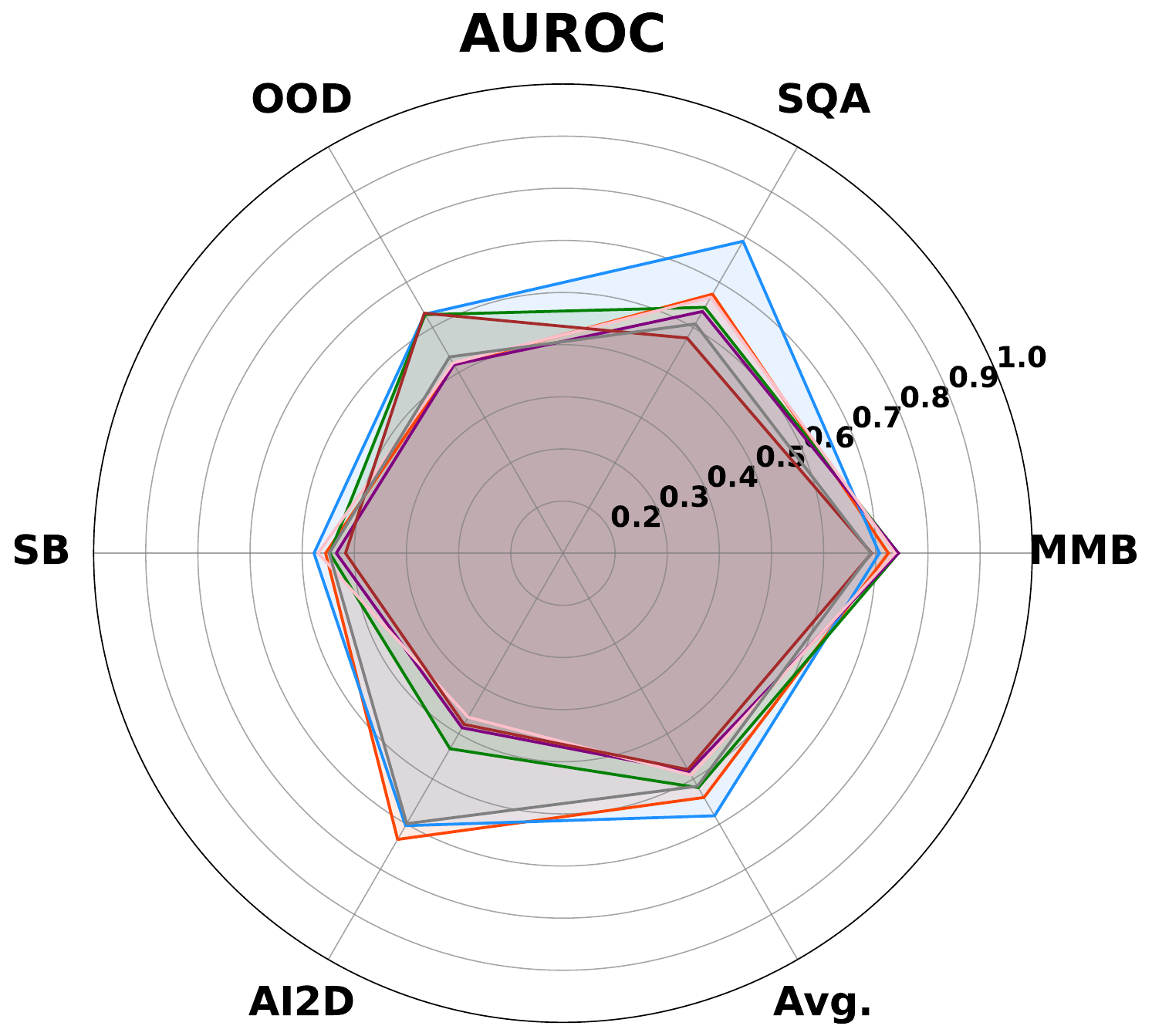} &
            \includegraphics[width=0.24\textwidth]{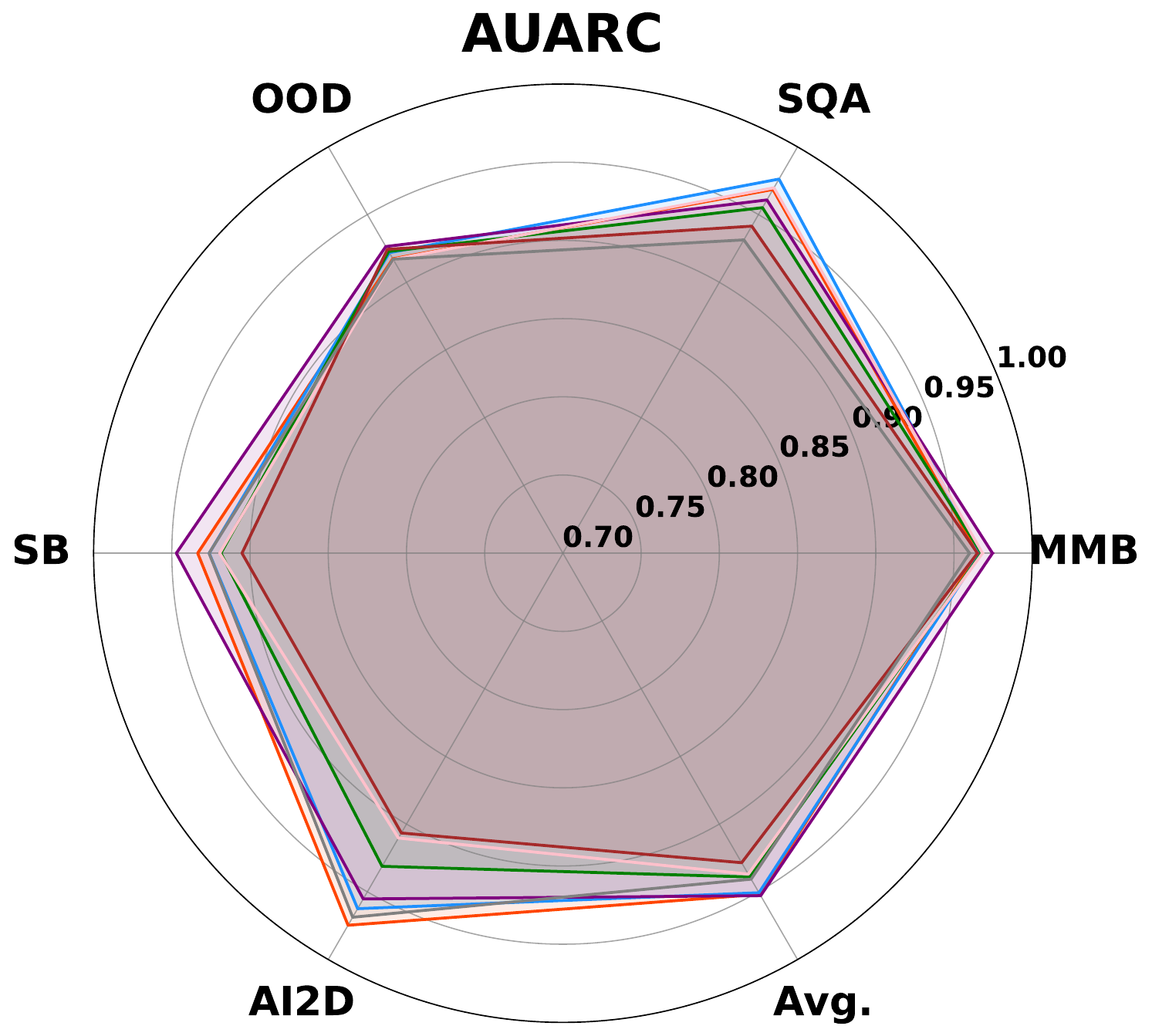} \\
        \end{tabular}
    }
    \caption{Performance comparison of additional VLMs with different model sizes (6B to 7B) across various metrics. Figures from left to right represents the performance of four models on one of the four metrics i) accuracy, ii) set size, iii) AUROC, and iv) AUARC respectively. In each figure, we have drawn the performance of models across five datasets in VLM benchmark. Each figure represents the effect of model scale (number of parameters) in its performance across different uncertainty metrics.}
    \label{fig:spider_ap_vlm}
\end{figure*}

\begin{figure*}[h!]
    \centering
    \makebox[\textwidth]{\includegraphics[width=0.4\textwidth]{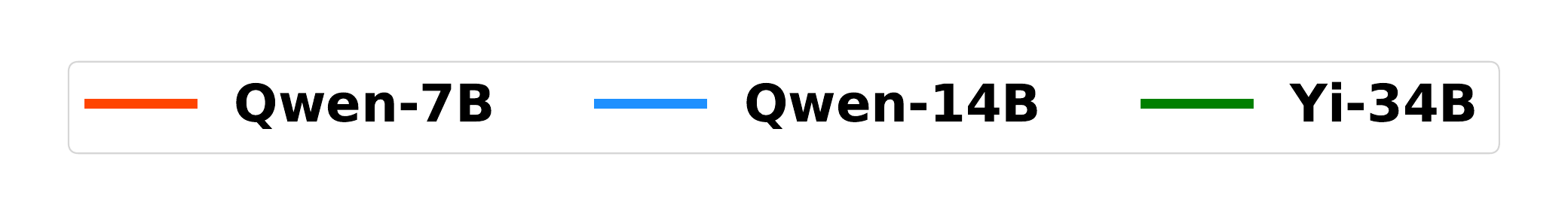}}
    \makebox[\textwidth]{
        \begin{tabular}{cccc}
            \includegraphics[width=0.24\textwidth]{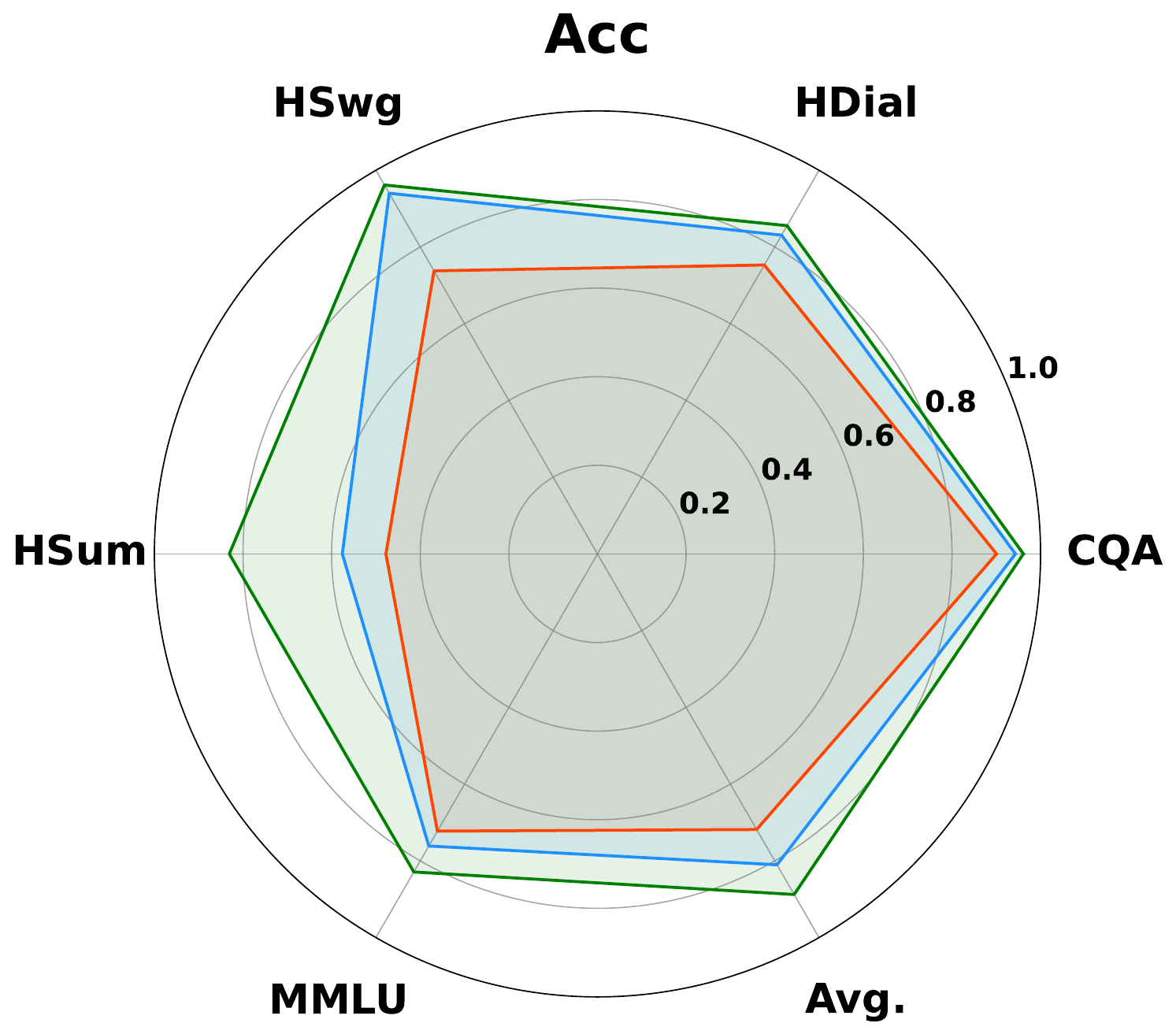} &
            \includegraphics[width=0.24\textwidth]{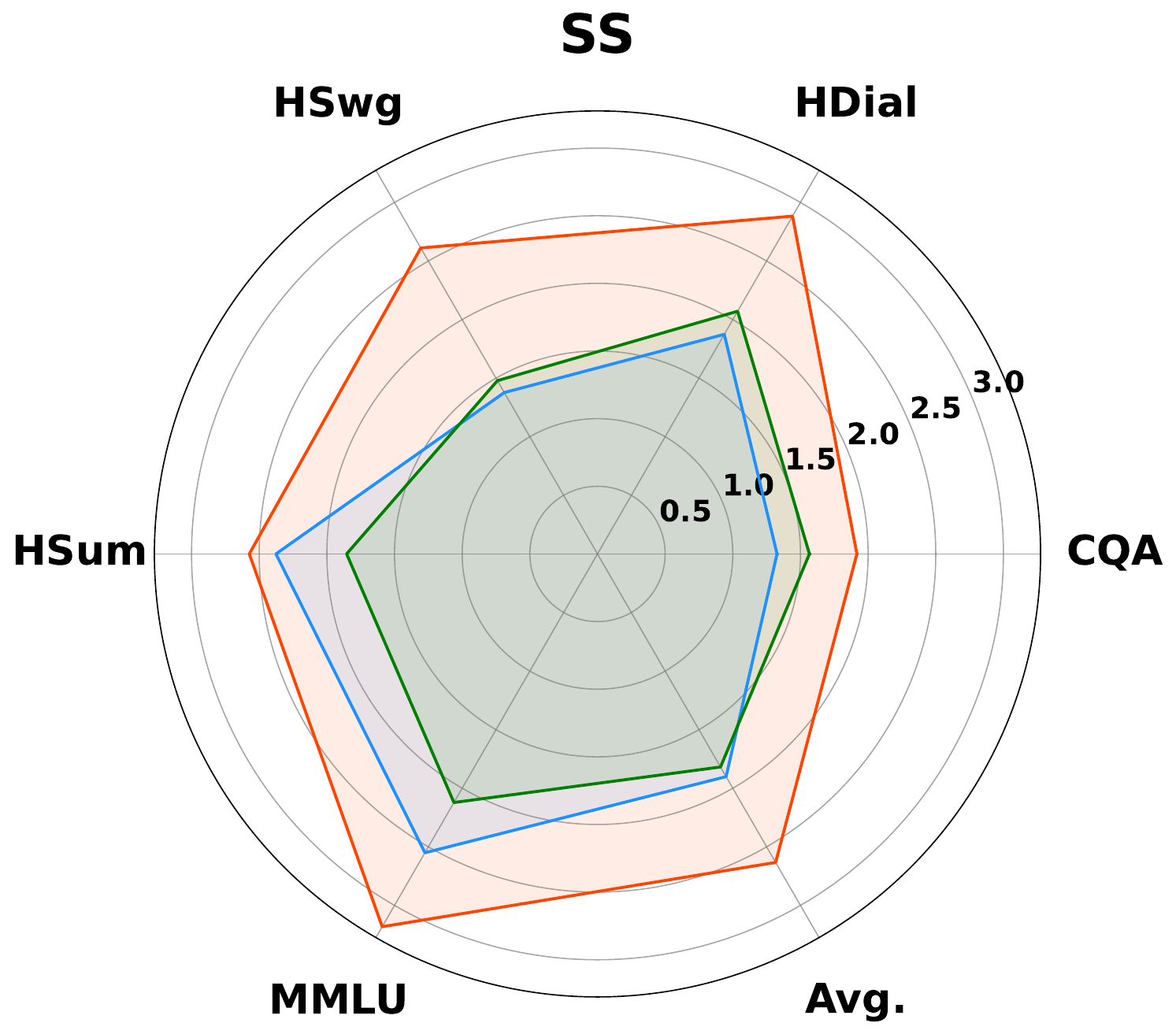} &
            \includegraphics[width=0.24\textwidth]{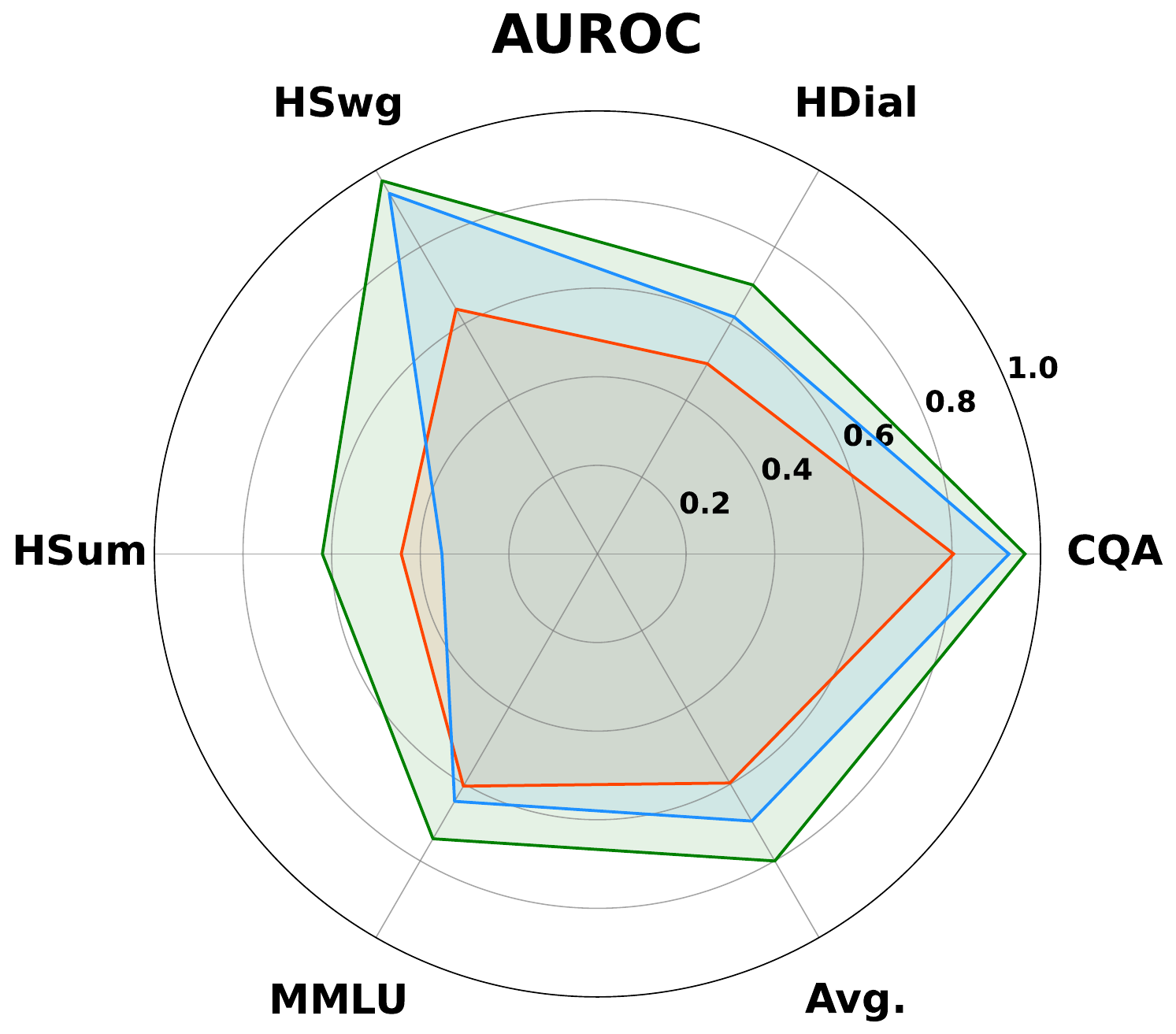} &
            \includegraphics[width=0.24\textwidth]{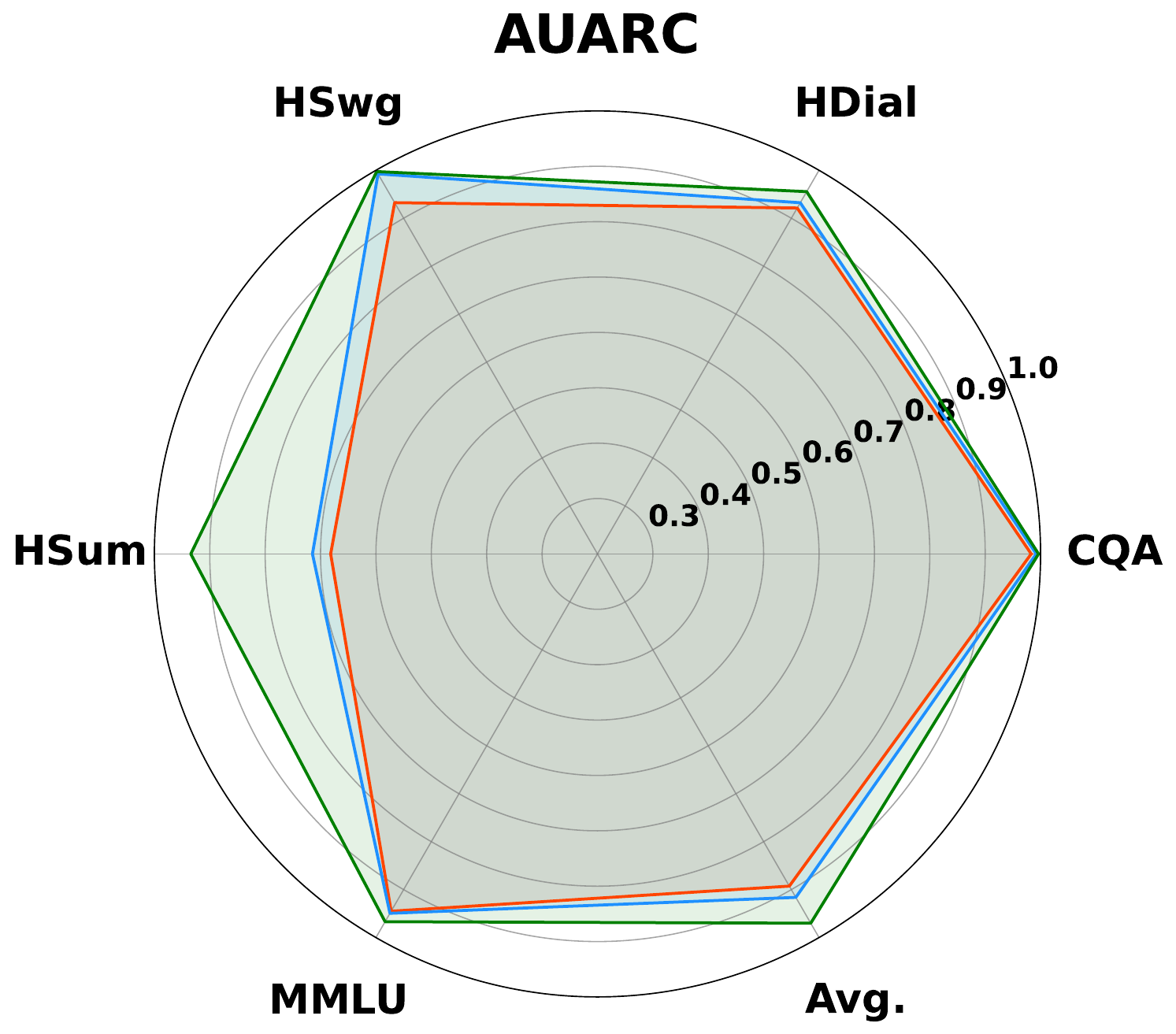} \\
        \end{tabular}
    }
    \caption{Performance comparison of LLMs with different model sizes (7B to 34B) across various metrics. Figures from left to right represents the performance of four models on one of the four metrics i) accuracy, ii) set size, iii) AUROC, and iv) AUARC respectively. In each figure, we have drawn the performance of models across five datasets in LLM benchmark. Each figure represents the effect of model scale (number of parameters) in its performance across different uncertainty metrics.}
    \label{fig:spider_llm}
\end{figure*}

\begin{figure*}[h!]
    \centering
    \makebox[\textwidth]{\includegraphics[width=0.4\textwidth]{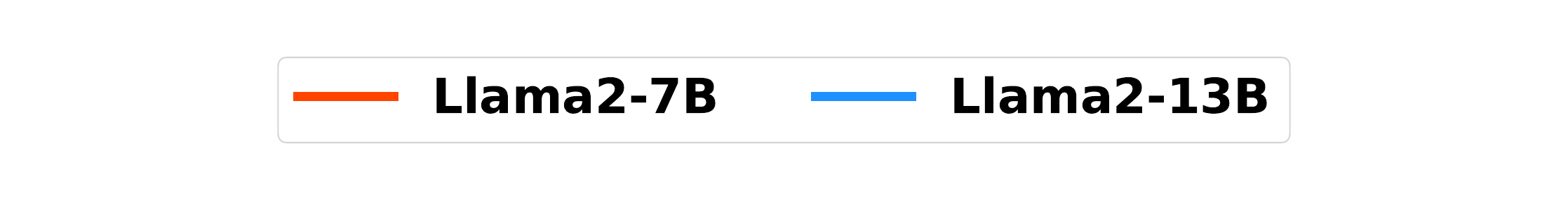}}
    \makebox[\textwidth]{
        \begin{tabular}{cccc}
            \includegraphics[width=0.24\textwidth]{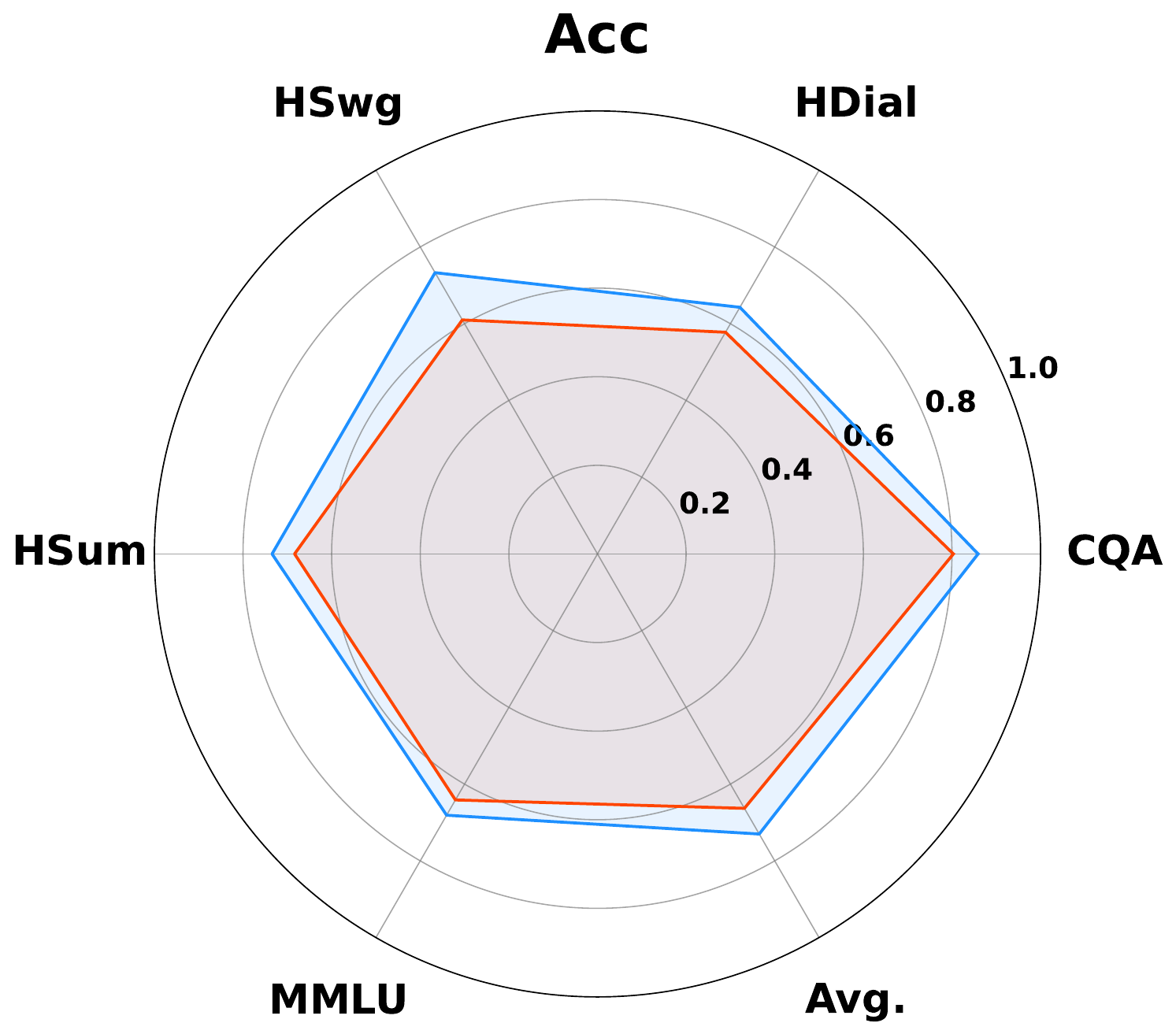} &
            \includegraphics[width=0.24\textwidth]{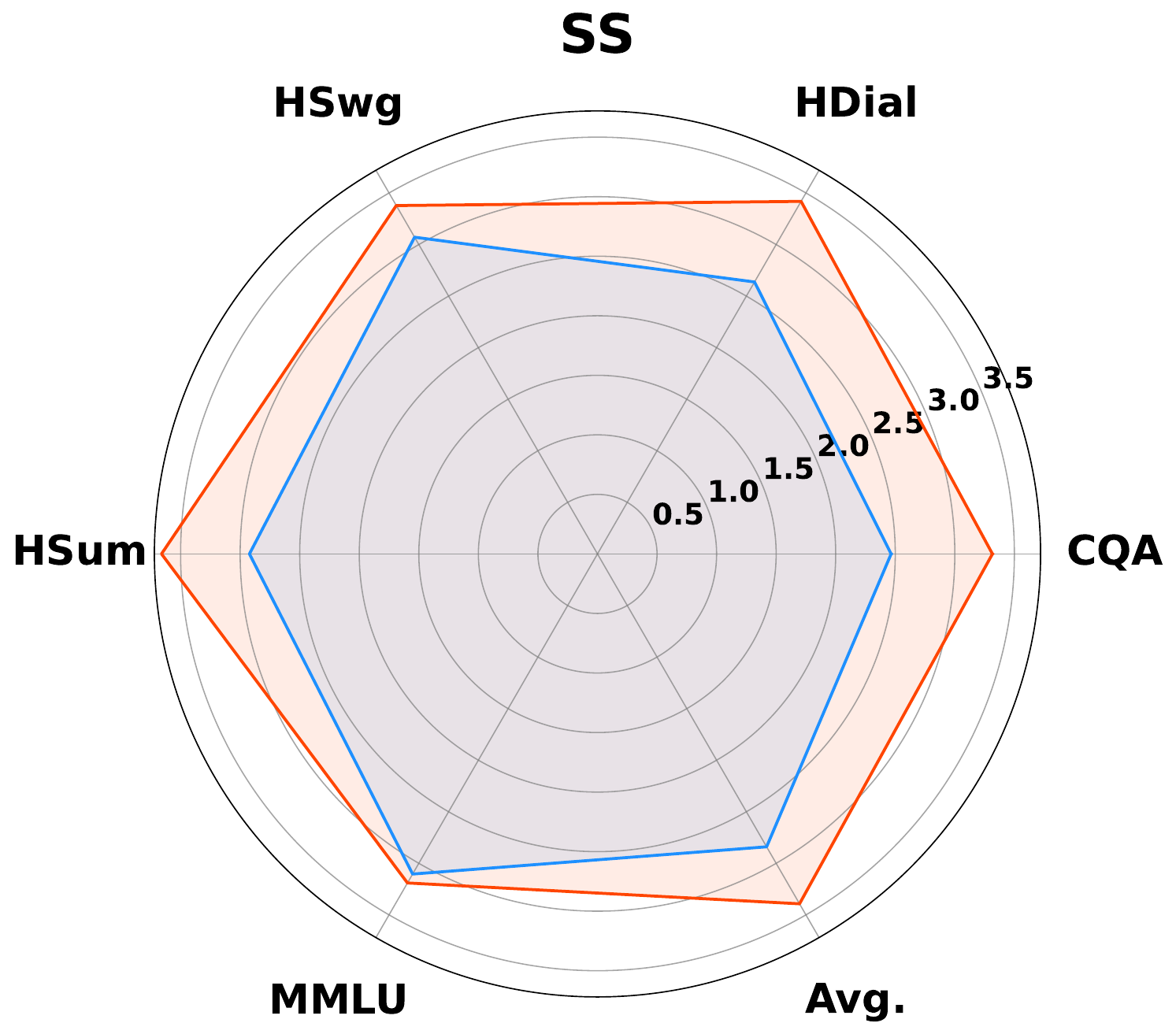} &
            \includegraphics[width=0.24\textwidth]{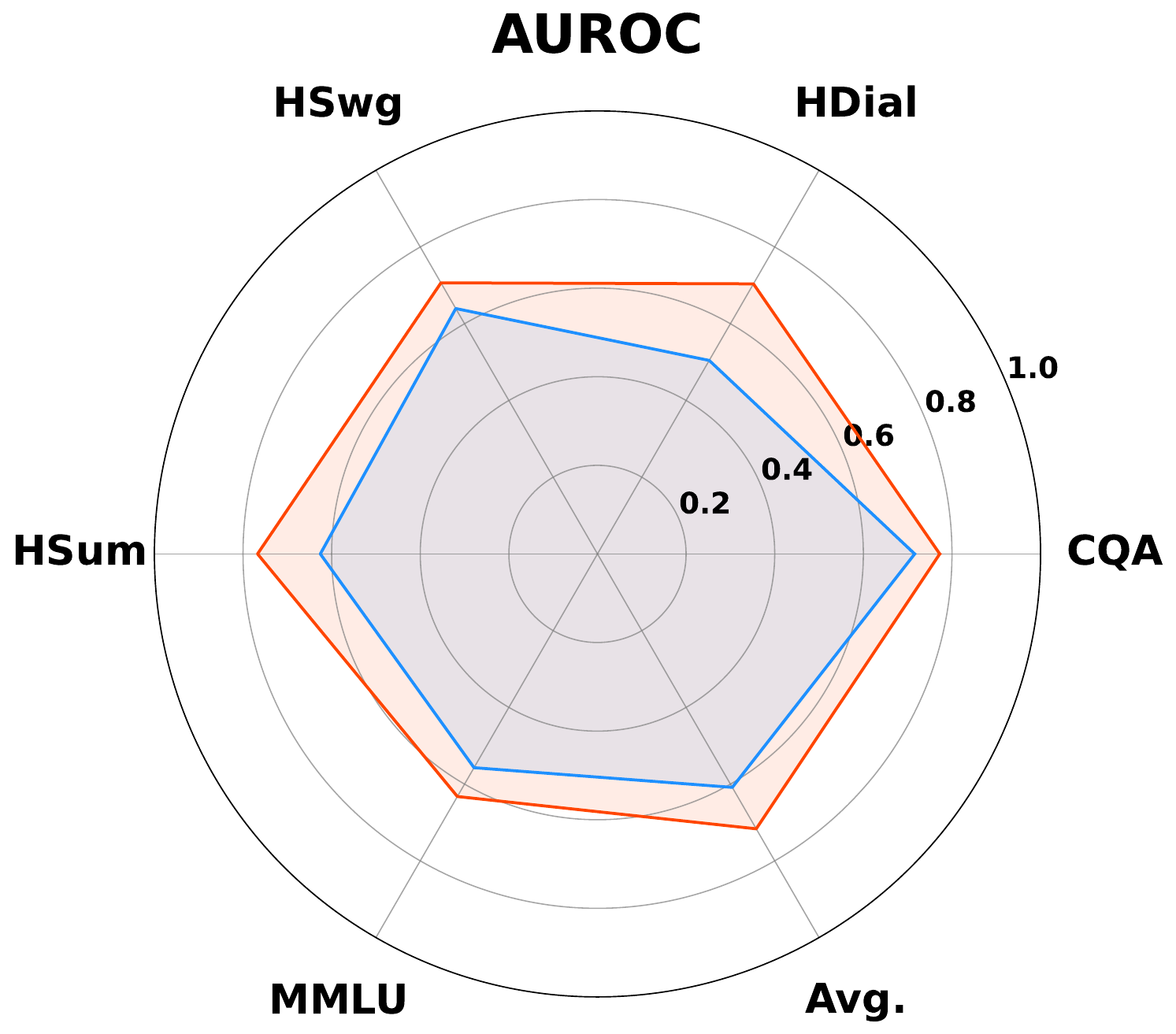} &
            \includegraphics[width=0.24\textwidth]{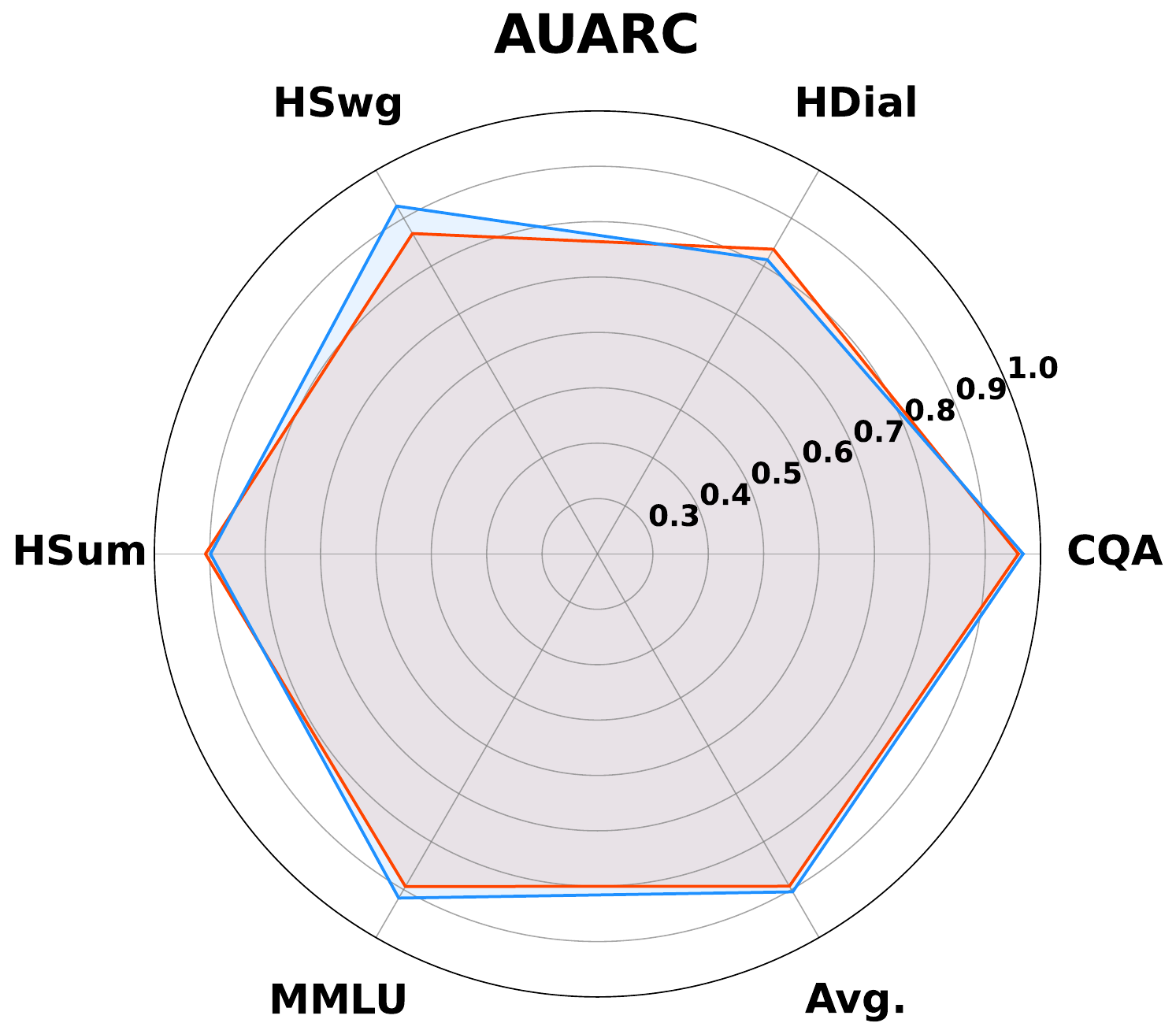} \\
        \end{tabular}
    }
    \caption{Performance comparison of Llama-2 series LLMs with different model sizes (7B and 13B) across various metrics. Figures from left to right represents the performance of two models on four metrics i) accuracy, ii) set size, iii) AUROC, and iv) AUARC respectively. In each figure, we show the performance of models across five datasets in LLM benchmark. Each figure represents the effect of model scale (number of parameters) in its performance across different uncertainty metrics.}
    \label{fig:spider_ap_llm}
\end{figure*}

\begin{figure*}[h!]
    \centering
    \makebox[\textwidth]{\includegraphics[width=0.4\textwidth]{figures_appendix_llm/label.pdf}}
    \makebox[\textwidth]{
        \begin{tabular}{cccc}
            \includegraphics[width=0.24\textwidth]{figures_appendix_llm/accuracy_plot.pdf} &
            \includegraphics[width=0.24\textwidth]{figures_appendix_llm/auarc_plot.pdf} \\
        \end{tabular}
    }
    \caption{Performance comparison of Llama-2 series LLMs with different model sizes (7B and 13B) across various metrics. Figures from left to right represents the performance of two models on four metrics i) accuracy, ii) set size, iii) AUROC, and iv) AUARC respectively. In each figure, we show the performance of models across five datasets in LLM benchmark. Each figure represents the effect of model scale (number of parameters) in its performance across different uncertainty metrics.}
    \label{fig:spider_ap_llm}
\end{figure*}

\begin{figure*}[h!]
    \centering
    \makebox[\textwidth]{
        \begin{tabular}{cccc}
            \includegraphics[width=0.40\textwidth]{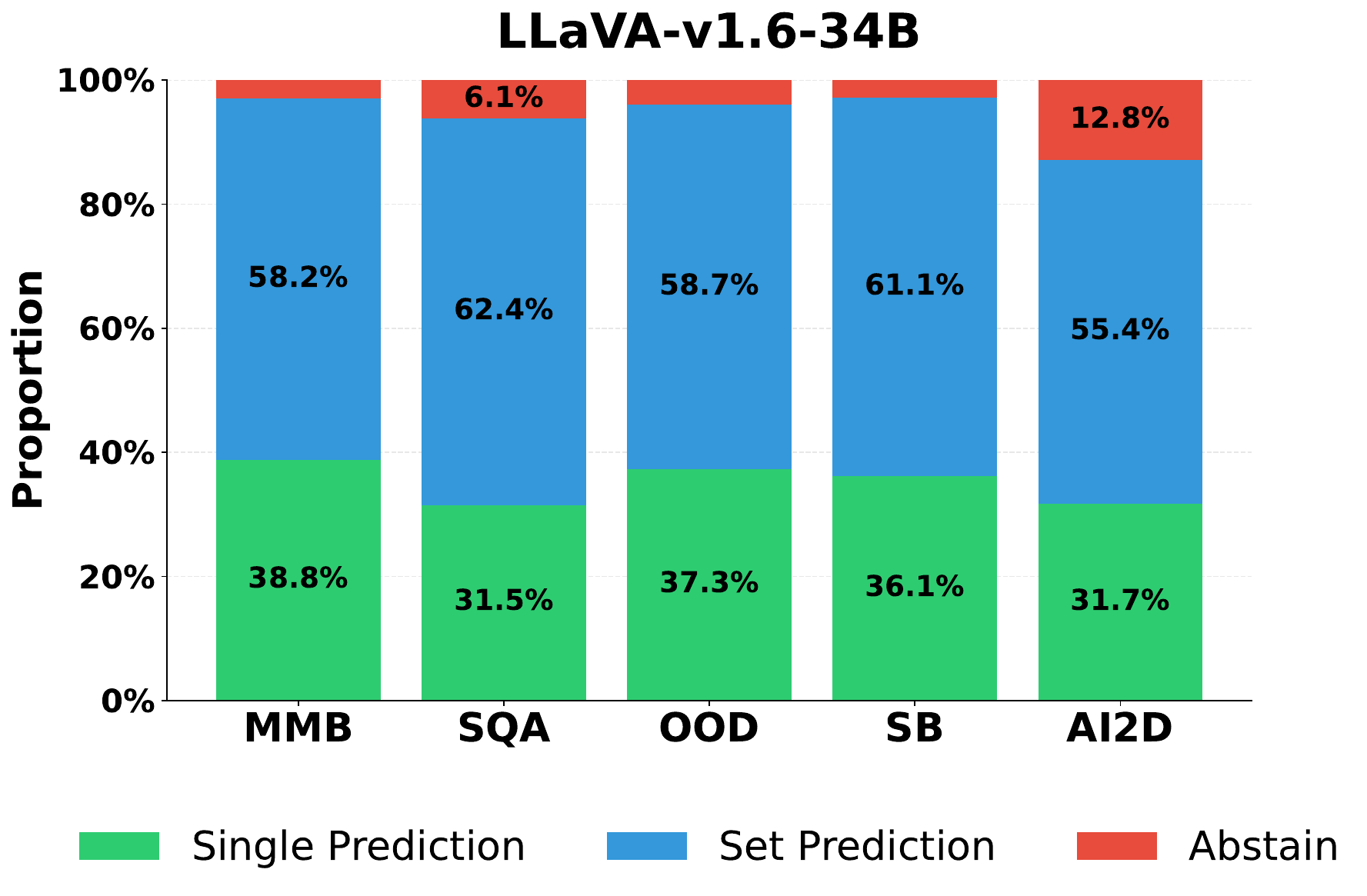} &
            \includegraphics[width=0.40\textwidth]{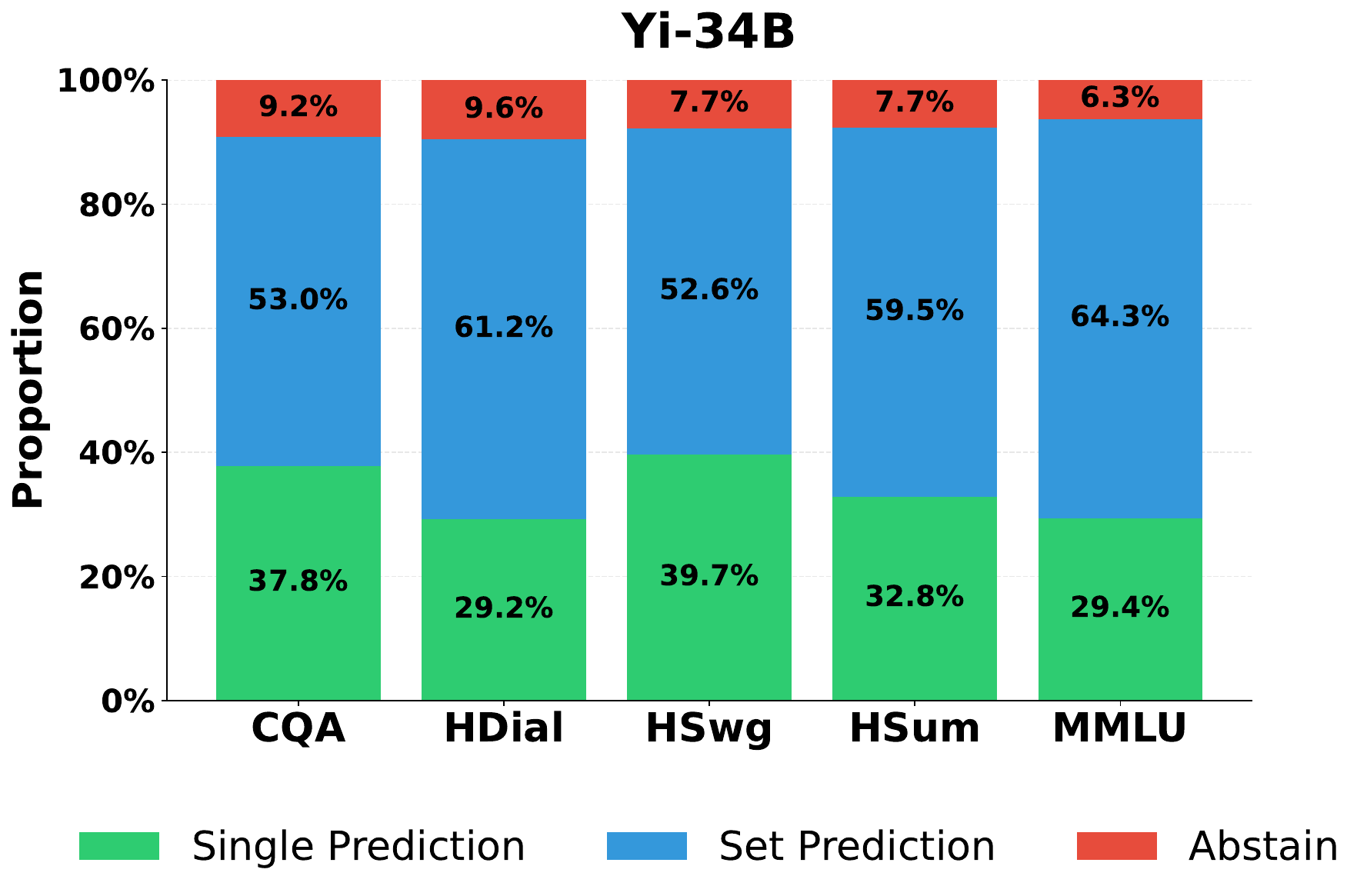} \\
        \end{tabular}
    }
    \caption{Distribution of CAP's prediction types for LLaVA-1.6-34B (VLM) and Yi-34B (LLM). The model's responses are categorized into single predictions (confident single answers), set predictions (multiple possible answers), and abstentions (declining to answer).}
    \label{fig:distribution}
\end{figure*}

\subsection{Size Distribution of Predicted Set}

The distribution of prediction types provides insights into our model's decision-making behavior across different vision-language tasks. As shown in \autoref{fig:distribution}, LLaVA-1.6-34B demonstrates a preference for set predictions across all benchmarks, with set prediction rates ranging from 55.4\% (AI2D) to 62.4\% (ScienceQA). This suggests that the model frequently identifies multiple plausible answers rather than committing to a single prediction due to underlying uncertainty in VLM response. Single predictions constitute a substantial portion of responses, varying between 31.5\% to 38.8\%, indicating scenarios where the model exhibits high confidence in a unique answer. The abstention rates show notable variation across datasets, from 6.1\% on ScienceQA to 12.8\% on AI2D, reflecting the model's ability to recognize and acknowledge uncertainty in different visual reasoning contexts. This trend repeats for Yi-34B LLM across five different tasks as well. This distribution pattern demonstrates that our selective prediction approach effectively captures different levels of model uncertainty, allowing for more nuanced and reliable responses across diverse vision-language tasks.

\end{document}